\documentclass{article}

 \usepackage[preprint]{neurips_2026}

\usepackage[utf8]{inputenc} 
\usepackage[T1]{fontenc}    
\usepackage{hyperref}       
\usepackage{url}            
\usepackage{booktabs}       
\usepackage{amsfonts}       
\usepackage{nicefrac}       
\usepackage{microtype}      
\usepackage{xcolor}         

\usepackage{amsmath}
\usepackage{graphicx}
\usepackage{subcaption}
\usepackage{url}
\usepackage{amsthm}
\theoremstyle{definition}

\usepackage{enumitem}
\usepackage{algorithm,algpseudocode}
\theoremstyle{plain}

\newtheorem{theorem}{Theorem}
\newtheorem{proposition}{Proposition}
\newtheorem{corollary}{Corollary}
\usepackage{booktabs} 
\usepackage{float}
\theoremstyle{remark}

\usepackage{amssymb}
\usepackage{tikz}
\usepackage{wrapfig}
\usepackage{multirow}
\usepackage{xspace}
\usepackage{titletoc}
\title{Kurtosis-Guided Denoising Score Matching for Tabular Anomaly Detection}

\newcommand{\method}{K-DSM\xspace}
\author{%
  Victor Livernoche \quad Jie Zan \quad Reihaneh Rabbany \\
  McGill University, Mila \\
  Montréal, Canada
}

\begin{document}

\maketitle

\vspace{-1em}
\begin{abstract}
Denoising score matching (DSM) provides a way to learn data distributions by training a neural network to recover the score function, defined as the gradient of the log density, from noise-corrupted samples. Once trained, the score magnitude at a test point reflects how consistent that point is with the learned distribution, making it a natural anomaly signal. The key practical challenge is selecting the perturbation scale: too little noise yields unstable score estimates in sparse regions, while too much erases local structure and weakens anomaly sensitivity. This is compounded by the difficulty of hyperparameter tuning when anomalies are unknown and no validation set is available. We introduce kurtosis-based noise scaling (K-DSM)\footnote{We release the codebase here \space \url{https://github.com/vicliv/K-DSM}}, a per-feature scheme that sets noise levels from the shape of each marginal distribution, improving coverage of low-density regions and precision in high-density regions without extra model complexity. Contrary to prior claims that multi-scale or noise-conditioned training is necessary, we find that a carefully trained single-scale model is already a strong anomaly detector. On standard tabular anomaly detection benchmarks, K-DSM achieves state-of-the-art performance in the semi-supervised setting. When combined with a lightweight EMA-teacher filtering rule that removes low-density training points before each gradient step, it also achieves strong performance in the fully unsupervised (contaminated) setting, suggesting that simple, data-adaptive noise scaling enables robust anomaly detection while reducing reliance on hyperparameter tuning.
\end{abstract}

\section{Introduction}

Anomaly detection involves identifying patterns or observations that deviate from the expected or typical data distribution \citep{AD_def}. Effective detection of anomalies is critical in many fields, including fraud detection \citep{financial_example}, disease identification in healthcare \citep{medical_example}, and fault detection in manufacturing processes \citep{manufacture}. Traditional machine learning algorithms, such as statistical methods \citep{stats_method}, distance-based approaches \citep{kNN}, density estimation \citep{density_based}, and tree-based isolation methods \citep{isolation_forest}, have been applied to this problem. However, these methods exhibit significant limitations when applied to high-dimensional and structurally complex data distributions. 
These difficulties have prompted a shift toward more scalable and expressive deep learning approaches.

Among deep learning approaches, Denoising Score Matching (DSM) has emerged as a particularly promising technique due to its ability to model high-dimensional or otherwise intractable data distributions. Building on the theoretical connection between score matching and denoising autoencoders \citep{DSM}, DSM learns the gradient of the log-likelihood of the noise-perturbed distribution ($ \nabla_{\tilde{x}} \log q_\sigma(\tilde{x} \mid x)$) by perturbing data points with a predefined noise distribution. In the context of anomaly detection, the trained DSM model can be used to evaluate the score magnitude of input samples. A large score norm suggests that the sample resides in a region of low data density, thereby indicating a high likelihood of being anomalous. Conversely, a small score norm suggests the point is near the manifold and likely normal.

Despite these advantages, a key limitation of standard DSM lies in its reliance on a single, fixed noise scale $\sigma$. This parameter controls both the locality of the learned estimates and the effective coverage of the data space. If $\sigma$ is set too large, the resulting score norms become indistinguishable across samples, while an overly small $\sigma$ yields excessively local estimates that fail to capture global structure. 
To overcome this challenge, score-based anomaly detection methods have adopted multi-scale strategies. Noise Conditional Score Networks (NCSNs) \citep{song2020generativemodelingestimatinggradients}, originally proposed for generative modeling, condition the score network on multiple noise scales, thereby removing the need to commit to a single choice. Building on this idea, Multi-Scale Score Matching (MSM) \citep{mahmood2021multiscale} applies NCSN to anomaly detection, computing score norms across different perturbation strengths and subsequently separating anomalous samples using an unsupervised classifier.

Although multi-scale approaches alleviate the limitations of single-noise DSM, they introduce practical drawbacks: conditioning across multiple noise levels substantially increases computational and inference cost, and performance remains sensitive to the choice of the inference-time noise scale. These challenges highlight the need for a noise-scale selection strategy that is both label-independent and avoids reliance on multi-scale conditioning. In this paper, we propose \method, a kurtosis-guided denoising score matching method that addresses this challenge. Our approach adaptively assigns feature-specific noise levels based on the statistical shape of each feature distribution. We compute kurtosis, the fourth standardized moment, to quantify tailedness: light-tailed or flat distributions indicate broad coverage and thus require smaller perturbations, whereas heavy-tailed or sharply concentrated distributions require larger perturbations to ensure adequate coverage. This kurtosis-guided scheme offers a principled, data-driven alternative to heuristic tuning or exhaustive grid search, aligning perturbation scales with the intrinsic geometry of the feature space and enabling more effective and efficient training. Performance-wise, our method achieves state-of-the-art performance on ADBench \citep{han2022adbench} while maintaining significantly faster speed during inference time.

Our primary focus is the \emph{semi-supervised} setting, where training data consist exclusively of normal samples. DSM's assumption of clean training data is naturally violated in the fully unsupervised setting (contaminated training), but we show how a simple training mechanism recovers most of the lost performance in that regime (Section~\ref{sec:unsup_main}). In summary, our contributions are as follows:

\begin{itemize}[leftmargin=1.05em, topsep=0pt, itemsep=0pt]
\item \textbf{Single-scale DSM equipped with the right architecture outperform multi-scale conditioning.} With a modern MLP, the same backbone used by \method, single-scale DSM alone reaches \textbf{0.584} mean AUC-PR on ADBench, outperforming MSM (0.550), DTE (0.526), and matching DDAE and KNN, which challenges the claim that multi-scale conditioning is necessary.
\item \textbf{\method: Kurtosis-guided per-feature noise scaling with theoretical guarantees.}  We propose a diagonal perturbation rule that maps marginal kurtosis to $\sigma_j$ via the Cornish--Fisher expansion, preceded by a robust histogram-rearrangement step that isolates tail weight from shape artifacts (Section~\ref{sec:kurtosis}).
We prove the affine rule is the unique first-order optimal noise scale for tail coverage.
\item \textbf{Empirical state-of-the-art on ADBench (57 datasets)}. \method achieves \textbf{0.627 mean AUC-PR} in the semi-supervised setting, outperforming the runner-up baseline (\textbf{0.594}) and single-scale DSM (\textbf{0.584}). Per-feature kurtosis scaling also improves robustness to noise scaling hyperparameters. 
\item \textbf{EMA-teacher extension for unsupervised anomaly detection.} We repurpose the EMA-teacher construction as a per-sample density proxy, a lightweight filtering rule that drops low-density batch points, identified by the score norm of an exponential moving average of the online model. Applied on top of K-DSM, it lifts mean AUC-PR from 0.188 to \textbf{0.343}, the best across 14 methods.
\end{itemize}


\section{Preliminaries: Score Matching and Denoising Score Matching}
\label{sec:preliminaries}




\paragraph{The score function.}
For a continuous data distribution with density $p_{\text{data}}(x)$, the \emph{score function} is defined as the gradient of the log-density,
\begin{equation}
    s(x) \;=\; \nabla_x \log p_{\text{data}}(x).
    \label{eq:score_def}
\end{equation}
The score captures the local geometry of the distribution: it points toward regions of increasing density and its magnitude reflects the steepness of that increase. Crucially, $s(x)$ depends only on $\nabla_x p_{\text{data}}(x) / p_{\text{data}}(x)$ and thus does not require knowledge of the normalizing constant of $p_{\text{data}}$, making it attractive for modeling complex, high-dimensional distributions where the partition function is intractable.

\paragraph{Score matching.}
\citet{hyvarinen2005score} showed that a parametric model $s_\theta(x)$ can be fit to $s(x)$ by minimizing the expected squared error $\frac{1}{2}\mathbb{E}_{p_{\text{data}}}\!\bigl[\|s_\theta(x) - \nabla_x \log p_{\text{data}}(x)\|_2^2\bigr]$. Through integration by parts, this objective can be rewritten in a form that depends only on $s_\theta$ and $p_{\text{data}}$, eliminating the need to evaluate $\nabla_x \log p_{\text{data}}$ directly. However, the resulting \emph{implicit score matching} (ISM) objective involves the trace of the Jacobian $\mathrm{tr}(\nabla_x s_\theta(x))$, whose computation scales as $\mathcal{O}(d)$ forward passes for $d$-dimensional data, making it prohibitively expensive in high dimensions.

\paragraph{Denoising score matching.}
\citet{DSM} introduced denoising score matching (DSM) as a computationally efficient alternative. The key insight is that instead of matching the score of the unknown data distribution, one can match the score of a \emph{known} noise-perturbed distribution. Given clean data $x \sim p_{\text{data}}(x)$ and a corruption kernel $q_\sigma(\tilde{x} \mid x)$, the DSM objective trains $s_\theta$ to approximate the score of the conditional:
\begin{equation}
\mathcal{L}_{\text{DSM}}(\theta) \;:=\;
\frac{1}{2} \, \mathbb{E}_{x \sim p_{\text{data}}} \, \mathbb{E}_{\tilde{x} \sim q_\sigma(\tilde{x} \mid x)}
\left[ \left\| s_\theta(\tilde{x}) - \nabla_{\tilde{x}} \log q_\sigma(\tilde{x} \mid x) \right\|_2^2 \right].
\label{eq:dsm}
\end{equation}
\citet{DSM} proved that the minimizer of Eq.~\eqref{eq:dsm} also minimizes the ISM objective with respect to the \emph{perturbed} density $q_\sigma(\tilde{x}) = \int q_\sigma(\tilde{x} \mid x)\, p_{\text{data}}(x)\, dx$, establishing an equivalence between denoising and score estimation. Because $\nabla_{\tilde{x}} \log q_\sigma(\tilde{x} \mid x)$ is available in closed form for standard noise families, DSM requires no second-order derivatives and reduces training to a simple regression problem. For Gaussian corruption $q_\sigma(\tilde{x} \mid x) = \mathcal{N}(\tilde{x};\, x,\, \sigma^2 I)$, the target simplifies to $-(\tilde{x} - x)/\sigma^2$, and the model learns to predict the direction and magnitude of the displacement from the corrupted point back to the clean data.

\section{Kurtosis-Guided Denoising Score Matching}
\label{sec:method}

\subsection{The Noise Scale Problem}
\label{sec:noise_problem}

\begin{figure}[t]
    \centering
    \includegraphics[width=0.99\linewidth]{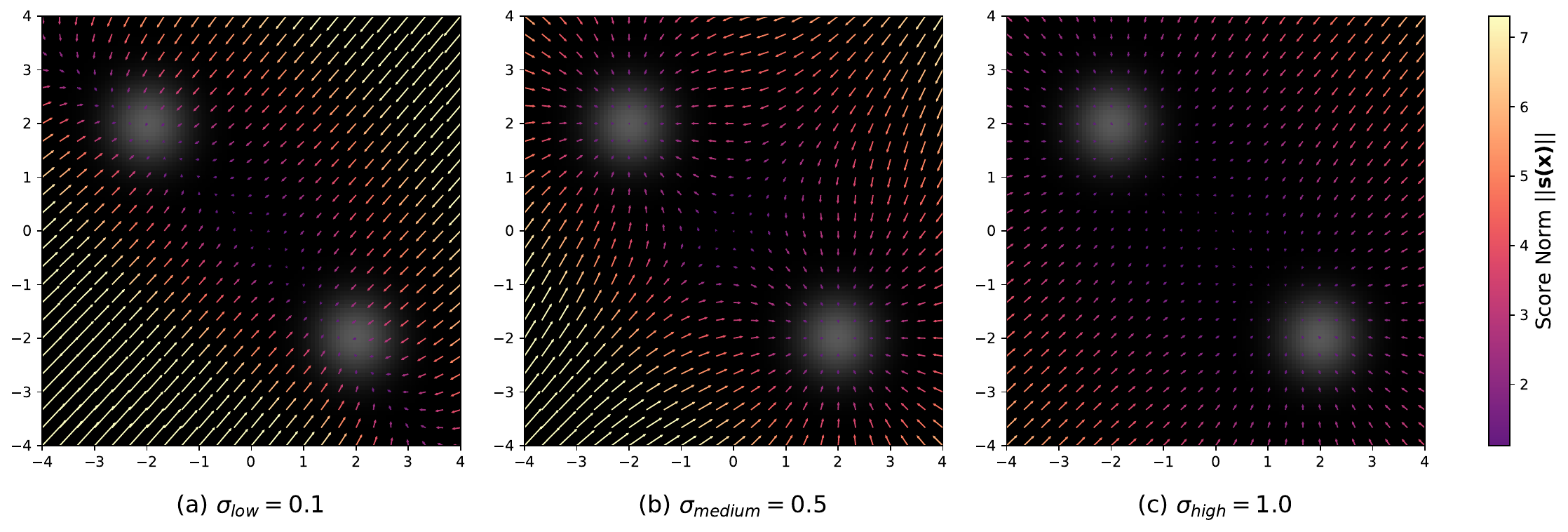}
    \caption{Score fields learned by DSM models with different Gaussian noise levels. Each panel shows estimated score vectors over a 2D grid, color-coded by norm $\|\mathbf{s}(\mathbf{x})\|$, for samples from a two-Gaussian mixture $\mathcal{N}((-2, 2), 0.3 \mathbf{I})$ and $\mathcal{N}((2, -2), 0.3 \mathbf{I})$. (a) fails to capture global structure, (b) yields a smooth field with clear norm contrast aligned to the data density, (c) preserves structure but with weakened norm contrast.}
    \label{fig:score field}
\end{figure}

The perturbation scale $\sigma$ in DSM controls a trade-off between fidelity to the data distribution and coverage of the surrounding space. Figure~\ref{fig:score field} illustrates this on a two-Gaussian mixture. At low noise, the perturbed distribution stays close to $p_{\text{data}}$ and the learned score field is informative only in high-density regions, offering no discriminative signal where anomalies actually fall. At high noise, the field retains directional structure but the perturbation smooths over the fine structure of $p_{\text{data}}$, collapsing score norm differences between normal and anomalous inputs. An intermediate scale achieves the best compromise: $\sigma$ is large enough to push probability mass into low-density regions during training, yet small enough to preserve the local structure that separates them from the normal support. Prior work addressed this sensitivity through multi-scale conditioning \citep{mahmood2021multiscale,song2020generativemodelingestimatinggradients}, training a single network across many noise levels simultaneously. However, multi-scale methods introduce substantial computational and memory overhead and require selecting a noise level at inference time, which itself becomes a tuning problem. We instead pursue a single-scale strategy with a principled, data-driven rule for choosing $\sigma$.

\subsection{Per-Feature Perturbation}
\label{sec:per_feature}
The standard DSM objective applies a single global noise scale to all
features. We argue that this is suboptimal for tabular data, where
features often have fundamentally different distributional
characteristics. We therefore propose a per-feature DSM objective
with a diagonal noise covariance:
\begin{equation}
\mathcal{L}_{\text{DSM}}^{\text{per-feature}}(\theta) 
= \frac{1}{2} \; 
\mathbb{E}_{x \sim p_{\text{data}}} \;
\mathbb{E}_{\tilde{x} \sim \mathcal{N} \left(x, \Sigma\right)} 
\left[ 
\left\| s_\theta(\tilde{x}) + \Sigma^{-1/2} (\tilde{x} - x) \right\|_2^2
\right]
\label{eq:perfeature_dsm}
\end{equation}
where $\Sigma = \mathrm{diag}(\sigma_1^2, \dots, \sigma_d^2)$. We use the residual $\Sigma^{-1/2}(\tilde{x} - x)$, which equals the unit-variance noise $\varepsilon \sim \mathcal{N}(0, I_d)$, making this the per-feature analogue of the $\varepsilon$-prediction objective used in DDPMs \citet{ho2020denoisingdiffusionprobabilisticmodels}. Without this rescaling, features with small $\sigma_j$ would produce high-magnitude score signals that dominate the loss. Predicting $\varepsilon$ decouples the gradient contribution from the noise scale, ensuring each feature has an equal impact on training regardless of its specific $\sigma_j$.

The use of a diagonal covariance is a choice regarding the perturbation kernel and \emph{this does not assume feature independence in the data}. This kernel determines the volume of the input space where the network receives a gradient signal. While each $\sigma_j$ independently sets the perturbation range for its respective coordinate, the network still observes joint samples and fits the full joint score $\nabla \log q_\sigma(\tilde{x})$. As shown in Figure~\ref{fig:perturbation}, this approach allows the model to extend coverage into low-density regions for heavy-tailed features without sacrificing the fidelity of light-tailed ones. This raises the question: which marginal statistic should drive the choice of each $\sigma_j$?

\begin{figure}[t]
  \centering
  \includegraphics[width=0.99\textwidth]{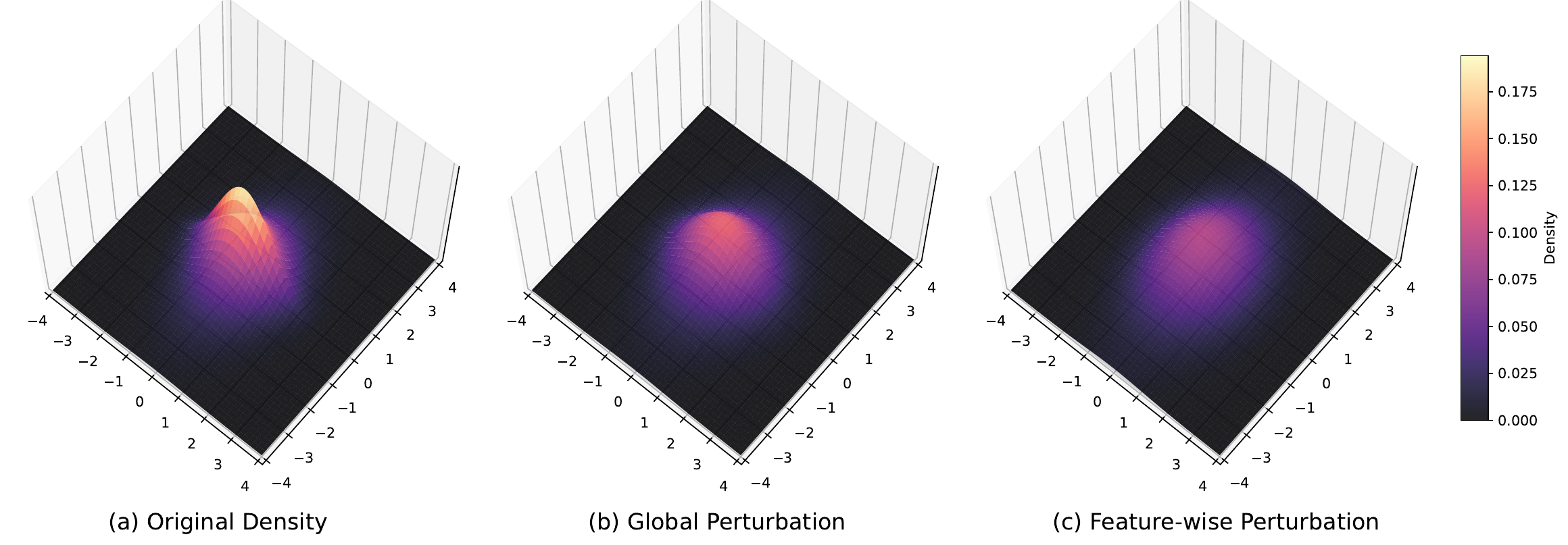}
  \caption{Original density (left) drawn from $x \sim \mathcal{N}(0,
  1)$ and $y \sim \mathrm{Laplace}\!\left(0,
  \tfrac{1}{\sqrt{2}}\right)$. Density under $\sigma_{global} =0.5$
  (center). Density under feature-wise perturbation guided by
  kurtosis (right), showing improved coverage of the surrounding
  space.}
  \label{fig:perturbation}
  \vspace{-1em}
\end{figure}

\subsection{Kurtosis as a Noise-Scale Proxy}
\label{sec:kurtosis}

The appropriate noise scale for a given feature depends on the shape of its marginal density. When the distribution is spread out (e.g., uniform-like or broad-variance Gaussian), the data already covers its effective support and only small perturbations are needed; excessive noise would over-smooth the score field. When the distribution is sharply peaked, the data concentrates in a narrow region, leaving the surrounding space underrepresented; larger perturbations are needed to force the model to learn gradients in these sparse areas.

To quantify this distinction, we use \emph{kurtosis}, the fourth standardised moment:
\begin{equation}
    \kappa = \frac{\mathbb{E}\left[(X - \mu)^4\right]}{s^4},
\end{equation}
where $\mu$ and $s^2 = \mathbb{E}[(X-\mu)^2]$ denote the mean and variance of the feature distribution, respectively (distinct from the noise scale $\sigma$). Despite a common misconception that kurtosis measures ``peakedness'' \citep{kurtosis_peak}, it more accurately reflects tail extremity: distributions with heavier tails produce larger kurtosis values. However, kurtosis is sensitive to distributional shape in ways that can be misleading: bimodal, skewed, or irregular distributions may produce kurtosis values that reflect structural features rather than tail behavior.

\paragraph{Histogram rearrangement.}
\label{sec:histo}

\begin{wrapfigure}{r}{0.5\textwidth}
\vspace{-0.6em}
    \centering
    \includegraphics[width=0.98\linewidth]{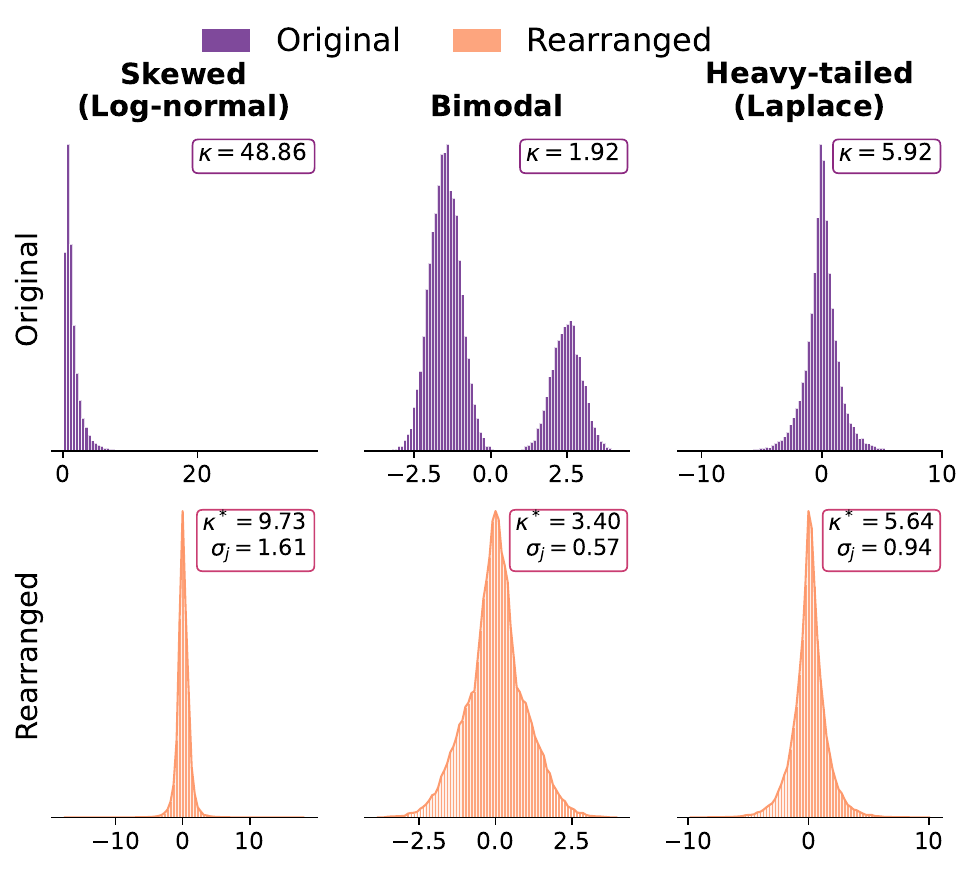}
    \caption{Histogram rearrangement for kurtosis-based noise selection. Top: original marginals. Bottom: symmetric decreasing rearrangements. Rearranged kurtosis $\kappa^*$ better reflects tail extent and determines the per-feature noise scale $\sigma_j$; it is not affected by skewness and is robust to multimodality.}
    \label{fig:rearrangement}
    \vspace{-2em}
\end{wrapfigure}

To ensure that kurtosis captures tail weight rather than shape artifacts, we apply a histogram rearrangement step before computing kurtosis.
The procedure symmetrizes the empirical distribution about its dominant mode by sorting histogram bins in decreasing order of frequency and placing them symmetrically around the origin. This is the discrete analogue of the symmetric decreasing rearrangement from measure theory \citep{LiebLoss2001}: it preserves the distribution of density values (equimeasurability) while producing a symmetric, unimodal profile whose kurtosis reflects only the rate of decay from peak to tails. For a symmetric unimodal input, the rearrangement is nearly a no-op. For skewed distributions, it captures tail extremity rather than asymmetry. For bimodal distributions, it treats the smaller mode as a tail. The full algorithm and convergence properties are given in Appendix~\ref{app:rearrangement}. Importantly, rearrangement is applied \emph{only} to compute $\sigma_j$; training, perturbation, and scoring all operate on the unrearranged features.

\paragraph{Noise-scale formula.}
We assign per-feature noise levels using the following formula:
\begin{equation}
\sigma_j = \sigma_{\mathrm{base}} \cdot \bigl(1 + c \cdot (\kappa_j - 3)\bigr), \qquad \text{clipped to } [\sigma_{\min},\, \sigma_{\max}],
\label{eq:sigma_formula}
\end{equation}
where $\sigma_{\mathrm{base}}$ is a scalar hyperparameter, $c > 0$ controls the sensitivity to kurtosis, and $\kappa_j$ is the kurtosis of the $j$-th rearranged feature marginal. The Gaussian distribution ($\kappa = 3$) serves as the natural reference point: at this value, $\sigma_j = \sigma_{\mathrm{base}}$, and deviations from mesokurtic behavior scale the noise proportionally. The functional form $1 + c(\kappa - 3)$ is directly motivated by the Cornish--Fisher tail-radius expansion (Section~\ref{sec:theory}), which shows that the tail extent of a symmetric unimodal distribution scales linearly in $(\kappa - 3)$ to first order. We set $c = 0.33$, $\sigma_{\mathrm{base}} = 0.5$, and $[\sigma_{\min}, \sigma_{\max}] = [0.1, 2.0]$.

\subsection{Theoretical Justification}
\label{sec:theory}

We first show that kurtosis captures tail extent for symmetric unimodal distributions, then that the affine rule in Eq.~\eqref{eq:sigma_formula} is the first-order optimal noise scale for tail coverage. Because heavier-tailed features spread probability mass further from the mode, the perturbation must be larger to expose the score network to these low-density regions; a statistic that orders tail extent therefore directly prescribes a noise-scale ordering across features. The histogram rearrangement maps every marginal to a symmetric unimodal proxy, so the guarantees below apply to arbitrary feature distributions.

\subsubsection{Kurtosis as a Tail-Extent Index}
\label{sec:kurtosis-tail}

After standardisation ($\mu=0$, $\sigma^2=1$), the kurtosis
$\kappa = \mathbb{E}[X^4]$ of a symmetric unimodal density is
determined entirely by its tail decay profile. Define the
\emph{$\tau$-tail radius} $R_\tau = Q_{1-\tau/2}(X)$, the quantile
above which a fraction $\tau/2$ of probability mass lies. We will use
$R_\tau$ as the geometric target the perturbation must reach in order
to expose the score network to the tail of the data marginal.

\begin{proposition}[Kurtosis orders tail extent in the tail regime]
\label{prop:kurtosis-exact}
Let $X$ be drawn from a unit-variance symmetric unimodal density
parameterised by a shape parameter controlling tailedness.
\begin{enumerate}[label=(\alph*),nosep]
\item \textbf{Generalised Gaussian.} For $p(x) \propto \exp(-|x/a|^\beta)$
with $a(\beta)$ chosen to enforce unit variance, there exists a
threshold $\tau^*_{\mathrm{GGD}} > 0$ such that, for every
$\tau \in (0, \tau^*_{\mathrm{GGD}})$, $R_\tau$ is strictly increasing
in $\kappa$ across the family.

\item \textbf{Student-$t$.} For the Student-$t$ with $\nu > 4$
degrees of freedom and unit-variance standardisation, there exists a
threshold $\tau^*_{t} > 0$ such that, for every
$\tau \in (0, \tau^*_{t})$, $R_\tau$ is strictly increasing in $\kappa$.
\end{enumerate}
Numerical values of $\tau^*$ and full proofs appear in
Appendix~\ref{app:kurtosis-proofs}; both thresholds exceed $0.02$,
well above the regime used in this paper.
\end{proposition}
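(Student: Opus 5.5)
Both families are one-parameter (shape $\beta$ or $\nu$), and in each the kurtosis is a strictly monotone function of that parameter. The plan is therefore to reduce the statement to a monotonicity claim in the shape parameter. First compute $\kappa$ explicitly. For the generalised Gaussian, $\kappa(\beta)=\Gamma(5/\beta)\Gamma(1/\beta)/\Gamma(3/\beta)^2$, which is strictly decreasing in $\beta$ (log-convexity of $\Gamma$). For Student-$t$ with $\nu>4$, $\kappa(\nu)=3+6/(\nu-4)$, which is strictly decreasing in $\nu$. So "$R_\tau$ strictly increasing in $\kappa$" is equivalent to "$R_\tau$ strictly decreasing in the shape parameter". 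We need this for all $\tau$ below a threshold, uniformly over the family.

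Next, write the unit-variance standardised tail radius in closed form. For the GGD with $a(\beta)=\sqrt{\Gamma(1/\beta)/\Gamma(3/\beta)}$, the tail mass is
\[
P(|X|>r)=Q\!\left(\tfrac1\beta,(r/a)^\beta\right),
\]
the regularised upper incomplete gamma function. Hence
\[
R_\tau=a(\beta)\,\bigl[Q^{-1}(1/\beta,\tau)\bigr]^{1/\beta}.
\]
For Student-$t$, $R_\tau=\sqrt{(\nu-2)/\nu}\;t_\nu^{-1}(1-\tau/2)$. To establish monotonicity, apply implicit differentiation to $F_\theta(R_\tau(\theta))=1-\tau/2$, which gives
\[
\partial_\theta R_\tau=-\frac{\partial_\theta F_\theta(R)}{p_\theta(R)}.
\]
The sign of $\partial_\theta R_\tau$ is therefore the sign of the derivative in $\theta$ of the tail mass at fixed $r$.

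The crux is a tail-asymptotic argument. As $r\to\infty$, $\log$ of the tail mass behaves like $-(r/a)^\beta$ for the GGD and like $-\nu\log r$ for Student-$t$. Differentiating in the shape parameter shows that a heavier-tailed member has strictly larger tail mass for all $r$ beyond some crossing point $r^*(\theta,\theta')$. Unit-variance standardisation forces the densities of any two members to cross, so two members can be ordered in tail mass only beyond such a point. Let $\tau^*$ be the tail mass at the last crossing of survival functions. For $\tau<\tau^*$ the quantiles are then strictly ordered.

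The main obstacle is making this uniform over the whole family rather than pairwise. The crossing point may drift as $\theta\to\theta'$ (the infinitesimal version is the sign change of $\partial_\theta \bar F_\theta(r)$), and at the endpoints $\beta\to0,\infty$ or $\nu\to4,\infty$. My first attempt would be to show that $r\mapsto\partial_\theta\log\bar F_\theta(r)$ changes sign exactly once, for instance by a variation-diminishing or log-concavity argument on the score in $\theta$. Then I would take $\tau^*=\inf_\theta \bar F_\theta(r_0(\theta))$ over the sign-change points $r_0(\theta)$, checking that this infimum stays positive at the boundary limits (the Gaussian limit $\nu\to\infty$ and the uniform/Laplace-type limits in $\beta$). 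If a clean single-crossing proof resists, I would fall back on numerically evaluating $r_0(\theta)$ on a fine grid, with monotonic interval bounds, to certify $\tau^*>0.02$ as claimed. That is how the stated numerical thresholds would be obtained in any case.
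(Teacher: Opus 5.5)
Your route is the paper's route. Kurtosis is strictly monotone in the shape parameter, so it suffices to show that $R_\tau$ is strictly decreasing in $\beta$ (resp.\ $\nu$). Tail asymptotics then give the ordering pairwise for small $\tau$, and the uniform threshold comes from an infimum evaluated numerically.

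\textbf{The gap: the GGD uniformity step.} The check you plan, that the infimum of the pointwise thresholds stays positive as $\beta\to0$ and $\beta\to\infty$, fails at both ends. So no $\tau^*_{\mathrm{GGD}}>0$ exists over all $\beta\in(0,\infty)$.
\begin{itemize}
\item \emph{The limit $\beta\to0$.} Put $k=1/\beta$ in $R_\tau=a\,[Q^{-1}(k,\tau)]^{k}$ with $a=\sqrt{\Gamma(k)/\Gamma(3k)}$. Stirling and $Q^{-1}(k,\tau)=k+O(\sqrt k)$ give $\ln R_\tau=(1-\tfrac32\ln3)\,k+O(\sqrt k)\to-\infty$ for every fixed $\tau$. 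Hence $R_\tau\to0$ while $\kappa\to\infty$. Concretely, at $\tau=0.02$ one gets $R_\tau\approx3.11$ for $\beta=1/2$ ($\kappa=25.2$) but $R_\tau\approx2.64$ for $\beta=1/4$ ($\kappa\approx458$). Your numerical certification of $\tau^*>0.02$ would therefore fail too.
\item \emph{The limit $\beta\to\infty$, $\tau$ fixed.} Eventually $R_\tau<a(\beta)$, so the quantile lies in the flat bulk. There, up to exponentially small terms, $\ln R_\tau=\tfrac12\ln3+\ln(1-\tau)+\tfrac32\ln\Gamma(1+1/\beta)-\tfrac12\ln\Gamma(1+3/\beta)$. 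Its derivative in $x=1/\beta$ is $\tfrac32[\psi(1+x)-\psi(1+3x)]<0$, with $\psi$ the digamma function. So $R_\tau$ \emph{decreases} in $\kappa$ there, and the pointwise threshold tends to $0$.
\end{itemize}

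Part (a) is therefore provable only on a compact shape range. That is what the paper's proof actually does: it takes the infimum over $\beta\in[0.5,50]$ and reports $\tau^*_{\mathrm{GGD}}\approx0.028$ numerically.

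Once you impose that restriction, your infinitesimal formulation becomes useful. If the last sign change of $r\mapsto\partial_\beta\bar F_\beta(r)$ is locally bounded in $\beta$, it is bounded on the compact range. That gives a cleaner existence proof than the paper's infimum of pairwise thresholds $\tau_0(\beta_1,\beta_2)$, which by itself does not exclude $\tau_0\to0$ as $\beta_2\to\beta_1$.

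For Student-$t$ your endpoint check is unproblematic. The point $\nu=4$ is regular for the standardised family; only $\kappa$ blows up there. As $\nu\to\infty$ the threshold tends to $2(1-\Phi(\sqrt3))\approx0.083$ by Cornish--Fisher.

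\textbf{A smaller point: monotonicity of $\kappa(\beta)$.} Log-convexity of $\Gamma$ does not give monotonicity of $\kappa(\beta)$. It only gives $\Gamma(5x)\Gamma(x)\ge\Gamma(3x)^2$, i.e.\ $\kappa\ge1$. With $x=1/\beta$ you need $\frac{d}{dx}\ln\kappa=[g(5x)+g(x)-2g(3x)]/x>0$ for $g(y)=y\psi(y)$. This holds because $g''(y)=\sum_{k\ge1}2k/(y+k)^3>0$. The paper reaches the same conclusion by differentiating the Weierstrass product, obtaining $\sum_{k\ge1}8kx/((k+x)(k+3x)(k+5x))>0$.
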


The restriction to small $\tau$ is essential because unit-variance heavy-tailed densities are simultaneously more peaked near the mode (see Appendix~\ref{app:kurtosis-proofs}). The next result extends this monotonicity, to first order, to all symmetric unimodal densities via the Cornish--Fisher expansion.


\begin{proposition}[Cornish--Fisher tail-radius expansion]
\label{prop:cornish-fisher}
Let $p$ be a symmetric, standardised, unimodal density with
kurtosis~$\kappa$. Then the $\tau$-tail radius admits the expansion
\begin{equation}
  R_\tau
  \;\approx\;
  z_\tau
  \;+\;
  \frac{\kappa - 3}{24}\,\bigl(z_\tau^{3} - 3\,z_\tau\bigr)
  \;+\;
  O\!\bigl((\kappa-3)^{2}\bigr),
  \label{eq:cornishfisher}
\end{equation}
where $z_\tau = \Phi^{-1}(1-\tau/2)$ and skewness terms vanish by
symmetry.
\end{proposition}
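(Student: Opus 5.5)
The plan is to derive Eq.~\eqref{eq:cornishfisher} from the Edgeworth expansion of the distribution function, inverted at the level $1-\tau/2$. Since the density is standardised, its first two cumulants are $0$ and $1$. Symmetry forces all odd cumulants to vanish; in particular $\kappa_3 = 0$. The first nontrivial higher cumulant is therefore the excess kurtosis $\kappa_4 = \kappa - 3$. Following the standard Cornish--Fisher setup, I treat the density as a perturbation of $\phi$ indexed by a small parameter: $p$ belongs to a smooth family with $\kappa_4 \to 0$ and higher cumulants of order $O(\kappa_4^2)$. This is the regime in which the stated $O((\kappa-3)^2)$ remainder makes sense. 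Under this assumption, the Gram--Charlier/Edgeworth expansion gives
\begin{equation*}
F(x) \;=\; \Phi(x) \;-\; \phi(x)\,\frac{\kappa_4}{24}\,He_3(x) \;+\; O(\kappa_4^2),
\qquad He_3(x) = x^3 - 3x .
\end{equation*}
This follows from $F' = p$ and the identity $\frac{d}{dx}\bigl[\phi(x)He_3(x)\bigr] = -\phi(x)He_4(x)$. It matches the density expansion $p(x) = \phi(x)\bigl(1 + \tfrac{\kappa_4}{24}He_4(x)\bigr) + O(\kappa_4^2)$, which is obtained by expanding the characteristic function $\exp\bigl(-t^2/2 + \kappa_4 t^4/24 + \dots\bigr)$ to first order in $\kappa_4$ and inverting the Fourier transform term by term.

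Next I invert. Write $R_\tau = z_\tau + \delta$ with $\delta = O(\kappa_4)$, and impose $F(R_\tau) = 1 - \tau/2 = \Phi(z_\tau)$. Taylor expansion gives $\Phi(z_\tau + \delta) = \Phi(z_\tau) + \phi(z_\tau)\delta + O(\delta^2)$. Also $\phi(R_\tau)He_3(R_\tau) = \phi(z_\tau)He_3(z_\tau) + O(\kappa_4)$. Matching the terms linear in $\kappa_4$ yields $\phi(z_\tau)\,\delta = \phi(z_\tau)\,\frac{\kappa_4}{24}He_3(z_\tau)$, so
\begin{equation*}
\delta = \frac{\kappa-3}{24}\bigl(z_\tau^3 - 3z_\tau\bigr).
\end{equation*}
The skewness term $\frac{\kappa_3}{6}(z_\tau^2-1)$ of the general Cornish--Fisher formula, and the $\kappa_3^2$ correction, drop out because $\kappa_3 = 0$. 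The remaining neglected terms (in $\kappa_4^2$ and $\kappa_6$) are $O((\kappa-3)^2)$ under the scaling assumption. Unimodality and symmetry ensure that $R_\tau$ is the unique upper quantile. Because $\phi(z_\tau) > 0$, the inverse function theorem makes the implicit solution well defined and smooth in $\kappa_4$.

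The main obstacle is rigor in the remainder. The Edgeworth/Cornish--Fisher series is asymptotic rather than convergent, and a single fixed density has no small parameter. The statement uses ``$\approx$'' deliberately, so I would make the claim precise by stating it for a one-parameter family $p_\epsilon$ with $\kappa_4 = \epsilon$ and $\kappa_{2m}(p_\epsilon) = O(\epsilon^{m-1})$. This holds, for instance, for normalized sums or for the generalized Gaussian near $\beta = 2$. For such a family I would show that the Fourier inversion remainder is uniformly $O(\epsilon^2)$ at fixed $\tau$, using a Cram\'er-type smoothness condition on the characteristic function. The second step, the quantile inversion, is then routine.
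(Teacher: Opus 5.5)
Your derivation is the same as the paper's. You truncate the Edgeworth expansion of $F$ after the $\kappa_4$ term, set $R_\tau=z_\tau+\delta$, linearise at $z_\tau$, and divide by $\phi(z_\tau)$. Placing $p$ in a family $p_\epsilon$ with $\kappa_4=\epsilon$ and higher cumulants $O(\epsilon^2)$ is a real improvement, because the paper's proof assumes exactly this silently when it ``retains only the leading term''. Read literally for one fixed density, the claim fails. Take a standardised uniform with weight $\tfrac{5}{7}$ mixed with a standardised Laplace with weight $\tfrac{2}{7}$. This density is symmetric and unimodal with $\kappa=3$, yet $R_{0.01}=\ln(200/7)/\sqrt{2}\approx 2.37$ while $z_{0.01}\approx 2.58$.

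The one thing that is wrong is your example: the generalised Gaussian near $\beta=2$ does \emph{not} satisfy $\kappa_{2m}=O(\epsilon^{m-1})$.
\begin{itemize}
\item Write $x=1/\beta$. The standardised moments are $m_{2k}=\Gamma((2k+1)x)\,\Gamma(x)^{k-1}/\Gamma(3x)^{k}$.
\item Using $\psi(3/2)=\psi(1/2)+2$, $\psi(5/2)=\psi(3/2)+\tfrac{2}{3}$ and $\psi(7/2)=\psi(5/2)+\tfrac{2}{5}$, at $x=\tfrac12$ you get $dm_4/dx=4$ and $dm_6/dx=52$.
\item Hence $\kappa_6=m_6-15m_4+30$ has $d\kappa_6/dx=-8$, against $d\kappa_4/dx=4$. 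So $\kappa_6\approx-2(\kappa-3)$ is first order. Numerically, at $\beta=1/0.45$ one has $\kappa-3\approx-0.19$ but $\kappa_6\approx 0.51$.
\end{itemize}
For this family the Cornish--Fisher term $\tfrac{\kappa_6}{720}\mathrm{He}_5(z_\tau)$, and likewise the $\kappa_8,\kappa_{10},\dots$ terms, contribute at first order. Its true slope $dR_\tau/d\kappa$ at $\kappa=3$ is therefore not $(z_\tau^3-3z_\tau)/24$. At $\tau=0.05$ the $\kappa_6$ term alone adds about $0.047$ to the nominal $0.069$. Drop that example and keep normalised sums.

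For the first-order claim you also do not need uniform Edgeworth remainders or a Cram\'er condition.
\begin{itemize}
\item If $F_\epsilon(x)$ is $C^2$ near $(0,z_\tau)$, apply the implicit function theorem to $F_\epsilon(R)=1-\tau/2$. This gives $R_\tau(\epsilon)=z_\tau+\epsilon R_\tau'(0)+O(\epsilon^2)$ with $R_\tau'(0)=\phi(z_\tau)^{-1}\int_{z_\tau}^\infty\partial_\epsilon p_\epsilon|_{\epsilon=0}$.
\item Expand $\partial_\epsilon p_\epsilon|_{\epsilon=0}=\phi\sum_k\tfrac{\dot\kappa_k}{k!}\mathrm{He}_k$ and use $\int_z^\infty\mathrm{He}_k\phi=\phi(z)\mathrm{He}_{k-1}(z)$. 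This yields $R_\tau'(0)=\sum_{m\ge2}\tfrac{\dot\kappa_{2m}}{(2m)!}\mathrm{He}_{2m-1}(z_\tau)$.
\end{itemize}
So the stated expansion holds for every $\tau$ precisely when the family is tangent to $\phi\,\mathrm{He}_4$ at the Gaussian, i.e.\ $\dot\kappa_{2m}=0$ for $m\ge 3$. That is the hypothesis to state, and it is exactly what the generalised Gaussian violates.
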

 
\begin{proof}[Proof sketch]
Starting from the Edgeworth expansion of the CDF, which for a
symmetric distribution reduces to
$F(x) = \Phi(x) - \tfrac{\kappa-3}{24}\,\mathrm{He}_3(x)\,\phi(x)
+ O((\kappa-3)^2)$,
we invert at level $p = 1 - \tau/2$ by writing
$R_\tau = z_\tau + \delta$ and solving for $\delta$ to first order.
The full derivation is given in Appendix~\ref{app:cornish-fisher}.
\end{proof}

\subsubsection{From Tail Extent to Perturbation Scale: an Optimality Statement}
\label{sec:sigma-theory}

We now state the optimization problem that the kurtosis rule solves
exactly. The goal of the perturbation is to expose the score network
to the tail of the data marginal. We formalize this as a coverage
requirement: a training point $x$ near the mode should, with
probability at least $\alpha$, be displaced by noise far enough to
reach the $\tau$-tail region $\{|y| \geq R_\tau\}$ of the marginal.

\begin{theorem}[Minimum-noise coverage rule]
\label{thm:min-noise-coverage}
Let $X$ be a symmetric unimodal unit-variance marginal with kurtosis
$\kappa$ and $\tau$-tail radius $R_\tau(\kappa)$. Fix a tail level
$\tau$ and a coverage level $\alpha\in(0,1)$, and consider the
constrained optimization
\begin{equation}
\sigma^{*}(\kappa) \;=\;
\arg\min_{\sigma > 0}\; \sigma
\quad \text{s.t.} \quad
\Pr_{\varepsilon \sim \mathcal{N}(0,\sigma^{2})}\!\bigl[|\varepsilon| \geq R_\tau(\kappa)\bigr] \;\geq\; \alpha.
\label{eq:min-noise-problem}
\end{equation}
Its unique solution is
\begin{equation}
\sigma^{*}(\kappa) \;=\; \frac{R_\tau(\kappa)}{\Phi^{-1}(1 - \alpha/2)},
\label{eq:min-noise-sol}
\end{equation}
and, under the conditions of
Proposition~\ref{prop:kurtosis-exact}, $\sigma^{*}(\kappa)$ is
strictly increasing in $\kappa$.
\end{theorem}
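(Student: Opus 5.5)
The plan is to reduce the constraint in \eqref{eq:min-noise-problem} to a scalar inequality in $\sigma$ and then read off the minimizer from monotonicity. Write $R = R_\tau(\kappa) > 0$, which is positive because $\tau < 1$ and the marginal is symmetric. For $\varepsilon \sim \mathcal{N}(0,\sigma^2)$ we have $\varepsilon/\sigma \sim \mathcal{N}(0,1)$. Hence the coverage probability is
\[
g(\sigma) \;=\; \Pr\bigl[|\varepsilon| \ge R\bigr] \;=\; 2\bigl(1 - \Phi(R/\sigma)\bigr).
\]

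The first step is to show that $g$ is continuous and strictly increasing on $(0,\infty)$, with $g(\sigma) \to 0$ as $\sigma \to 0^+$ and $g(\sigma) \to 1$ as $\sigma \to \infty$. This holds because $\sigma \mapsto R/\sigma$ is strictly decreasing onto $(0,\infty)$ and $\Phi$ is strictly increasing and continuous. Consequently the feasible set $\{\sigma > 0 : g(\sigma) \ge \alpha\}$ is a closed half-line $[\sigma^*,\infty)$, where $\sigma^*$ is the unique solution of $g(\sigma) = \alpha$; this solution exists by the intermediate value theorem since $\alpha \in (0,1)$.

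The second step solves $g(\sigma) = \alpha$ explicitly. It is equivalent to $\Phi(R/\sigma) = 1 - \alpha/2$, i.e.\ $R/\sigma = \Phi^{-1}(1-\alpha/2)$. The right-hand side is strictly positive because $1 - \alpha/2 > 1/2$, so we may divide by it and obtain \eqref{eq:min-noise-sol}. The minimum of a closed half-line is its left endpoint, so this $\sigma^*$ is the unique argmin. The endpoint is actually attained, so there is no infimum-versus-minimum issue.

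The third step is monotonicity in $\kappa$. The denominator $\Phi^{-1}(1-\alpha/2)$ does not depend on $\kappa$ and is positive, so $\sigma^*(\kappa)$ inherits strict monotonicity from $R_\tau(\kappa)$. Under the conditions of Proposition~\ref{prop:kurtosis-exact}, meaning $X$ belongs to the generalised Gaussian or Student-$t$ ($\nu>4$) family and $\tau$ lies below the corresponding threshold $\tau^*$, $R_\tau$ is strictly increasing in $\kappa$. The conclusion follows immediately.

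The only step needing care is this last one. The monotonicity claim is not true for arbitrary symmetric unimodal marginals or for arbitrary $\tau$: as the paper notes, heavy-tailed unit-variance densities are more peaked near the mode, so $R_\tau$ can fail to be monotone in $\kappa$ at moderate $\tau$. I would therefore state explicitly that the monotonicity is taken within a single parametric family from Proposition~\ref{prop:kurtosis-exact} and for $\tau$ in its admissible range. I would also remark that, to first order, Proposition~\ref{prop:cornish-fisher} gives
\[
\sigma^*(\kappa) \approx \frac{z_\tau}{\Phi^{-1}(1-\alpha/2)}\Bigl(1 + \frac{(\kappa-3)(z_\tau^2-3)}{24}\Bigr),
\]
which is increasing in $\kappa$ whenever $z_\tau > \sqrt{3}$. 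This expression recovers the affine form of \eqref{eq:sigma_formula}.
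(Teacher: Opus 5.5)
Your proposal is correct and follows essentially the same route as the paper: rewrite the coverage probability as $2(1-\Phi(R/\sigma))$, reduce the constraint to the half-line $\sigma\ge R_\tau(\kappa)/\Phi^{-1}(1-\alpha/2)$ whose left endpoint is the unique minimiser, and transfer strict monotonicity from $R_\tau$ through the $\kappa$-independent positive factor, within a single family of Proposition~\ref{prop:kurtosis-exact}. The only difference is cosmetic: you obtain the half-line via continuity and the intermediate value theorem, whereas the paper applies the increasing bijection $\Phi^{-1}$ directly to the inequality.
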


\begin{proof}[Proof sketch]
For $\varepsilon\sim\mathcal{N}(0,\sigma^{2})$,
$\Pr(|\varepsilon|\ge R_\tau)=2(1-\Phi(R_\tau/\sigma))\ge\alpha$
iff $\sigma\ge R_\tau/\Phi^{-1}(1-\alpha/2)$. The constraint is
monotone in $\sigma$, so the minimum is attained with equality. Strict monotonicity in $\kappa$
follows from Proposition~\ref{prop:kurtosis-exact} since
$\Phi^{-1}(1-\alpha/2)$ is independent of $\kappa$. Full proof in 
Appendix~\ref{app:min-noise-proofs}.
\end{proof}

\begin{corollary}[First-order optimal affine rule]
\label{cor:sigma-monotone}
Under the assumptions of Theorem~\ref{thm:min-noise-coverage}, the
minimum-noise scale admits the first-order expansion around the
Gaussian reference $\kappa=3$
\begin{equation}
\frac{\sigma^{*}(\kappa)}{\sigma^{*}(3)}
\;=\;
1 \;+\; \frac{z_\tau^{2} - 3}{24}\,(\kappa - 3)
\;+\; O\!\bigl((\kappa - 3)^{2}\bigr),
\qquad z_\tau = \Phi^{-1}(1 - \tau/2),
\label{eq:optimal-affine}
\end{equation}
which is affine in $\kappa$ with slope
$c_{\mathrm{CF}}(\tau)=(z_\tau^{2}-3)/24$, strictly positive whenever
$\tau<2(1-\Phi(\sqrt{3}))\approx 0.083$. In particular, the affine rule
$\sigma(\kappa)=\sigma_{\text{base}}[1+c(\kappa-3)]$ of
Eq.~\eqref{eq:sigma_formula} is the unique first-order Taylor
approximation of $\sigma^{*}(\kappa)$ in the excess-kurtosis
direction.
\end{corollary}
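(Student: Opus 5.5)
The plan is to combine the closed-form solution of Theorem~\ref{thm:min-noise-coverage} with the Cornish--Fisher expansion of Proposition~\ref{prop:cornish-fisher}. By Eq.~\eqref{eq:min-noise-sol}, $\sigma^{*}(\kappa)=R_\tau(\kappa)/\Phi^{-1}(1-\alpha/2)$, and the denominator does not depend on $\kappa$. It therefore cancels in the ratio:
\begin{equation*}
\frac{\sigma^{*}(\kappa)}{\sigma^{*}(3)} \;=\; \frac{R_\tau(\kappa)}{R_\tau(3)} .
\end{equation*}
The problem thus reduces to expanding the tail radius ratio to first order in $\kappa-3$.

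Next, note that $R_\tau(3)=z_\tau$. At $\kappa=3$ the first-order Cornish--Fisher correction vanishes, so the reference is the Gaussian quantile $z_\tau=\Phi^{-1}(1-\tau/2)$. This is exact if we read the expansion as a perturbation around the Gaussian reference; I will state this explicitly. Substituting Eq.~\eqref{eq:cornishfisher} and dividing by $z_\tau$ gives
\begin{equation*}
\frac{R_\tau(\kappa)}{z_\tau} \;=\; 1+\frac{\kappa-3}{24}\,\frac{z_\tau^{3}-3z_\tau}{z_\tau}+O\bigl((\kappa-3)^2\bigr) \;=\; 1+\frac{z_\tau^{2}-3}{24}(\kappa-3)+O\bigl((\kappa-3)^2\bigr).
\end{equation*}
This is Eq.~\eqref{eq:optimal-affine}. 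Division by $z_\tau$ is legitimate because $\tau<1$ gives $z_\tau>0$.

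For the sign of the slope, $c_{\mathrm{CF}}(\tau)>0$ holds iff $z_\tau>\sqrt{3}$, i.e.\ $1-\tau/2>\Phi(\sqrt3)$. This is equivalent to $\tau<2(1-\Phi(\sqrt3))\approx 0.083$, which follows from strict monotonicity of $\Phi$.

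For uniqueness, I will argue as follows. Any function $g$ that is differentiable at $\kappa=3$ has a unique first-order Taylor polynomial, namely $g(3)+g'(3)(\kappa-3)$. The expansion above identifies $\sigma^{*}(3)=R_\tau(3)/\Phi^{-1}(1-\alpha/2)$ and $(\sigma^{*})'(3)=\sigma^{*}(3)\,c_{\mathrm{CF}}(\tau)$. The affine rule $\sigma_{\mathrm{base}}[1+c(\kappa-3)]$ coincides with this polynomial exactly when $\sigma_{\mathrm{base}}=\sigma^{*}(3)$ and $c=c_{\mathrm{CF}}(\tau)$. The claim should therefore be read as: the affine form, with these parameter identifications, is the unique first-order approximation.

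The main obstacle is rigor in the $O((\kappa-3)^2)$ remainder. Proposition~\ref{prop:cornish-fisher} is stated with ``$\approx$'', and kurtosis alone does not parametrize a family of densities, so ``differentiability in $\kappa$'' needs a setting. My first approach would be to fix a smooth one-parameter family through the Gaussian, such as the generalized Gaussian family of Proposition~\ref{prop:kurtosis-exact} near $\beta=2$, with $\kappa$ as a reparametrization. There, $R_\tau$ is smooth by the implicit function theorem applied to $F(R;\kappa)=1-\tau/2$, since the density is positive at $R$. Taking the Edgeworth form of the CDF as the defining first-order approximation, the implicit function theorem gives $\partial R/\partial\kappa|_{\kappa=3}=\mathrm{He}_3(z_\tau)/24$. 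Concretely, implicit differentiation of $\Phi(R)-\tfrac{\kappa-3}{24}\mathrm{He}_3(R)\phi(R)=1-\tau/2$ at $\kappa=3$ gives $\phi(z_\tau)\,R'=\mathrm{He}_3(z_\tau)\phi(z_\tau)/24$, which matches the slope above.
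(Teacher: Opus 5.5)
Your main argument is the paper's proof, step for step. You cancel the $\kappa$-free factor $\Phi^{-1}(1-\alpha/2)$ in the ratio, use $R_\tau(3)=z_\tau$, and divide the Cornish--Fisher expansion by $z_\tau>0$. You read off the sign from the monotonicity of $\Phi$. You get uniqueness by matching zeroth- and first-order Taylor coefficients, which forces $\sigma_{\text{base}}=\sigma^{*}(3)$ and $c=c_{\mathrm{CF}}(\tau)$. Like the paper's proof, this holds exactly to the extent that Proposition~\ref{prop:cornish-fisher} does.

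The step that would fail is your proposed rigorization through the generalised Gaussian family.

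\emph{It is circular.} Implicitly differentiating the truncated Edgeworth CDF $\Phi(R)-\tfrac{\kappa-3}{24}\mathrm{He}_3(R)\phi(R)$ assumes the first-order perturbation of the density is along $\mathrm{He}_4\phi$. That is precisely the content of Proposition~\ref{prop:cornish-fisher}, so it cannot justify it.

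\emph{The actual GGD CDF gives a different slope.} Apply the implicit function theorem to the true unit-variance GGD density $p(y;\beta)$ at $\beta=2$. You get $dR_\tau/d\beta=\int_{z_\tau}^{\infty}\partial_\beta p(y;2)\,dy/\phi(z_\tau)$, together with $d\kappa/d\beta=-1$ and
$\partial_\beta\log p(y;\beta)|_{\beta=2}=-\tfrac14-\tfrac{y^2}{2}\log|y|+c_0y^2$, where $c_0\approx 0.432$.
This direction is not proportional to $\mathrm{He}_4$. The GGD's standardised sixth cumulant moves at first order, $\kappa_6\approx-2(\kappa-3)$ near $\beta=2$. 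So the Gram--Charlier terms you discard are $O(\kappa-3)$, not $O((\kappa-3)^2)$.

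\emph{The discrepancy is large.} At $\tau=0.05$ the true GGD slope is $dR_\tau/d\kappa|_{\kappa=3}\approx 0.13$, against the Cornish--Fisher value $\mathrm{He}_3(1.96)/24\approx 0.069$. For small $\tau$ it grows like $\tfrac{z_\tau}{2}\log z_\tau$ rather than $z_\tau^3/24$. Along the GGD family the corollary's slope is therefore wrong. $\sigma^{*}$ is not a function of $\kappa$ alone, and its first-order coefficient depends on the direction from which the family approaches the Gaussian.

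\emph{A family where your argument works.} Use the standardised Student-$t$ with $\nu\to\infty$, whose standardised higher cumulants are $O(\nu^{-2})$. Take the classical expansion $t_\nu(p)=z+\tfrac{z^3+z}{4\nu}+O(\nu^{-2})$ and multiply by $\sqrt{(\nu-2)/\nu}=1-\tfrac1\nu+O(\nu^{-2})$. This gives $R_\tau=z_\tau+\tfrac{z_\tau^3-3z_\tau}{4\nu}+O(\nu^{-2})$. With $\kappa-3=6/(\nu-4)$, this is exactly Eq.~\eqref{eq:cornishfisher}, one-sided as $\kappa\downarrow 3$.

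The defensible version of the corollary therefore assumes an Edgeworth-type family, meaning higher standardised cumulants are $O((\kappa-3)^2)$. The paper's proof makes the same assumption implicitly when it imports Proposition~\ref{prop:cornish-fisher}.
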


\begin{proof}[Proof sketch]
Substitute the Cornish--Fisher expansion~\eqref{eq:cornishfisher}
into~\eqref{eq:min-noise-sol} and divide by $\sigma^{*}(3)$; the
$\Phi^{-1}(1-\alpha/2)$ factor cancels and the ratio simplifies to
$1+\tfrac{z_\tau^{2}-3}{24}(\kappa-3)+O((\kappa-3)^{2})$. Uniqueness
of the first-order Taylor coefficient gives the uniqueness claim.
Full proof in Appendix~\ref{app:min-noise-proofs}.
\end{proof}

\paragraph{Why kurtosis specifically.}
For a symmetric standardised marginal, all odd cumulants vanish and the first two (mean and variance) are removed by standardisation, 
so the excess kurtosis $\kappa-3$ is the leading non-Gaussian cumulant. Corollary~\ref{cor:sigma-monotone} is therefore the unique first-order correction to a Gaussian-reference noise scale. In the practical formula, $\sigma_{\text{base}}=\sigma^{*}(3)$ fixes the coverage level at the Gaussian reference and $c$ plays the role of $c_{\mathrm{CF}}(\tau)$; the empirical optimum $c=0.33$ corresponds to $\tau\approx10^{-3}$, larger than the nominal $c_{\mathrm{CF}}(0.05)\approx 0.035$ because the score network benefits from dense coverage of the tail region rather than minimal coverage at the boundary. Ablations (in Section~\ref{sec:results}) confirm that detection quality is robust to the precise value of $c$: the ablation over $\tau$ induces implied slopes $c_{\mathrm{CF}}(\tau)\in\{0.012,\,0.035,\,0.15,\,0.20,\,0.33,\,0.51,\,0.56,\,0.88\}$, all of which perform within a narrow band of AUC-PR, and we adopt the round value $c=0.33\approx 1/3$ as a simple default that sits at the empirical plateau and matches the Gaussian anchor $\kappa_j=3$. More details in Appendix~\ref{app:c-tau}. We investigated other sigma selecting strategies, finding that kurtosis performs best (See Appendix ~\ref{app:alternative_strategies}).


\section{Related Work}

Classical tabular anomaly detectors rely on distance, density, or
projection heuristics and remain standard baselines: PCA
\citep{PCA_AD}, OCSVM \citep{OCSVM}, MCD \citep{MCD}, LOF \citep{LOF},
LODA \citep{LODA}, $k$-NN \citep{kNN}, Isolation Forest
\citep{isolation_forest}, Feature Bagging \citep{FeatureBagging},
HBOS \citep{HBOS}, ECOD \citep{ECOD}, COPOD \citep{COPOD}, and CBLOF
\citep{CBLOF}. Early deep approaches such as Deep SVDD \citep{DSVDD}
and DAGMM \citep{DAGMM} learn compact representations or couple
autoencoders with Gaussian mixtures, and self-supervised frameworks
GOAD \citep{GOAD} and NeuTraL-AD \citep{Neutral_AD} induce
anomaly-discriminative features through auxiliary classification
tasks. Closer to our comparisons, contrastive ICL \citep{ICL}
leverages representation learning, SLAD \citep{SLAD} exploits partial
supervision, and DTE \citep{DTE} scores anomalies via the estimated
distribution of diffusion times. Reconstruction-based methods connect
to our approach through the DSM--DAE (Denoising Autoencoder) equivalence \citep{DSM}; we
include DDAE \citep{DAE_diffusion} as a direct representative.

\paragraph{Score based methods:}

\citet{mahmood2021multiscale} proposed a multi-scale approach (MSM) that leverages a Noise Conditioned Score Network (NCSN) \citep{song2020generativemodelingestimatinggradients} to estimate score norms across different noise levels. These estimates are then used to detect anomalies through unsupervised clustering. Their work also analyzed the inseparability of score norms under the single-noise case in the image setting. \citet{Shin_2023_ICCV} adopt a similar multi-scale framework but replaces score-norm classification with a fixed-noise restoration error criterion. A common limitation of multi-scale methods lies in the choice of noise level at inference time: since the model is noise-conditioned, a noise input must be specified to determine the evaluation level for each sample. MSM addresses this by clustering over all noise scales, but the performance may suffer from unreliable score norms at suboptimal noise levels. 


\section{Experiments}
\label{sec:experiments}

\begin{figure}[h]
\vspace{-1em}
    \centering
        \centering
        \begin{subfigure}[b]{0.45\textwidth}
          \includegraphics[width=\textwidth]{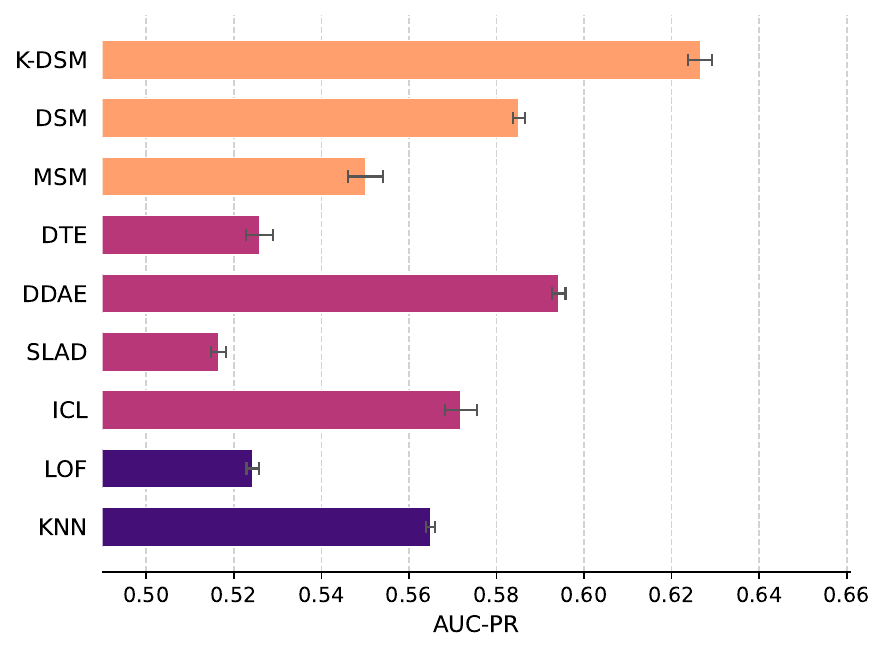}
          \caption{AUC-PR}
        \end{subfigure}
        \begin{subfigure}[b]{0.45\textwidth}
          \includegraphics[width=\textwidth]{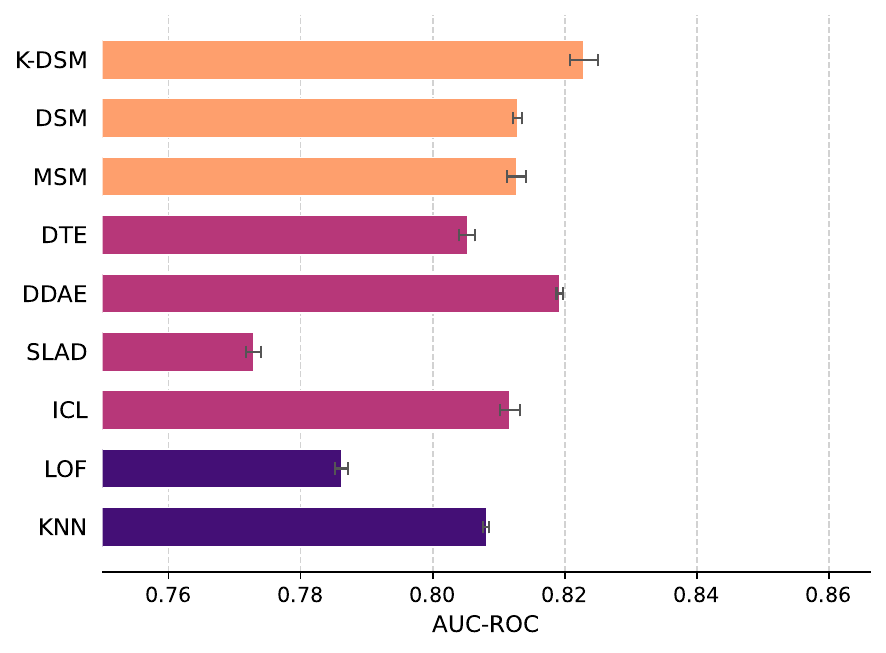}
          \caption{AUC-ROC}
        \end{subfigure}
        \caption{Average and standard deviation of AUC-PR and AUC-ROC on the 57 datasets from ADBench over five different seeds under semi-supervised setting. Colour scheme: orange (score-based), magenta (deep learning methods), purple (classical methods).}
        \label{fig:pr and aur roc}
    \vspace{-1em}
\end{figure}
\subsection{Protocol}
\label{sec:protocol}

We follow the  proceedure from \citep{DTE}. We evaluate on ADBench \citep{han2022adbench} (57 tabular datasets; see Table~\ref{table:datasets}), training on a random 50\% subset of normal samples and testing on the held-out normals plus all anomalies, capped at 50{,}000 samples. Results are mean with standard error over the 57 datasets (over 5 random seeds); scalers are fit on training normals only and all methods share identical splits. Our primary metric is AUC-PR, which better reflects retrieval quality under class imbalance than AUC-ROC \citep{pr_roc,mcdermott2024a}; we include AUC-ROC and F1 as secondary metrics. Every method uses a single fixed configuration across all datasets (paper-recommended values; details in Appendix~\ref{sec:model_architecture}). Our kurtosis-guided rule is explicitly label-free and requires no validation set.

\subsection{Results}
\label{sec:results}

We evaluate single-sigma DSM, multi-scale score matching (MSM), and our kurtosis-guided DSM (K-DSM) on ADBench alongside the top baselines identified by \citet{DTE} and the recent DDAE \citep{DAE_diffusion}; results are in Figure~\ref{fig:pr and aur roc}. Two findings stand out. First, plain single-scale DSM with an expressive MLP (Appendix~\ref{sec:model_architecture}) is already competitive with DDAE and KNN and clearly outperforms MSM, DTE, SLAD, and LOF, suggesting that much of the gap reported in prior work stems from under-specified architectures rather than from a fundamental limitation of single-scale training; to isolate the effect of the architecture, we trained DTE and MSM with the same Resnet-like MLP backbone used by K-DSM \citep{gorishniy2021revisiting}. Second, kurtosis scaling improves mean AUC-PR from 0.584 (single-scale DSM) to 0.627 (K-DSM), the best among all 14 methods; the improvements over both DSM and MSM are significant on all three metrics. We note that K-DSM and DSM also have much faster inference than all other methods (Figure~\ref{fig:inf_time}). We also evaluate on the image benchmarks MVTec-AD \citep{bergmann2019mvtec} and VisA \citep{zou2022visa} with DINOv3 \citep{siméoni2025dinov3} embeddings, where K-DSM is competitive but does not dominate because standardised vision features are approximately Gaussian per coordinate  (Appendix~\ref{sec:image_benchmarks_appendix}).




\begin{figure}[H]
    \centering
    \begin{subfigure}[b]{0.48\textwidth}
        \includegraphics[width=\textwidth]{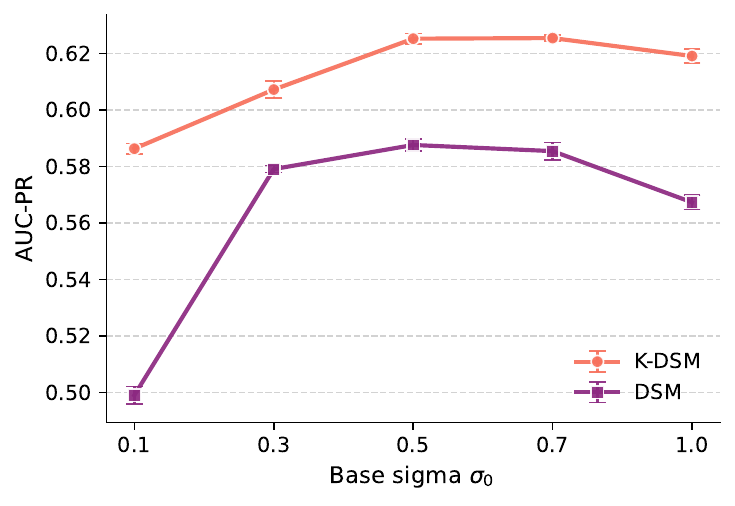}
        \caption{Sensitivity to base noise scale $\sigma_0$.}
        \label{fig:ablation_sigma}
    \end{subfigure}
    \begin{subfigure}[b]{0.48\textwidth}
        \includegraphics[width=\textwidth]{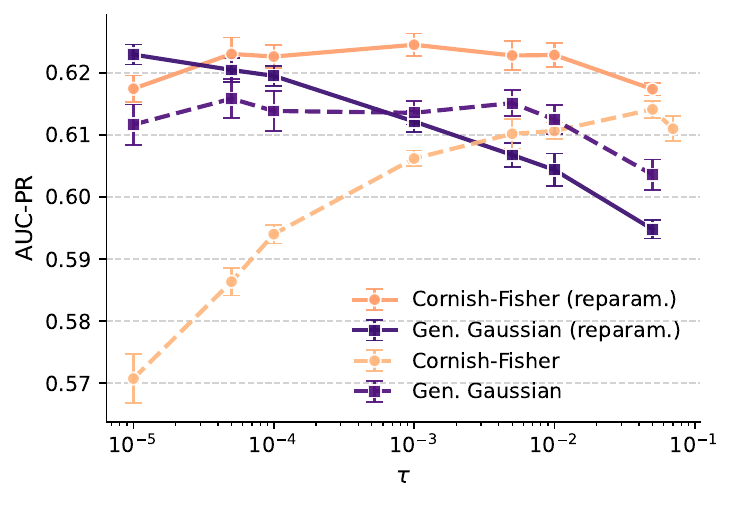}
        \caption{CF vs.\ GGD mapping across $\tau$, w/wo $\varepsilon$-prediction.}
        \label{fig:ablation_tau}
    \end{subfigure}
    \caption{Ablation studies on ADBench.
    \textbf{(a)}~K-DSM is substantially more robust than single-scale DSM to the choice of~$\sigma_0$.
    \textbf{(b)}~CF and GGD mappings under the standard DSM loss (dotted) and the $\varepsilon$-prediction loss (solid); $\varepsilon$-prediction helps both, and the affine CF rule dominates GGD for all $\tau$.}
    \label{fig:ablations}
    \vspace{-1em}
\end{figure}

\paragraph{Robustness to base noise scale.}
We compare K-DSM and single-scale DSM as the base noise scale $\sigma_0$ varies (Figure~\ref{fig:ablation_sigma}). Single-scale DSM is highly sensitive to this choice: its AUC-PR fluctuates substantially across the range, with performance degrading sharply at both extremes. Kurtosis-guided scaling, by contrast, remains stable across the full sweep, consistently outperforming its single-scale counterpart. This is the core practical advantage of our approach: the per-feature kurtosis adjustment absorbs much of the sensitivity that would otherwise fall on $\sigma_0$, making the method far more forgiving of the base hyperparameter. 

\paragraph{Cornish--Fisher vs.\ GGD noise mapping.}
The affine CF rule of Eq.~\eqref{eq:sigma_formula} is a first-order approximation of an exact tail-radius mapping; to check whether that approximation is adequate, we compare it against the exact mapping on the Generalised Gaussian (GGD) family. For each feature we solve Eq.~\eqref{eq:ggd-kurtosis} of Appendix~\ref{app:kurtosis-proofs} for the shape $\beta_j$ whose unit-variance kurtosis matches $\kappa_j$, and set $\sigma_j = \sigma_{\text{base}}\cdot R_\tau(\beta_j)/R_\tau(\beta{=}2)$ using the exact GGD tail radius. Both mappings are parameterised by the tail-probability threshold $\tau$, and we evaluate each under the standard DSM loss and the $\varepsilon$-prediction loss (Figure~\ref{fig:ablation_tau}). $\varepsilon$-prediction uniformly improves both mappings, confirming the decoupling argument of Section~\ref{sec:per_feature}, and the affine CF rule dominates GGD across four orders of magnitude in $\tau$, placing our default $c=0.33$ ($\tau\approx 10^{-3}$) inside a stable plateau.

\subsection{Unsupervised (Contaminated) Setting}
\label{sec:unsup_main}

In the fully unsupervised regime, the training set is unlabelled and contains a minority of anomalies, which violates the clean-data assumption of DSM: the denoising objective regresses the score toward zero at every training point, including contaminants, so anomalies are actively taught to have a small score norm. This hurts every score- or reconstruction-based detector in our benchmark.

\paragraph{EMA-teacher filtering.} We address this with a simple training rule that repurposes the EMA-teacher construction of semi- and self-supervised learning \citep{tarvainen2017mean,he2020moco,grill2020byol,caron2021dino}. Alongside the online network $s_\theta$, we maintain a shadow copy $s_{\theta_{\text{ema}}}$ updated as $\theta_{\text{ema}} \leftarrow 0.999\,\theta_{\text{ema}} + 0.001\,\theta$; each minibatch, samples whose teacher score norm $\|s_{\theta_{\text{ema}}}(x)\|$ exceeds the $\gamma$-th batch percentile are dropped and the gradient step is taken on the survivors. Lagging the teacher behind the student avoids the positive-feedback loop in which whatever the online model currently calls anomalous becomes more anomalous on the next step. Our contribution here is not the EMA mechanism itself but its repurposing as a per-sample density proxy for contamination filtering; the same recipe adapted to DDAE, DTE, and MSM is given in Appendix~\ref{app:ema_teacher}. We set $\gamma = 80$ based on an ablation study (Appendix~\ref{app:gamma_ablation}).

\paragraph{Results.} Table~\ref{tab:unsup_summary} reports mean AUC-PR, AUC-ROC, and F1 across the 57 datasets (full per-dataset tables in Appendix~\ref{app:unsup_full}). Under contamination, standard DSM and K-DSM collapse as predicted: they lose a large fraction of their semi-supervised performance and fall behind every classical baseline, because contaminants are explicitly taught to have small score norm. The EMA teacher neatly removes this failure mode, lifting every score- or reconstruction-based method. Kurtosis then compounds on top of the EMA teacher rather than being absorbed by it: K-DSM-EMA reaches state-of-the-art on the contaminated benchmark, claiming the best AUC-PR and F1 across all 14 methods, and K-DSM-EMA beats DSM-EMA on every metric, confirming that per-feature kurtosis scaling remains beneficial in the contaminated regime. We note that classical MCD is the strongest method on AUC-ROC and narrowly trails K-DSM-EMA on AUC-PR and F1, which indicates that a non-parametric baseline can remain highly competitive in this setting.

\begin{table}[h]
\caption{Mean performance ($\times 100$) across 57 ADBench datasets under the \emph{unsupervised} (contaminated) setting (5 seeds). Each score-based method shows EMA-teacher variant (top) and base variant (bottom, grey); see Appendix~\ref{app:ema_teacher}. Best per metric in bold; standard errors in parentheses.}
\centering
\label{tab:unsup_summary}
\setlength{\tabcolsep}{4pt}
\renewcommand{\arraystretch}{1.05}
\small
\resizebox{\textwidth}{!}{
\begin{tabular}{l l ccccc ccccc}
\toprule
Metric & Variant & K-DSM & DSM & DDAE & DTE & MSM & SLAD & ICL & MCD & LOF & KNN \\
\midrule
\multirow{2}{*}{AUC-PR}
 & +EMA  & \textbf{34.3} (0.4) & 32.1 (0.5) & 31.5 (0.4) & 29.0 (0.2) & 16.9 (0.1)
 & \multirow{2}{*}{24.5 (0.1)} & \multirow{2}{*}{24.0 (0.3)} & \multirow{2}{*}{32.1 (0.5)} & \multirow{2}{*}{17.5 (0.1)} & \multirow{2}{*}{23.9 (0.1)} \\
 & base  & {\color{gray}\scriptsize 18.8 (0.3)} & {\color{gray}\scriptsize 19.4 (0.2)}
 & {\color{gray}\scriptsize 22.8 (0.3)} & {\color{gray}\scriptsize 28.1 (0.2)}
 & {\color{gray}\scriptsize 17.7 (0.2)} & & & & & \\
\addlinespace[2pt]
\cmidrule(lr){1-12}
\multirow{2}{*}{AUC-ROC}
 & +EMA  & 70.1 (0.3) & 71.0 (0.3) & 72.5 (0.1) & 72.5 (0.3) & 64.1 (0.3)
 & \multirow{2}{*}{67.3 (0.1)} & \multirow{2}{*}{69.5 (0.1)} & \multirow{2}{*}{\textbf{73.0} (0.4)} & \multirow{2}{*}{60.8 (0.2)} & \multirow{2}{*}{66.5 (0.1)} \\
 & base  & {\color{gray}\scriptsize 60.2 (0.2)} & {\color{gray}\scriptsize 60.4 (0.1)}
 & {\color{gray}\scriptsize 68.5 (0.5)} & {\color{gray}\scriptsize 72.6 (0.2)}
 & {\color{gray}\scriptsize 62.8 (0.5)} & & & & & \\
\addlinespace[2pt]
\cmidrule(lr){1-12}
\multirow{2}{*}{F1}
 & +EMA  & \textbf{34.4} (0.5) & 30.8 (0.5) & 31.2 (0.4) & 28.4 (0.3) & 16.6 (0.1)
 & \multirow{2}{*}{23.8 (0.2)} & \multirow{2}{*}{24.5 (0.6)} & \multirow{2}{*}{30.8 (0.6)} & \multirow{2}{*}{18.4 (0.2)} & \multirow{2}{*}{25.0 (0.1)} \\
 & base 
{\color{gray}\scriptsize \textit{base}}
 & {\color{gray}\scriptsize 19.5 (0.3)} & {\color{gray}\scriptsize 20.0 (0.3)}
 & {\color{gray}\scriptsize 24.1 (0.3)} & {\color{gray}\scriptsize 27.9 (0.4)}
 & {\color{gray}\scriptsize 18.2 (0.2)} & & & & & \\
\bottomrule
\vspace{-2em}
\end{tabular}}
\end{table}

\section{Conclusion}

We have revisited denoising score matching for anomaly detection and shown that the long-standing challenge of noise scale selection can be addressed without resorting to computationally expensive multi-scale models. Our kurtosis-guided noise scaling rule adapts the per-feature noise level to the statistical shape of each marginal, improving coverage of low-density regions while preserving sensitivity in high-density areas. A lightweight EMA-teacher filtering step extends the same model to the fully unsupervised, contaminated regime by removing likely anomalies from each gradient update at negligible extra cost. Together, these two ingredients suggest that careful, data-adaptive noise scaling combined with a contamination-robust training recipe is a practical and scalable alternative to heavier multi-scale or ensemble approaches for real-world anomaly detection.

\paragraph{Limitations.}
\label{sec:limitations_section}
Two limitations remain. First, on image-style data where features have been standardised to near-Gaussian marginals, per-feature kurtosis is close to 3 almost everywhere, so the kurtosis rule collapses toward single-scale DSM and its specific advantage disappears (Appendix~\ref{sec:image_benchmarks_appendix}). Kurtosis-guided scaling delivers its largest gains precisely when marginals exhibit meaningful tail heterogeneity, and we view this as a clean statement of the regime of validity of our method rather than a failure mode. Second, while the EMA-teacher extension of Section~\ref{sec:unsup_main} restores K-DSM to the top of AUC-PR and F1 under contamination, the method is not fully solved: the filter uses a fixed $\gamma$-th-percentile cutoff and implicitly assumes that the contamination rate is below $(100-\gamma)\%$, and methods that do not rely on density estimation (KNN, Isolation Forest) still degrade more gracefully when the contamination level is very low. Making the filtering threshold adaptive to the estimated contamination rate is a natural direction for future work.

\section*{Acknowledgments}
We gratefully acknowledge the support of the CIFAR AI Chairs program. We also thank Mila and the Digital Research Alliance of Canada for providing the computational resources that made this work possible.

\bibliography{references}

@inproceedings{bergmann2019mvtec,
  author    = {Bergmann, Paul and Fauser, Michael and Sattlegger, David and Steger, Carsten},
  title     = {{MVTec AD}: A Comprehensive Real-World Dataset for Unsupervised Anomaly Detection},
  booktitle = {IEEE/CVF Conference on Computer Vision and Pattern Recognition (CVPR)},
  year      = {2019}
}

@inproceedings{zou2022visa,
  author    = {Zou, Yang and Jeong, Jongheon and Pemula, Latha and Zhang, Dongqing and Dabeer, Onkar},
  title     = {{SPot-the-Difference} Self-Supervised Pre-training for Anomaly Detection and Segmentation},
  booktitle = {European Conference on Computer Vision (ECCV)},
  year      = {2022}
}

@inproceedings{ho2020denoisingdiffusionprobabilisticmodels,
  author    = {Ho, Jonathan and Jain, Ajay and Abbeel, Pieter},
  title     = {Denoising Diffusion Probabilistic Models},
  booktitle = {Advances in Neural Information Processing Systems},
  year      = {2020}
}

@book{LiebLoss2001,
  author    = {Elliott H. Lieb and Michael Loss},
  title     = {Analysis},
  edition   = {2nd},
  series    = {Graduate Studies in Mathematics},
  volume    = {14},
  publisher = {American Mathematical Society},
  address   = {Providence, RI},
  year      = {2001},
  isbn      = {978-0-8218-2783-3},
}

@book{HardyLittlewoodPolya1952,
  author    = {Godfrey Harold Hardy and John Edensor Littlewood and George P\'{o}lya},
  title     = {Inequalities},
  edition   = {2nd},
  publisher = {Cambridge University Press},
  address   = {Cambridge},
  year      = {1952},
}

@book{Devroye1987,
  author    = {Luc Devroye},
  title     = {A Course in Density Estimation},
  series    = {Progress in Probability and Statistics},
  volume    = {14},
  publisher = {Birkh\"{a}user},
  address   = {Boston},
  year      = {1987},
  isbn      = {978-0-8176-3365-3},
}

@book{Hall1992,
  author    = {Peter Hall},
  title     = {The Bootstrap and {E}dgeworth Expansion},
  series    = {Springer Series in Statistics},
  publisher = {Springer},
  address   = {New York},
  year      = {1992},
  isbn      = {978-0-387-94575-1},
  doi       = {10.1007/978-1-4612-4384-7},
}

@inproceedings{
mahmood2021multiscale,
title={Multiscale Score Matching for Out-of-Distribution Detection},
author={Ahsan Mahmood and Junier Oliva and Martin Andreas Styner},
booktitle={International Conference on Learning Representations},
year={2021},
url={https://openreview.net/forum?id=xoHdgbQJohv}
}

@InProceedings{Shin_2023_ICCV,
    author    = {Shin, Woosang and Lee, Jonghyeon and Lee, Taehan and Lee, Sangmoon and Yun, Jong Pil},
    title     = {Anomaly Detection using Score-based Perturbation Resilience},
    booktitle = {Proceedings of the IEEE/CVF International Conference on Computer Vision (ICCV)},
    month     = {October},
    year      = {2023},
    pages     = {23372-23382}
}

@inproceedings{DAE_diffusion,
author = {Sattarov, Timur and Schreyer, Marco and Borth, Damian},
title = {Diffusion-Scheduled Denoising Autoencoders for Anomaly Detection in Tabular Data},
year = {2025},
isbn = {9798400714542},
publisher = {Association for Computing Machinery},
address = {New York, NY, USA},
url = {https://doi.org/10.1145/3711896.3736910},
doi = {10.1145/3711896.3736910},
booktitle = {Proceedings of the 31st ACM SIGKDD Conference on Knowledge Discovery and Data Mining V.2},
pages = {2478–2489},
numpages = {12},
location = {Toronto ON, Canada},
series = {KDD '25}
}

@inproceedings{han2022adbench,  
      title={ADBench: Anomaly Detection Benchmark},   
      author={Songqiao Han and Xiyang Hu and Hailiang Huang and Mingqi Jiang and Yue Zhao},  
      booktitle={Neural Information Processing Systems (NeurIPS)},
      year={2022},  
}

@article{kNN,
author = {Ramaswamy, Sridhar and Rastogi, Rajeev and Shim, Kyuseok},
title = {Efficient algorithms for mining outliers from large data sets},
year = {2000},
issue_date = {June 2000},
publisher = {Association for Computing Machinery},
address = {New York, NY, USA},
volume = {29},
number = {2},
issn = {0163-5808},
url = {https://doi.org/10.1145/335191.335437},
doi = {10.1145/335191.335437},
journal = {SIGMOD Rec.},
month = may,
pages = {427–438},
numpages = {12}
}

@inproceedings{ICL,
title={Anomaly Detection for Tabular Data with Internal Contrastive Learning},
author={Tom Shenkar and Lior Wolf},
booktitle={International Conference on Learning Representations},
year={2022},
url={https://openreview.net/forum?id=_hszZbt46bT}
}

@inproceedings{DTE,
title={On Diffusion Modeling for Anomaly Detection},
author={Victor Livernoche and Vineet Jain and Yashar Hezaveh and Siamak Ravanbakhsh},
booktitle={The Twelfth International Conference on Learning Representations},
year={2024},
url={https://openreview.net/forum?id=lR3rk7ysXz}
}

@inproceedings{SLAD,
  title={Fascinating supervisory signals and where to find them: Deep anomaly detection with scale learning},
  author={Xu, Hongzuo and Wang, Yijie and Wei, Juhui and Jian, Songlei and Li, Yizhou and Liu, Ning},
  booktitle={International Conference on Machine Learning},
  pages={38655--38673},
  year={2023},
  organization={PMLR}
}

@incollection{Spencer2025Endotext,
  author    = {Spencer, Carole A.},
  title     = {Assay of Thyroid Hormone and Related Substances},
  booktitle = {Endotext [Internet]},
  editor    = {Feingold, Kenneth R. and Ahmed, S. Faisal and Anawalt, Bradley and Boyce, Alison and others},
  address   = {South Dartmouth (MA)},
  publisher = {MDText.com, Inc.},
  year      = {2025},
  note      = {Last updated July 3, 2025},
  url       = {https://www.ncbi.nlm.nih.gov/books/NBK279113/},
  urldate   = {2025-09-21}
}

@inproceedings{
mcdermott2024a,
title={A Closer Look at {AUROC} and {AUPRC} under Class Imbalance},
author={Matthew B.A. McDermott and Haoran Zhang and Lasse Hyldig Hansen and Giovanni Angelotti and Jack Gallifant},
booktitle={The Thirty-eighth Annual Conference on Neural Information Processing Systems},
year={2024},
url={https://openreview.net/forum?id=S3HvA808gk}
}

@incollection{Dunlap1990ClinicalMethods,
  author    = {Dunlap, Dickson B.},
  title     = {Thyroid Function Tests},
  booktitle = {Clinical Methods: The History, Physical, and Laboratory Examinations},
  editor    = {Walker, H. Kenneth and Hall, William D. and Hurst, J. Willis},
  edition   = {3rd},
  address   = {Boston},
  publisher = {Butterworths},
  year      = {1990},
  url       = {https://www.ncbi.nlm.nih.gov/books/NBK249/},
  urldate   = {2025-09-21}
}

@article{arXiv:2210.06959,
author = {Li, Zhong and Zhu, Yuxuan and Van Leeuwen, Matthijs},
title = {A Survey on Explainable Anomaly Detection},
year = {2023},
issue_date = {January 2024},
publisher = {Association for Computing Machinery},
address = {New York, NY, USA},
volume = {18},
number = {1},
issn = {1556-4681},
url = {https://doi.org/10.1145/3609333},
doi = {10.1145/3609333},
journal = {ACM Trans. Knowl. Discov. Data},
month = sep,
articleno = {23},
numpages = {54}
}

@article{kurtosis_peak,
author = {Peter H. Westfall},
title = {Kurtosis as Peakedness, 1905–2014. R.I.P.},
journal = {The American Statistician},
volume = {68},
number = {3},
pages = {191--195},
year = {2014},
publisher = {ASA Website},
doi = {10.1080/00031305.2014.917055},

    note ={PMID: 25678714},


URL = { 
    
        https://doi.org/10.1080/00031305.2014.917055
    
    

},
eprint = { 
    
        https://doi.org/10.1080/00031305.2014.917055
    
    

}

}

@inproceedings{pr_roc,
author = {Davis, Jesse and Goadrich, Mark},
title = {The relationship between Precision-Recall and ROC curves},
year = {2006},
isbn = {1595933832},
publisher = {Association for Computing Machinery},
address = {New York, NY, USA},
url = {https://doi.org/10.1145/1143844.1143874},
doi = {10.1145/1143844.1143874},
booktitle = {Proceedings of the 23rd International Conference on Machine Learning},
pages = {233–240},
numpages = {8},
location = {Pittsburgh, Pennsylvania, USA},
series = {ICML '06}
}

@article{hyvarinen2005score,
  title={Estimation of non-normalized statistical models by score matching},
  author={Hyv{\"a}rinen, Aapo},
  journal={Journal of Machine Learning Research},
  volume={6},
  number={Apr},
  pages={695--709},
  year={2005}
}

@misc{song2020generativemodelingestimatinggradients,
      title={Generative Modeling by Estimating Gradients of the Data Distribution}, 
      author={Yang Song and Stefano Ermon},
      year={2020},
      eprint={1907.05600},
      archivePrefix={arXiv},
      primaryClass={cs.LG},
      url={https://arxiv.org/abs/1907.05600}, 
}

@ARTICLE{DSM,
  author={Vincent, Pascal},
  journal={Neural Computation}, 
  title={A Connection Between Score Matching and Denoising Autoencoders}, 
  year={2011},
  volume={23},
  number={7},
  pages={1661-1674},
  doi={10.1162/NECO_a_00142}}

@article{AD_def,
  author       = {Guansong Pang and
                  Chunhua Shen and
                  Longbing Cao and
                  Anton van den Hengel},
  title        = {Deep Learning for Anomaly Detection: {A} Review},
  journal      = {CoRR},
  volume       = {abs/2007.02500},
  year         = {2020},
  url          = {https://arxiv.org/abs/2007.02500},
  eprinttype    = {arXiv},
  eprint       = {2007.02500},
  bibsource    = {dblp computer science bibliography, https://dblp.org}
}

@article{financial_example,
  author       = {Meng{-}Chieh Lee and
                  Yue Zhao and
                  Aluna Wang and
                  Pierre Jinghong Liang and
                  Leman Akoglu and
                  Vincent S. Tseng and
                  Christos Faloutsos},
  title        = {AutoAudit: Mining Accounting and Time-Evolving Graphs},
  journal      = {CoRR},
  volume       = {abs/2011.00447},
  year         = {2020},
  url          = {https://arxiv.org/abs/2011.00447},
  eprinttype    = {arXiv},
  eprint       = {2011.00447},
  bibsource    = {dblp computer science bibliography, https://dblp.org}
}

@article{medical_example,
  author       = {Wenyuan Li and
                  Yunlong Wang and
                  Yong Cai and
                  Corey W. Arnold and
                  Emily Zhao and
                  Yilian Yuan},
  title        = {Semi-supervised Rare Disease Detection Using Generative Adversarial
                  Network},
  journal      = {CoRR},
  volume       = {abs/1812.00547},
  year         = {2018},
  url          = {http://arxiv.org/abs/1812.00547},
  eprinttype    = {arXiv},
  eprint       = {1812.00547},
  bibsource    = {dblp computer science bibliography, https://dblp.org}
}

@article{manufacture,
  author       = {Juan C. Quiroz and
                  Norman Bin Mariun and
                  Mohammad Rezazadeh Mehrjou and
                  Mahdi Izadi and
                  Norhisam Misron and
                  Mohd Amran Mohd Radzi},
  title        = {Fault Detection of Broken Rotor Bar in {LS-PMSM} Using Random Forests},
  journal      = {CoRR},
  volume       = {abs/1711.02510},
  year         = {2017},
  url          = {http://arxiv.org/abs/1711.02510},
  eprinttype    = {arXiv},
  eprint       = {1711.02510},
  bibsource    = {dblp computer science bibliography, https://dblp.org}
}

@INPROCEEDINGS{density_based,
  author={Papadimitriou, S. and Kitagawa, H. and Gibbons, P.B. and Faloutsos, C.},
  booktitle={Proceedings 19th International Conference on Data Engineering (Cat. No.03CH37405)}, 
  title={LOCI: fast outlier detection using the local correlation integral}, 
  year={2003},
  volume={},
  number={},
  pages={315-326},
  doi={10.1109/ICDE.2003.1260802}}

@article{stats_method,
   title={Minimum covariance determinant and extensions},
   volume={10},
   ISSN={1939-0068},
   url={http://dx.doi.org/10.1002/wics.1421},
   DOI={10.1002/wics.1421},
   number={3},
   journal={WIREs Computational Statistics},
   publisher={Wiley},
   author={Hubert, Mia and Debruyne, Michiel and Rousseeuw, Peter J.},
   year={2017},
   month=dec }

@inproceedings{
DAGMM,
title={Deep Autoencoding Gaussian Mixture Model for Unsupervised Anomaly Detection},
author={Bo Zong and Qi Song and Martin Renqiang Min and Wei Cheng and Cristian Lumezanu and Daeki Cho and Haifeng Chen},
booktitle={International Conference on Learning Representations},
year={2018},
url={https://openreview.net/forum?id=BJJLHbb0-},
}

@article{ECOD,
  author       = {Zheng Li and
                  Yue Zhao and
                  Xiyang Hu and
                  Nicola Botta and
                  Cezar Ionescu and
                  George H. Chen},
  title        = {{ECOD:} Unsupervised Outlier Detection Using Empirical Cumulative
                  Distribution Functions},
  journal      = {CoRR},
  volume       = {abs/2201.00382},
  year         = {2022},
  url          = {https://arxiv.org/abs/2201.00382},
  eprinttype    = {arXiv},
  eprint       = {2201.00382},
  bibsource    = {dblp computer science bibliography, https://dblp.org}
}

@article{PCA_AD,
  title={A novel anomaly detection scheme based on principal component classifier},
  author={Shyu, Mei-Ling and Chen, Shu-Ching and Sarinnapakorn, Kanoksri and Chang, LiWu},
  journal={Proceedings of the IEEE Foundations and New Directions of Data Mining Workshop},
  pages={172--179},
  year={2003}
}

@article{OCSVM,
  title={Estimating the support of a high-dimensional distribution},
  author={Sch{\"o}lkopf, Bernhard and Platt, John C and Shawe-Taylor, John and Smola, Alex J and Williamson, Robert C},
  journal={Neural Computation},
  volume={13},
  number={7},
  pages={1443--1471},
  year={2001}
}

@article{MCD,
  title={A fast algorithm for the minimum covariance determinant estimator},
  author={Rousseeuw, Peter J and Driessen, Katrien Van},
  journal={Technometrics},
  volume={41},
  number={3},
  pages={212--223},
  year={1999}
}

@inproceedings{LOF,
  title={LOF: identifying density-based local outliers},
  author={Breunig, Markus M and Kriegel, Hans-Peter and Ng, Raymond T and Sander, J{\"o}rg},
  booktitle={Proceedings of the 2000 ACM SIGMOD International Conference on Management of Data},
  pages={93--104},
  year={2000}
}

@article{LODA,
  title={Loda: Lightweight on-line detector of anomalies},
  author={Pevn{\`y}, Tom{\'a}{\v{s}}},
  journal={Machine Learning},
  volume={102},
  number={2},
  pages={275--304},
  year={2016}
}

@inproceedings{FeatureBagging,
  title={Feature bagging for outlier detection},
  author={Lazarevic, Aleksandar and Kumar, Vipin},
  booktitle={Proceedings of the 11th ACM SIGKDD International Conference on Knowledge Discovery in Data Mining},
  pages={157--166},
  year={2005}
}

@inproceedings{HBOS,
  title={Histogram-based outlier score (HBOS): A fast unsupervised anomaly detection algorithm},
  author={Goldstein, Markus and Dengel, Andreas},
  booktitle={KI-2012: Poster and Demo Track},
  pages={59--63},
  year={2012}
}

@article{COPOD,
  title={COPOD: Copula-based outlier detection},
  author={Li, Zheng and Zhao, Yue and Botta, Nicola and Ionescu, Cezar and Hu, Xiyang},
  journal={2020 IEEE International Conference on Data Mining (ICDM)},
  pages={1118--1123},
  year={2020}
}

@article{CBLOF,
  title={Discovering cluster-based local outliers},
  author={He, Zengyou and Xu, Xiaofei and Deng, Shengchun},
  journal={Pattern Recognition Letters},
  volume={24},
  number={9-10},
  pages={1641--1650},
  year={2003}
}

@INPROCEEDINGS{isolation_forest,
  author={Liu, Fei Tony and Ting, Kai Ming and Zhou, Zhi-Hua},
  booktitle={2008 Eighth IEEE International Conference on Data Mining}, 
  title={Isolation Forest}, 
  year={2008},
  volume={},
  number={},
  pages={413-422},
  doi={10.1109/ICDM.2008.17}}

@InProceedings{DSVDD,
  title = 	 {Deep One-Class Classification},
  author =       {Ruff, Lukas and Vandermeulen, Robert and Goernitz, Nico and Deecke, Lucas and Siddiqui, Shoaib Ahmed and Binder, Alexander and M{\"u}ller, Emmanuel and Kloft, Marius},
  booktitle = 	 {Proceedings of the 35th International Conference on Machine Learning},
  pages = 	 {4393--4402},
  year = 	 {2018},
  editor = 	 {Dy, Jennifer and Krause, Andreas},
  volume = 	 {80},
  series = 	 {Proceedings of Machine Learning Research},
  month = 	 {10--15 Jul},
  publisher =    {PMLR},
  url = 	 {https://proceedings.mlr.press/v80/ruff18a.html}
}

@book{JohnsonKotzBalakrishnan1995,
  author    = {Norman L. Johnson and Samuel Kotz and N. Balakrishnan},
  title     = {Continuous Univariate Distributions},
  volume    = {2},
  edition   = {2nd},
  publisher = {Wiley},
  address   = {New York},
  year      = {1995},
}

@inproceedings{
GOAD,
title={Classification-Based Anomaly Detection for General Data},
author={Liron Bergman and Yedid Hoshen},
booktitle={International Conference on Learning Representations},
year={2020},
url={https://openreview.net/forum?id=H1lK_lBtvS}
}

@inproceedings{Neutral_AD,
  title={Neural transformation learning for deep anomaly detection beyond images},
  author={Qiu, Chen and Pfrommer, Timo and Kloft, Marius and Mandt, Stephan and Rudolph, Maja},
  booktitle={International Conference on Machine Learning},
  pages={8703--8714},
  year={2021},
  organization={PMLR}
}

@inproceedings{tarvainen2017mean,
  author    = {Tarvainen, Antti and Valpola, Harri},
  title     = {Mean Teachers are Better Role Models: Weight-Averaged Consistency Targets Improve Semi-Supervised Deep Learning Results},
  booktitle = {Advances in Neural Information Processing Systems (NeurIPS)},
  year      = {2017}
}

@inproceedings{laine2017temporal,
  author    = {Laine, Samuli and Aila, Timo},
  title     = {Temporal Ensembling for Semi-Supervised Learning},
  booktitle = {International Conference on Learning Representations (ICLR)},
  year      = {2017}
}

@inproceedings{grill2020byol,
  author    = {Grill, Jean-Bastien and Strub, Florian and Altch{\'e}, Florent and Tallec, Corentin and Richemond, Pierre H. and Buchatskaya, Elena and Doersch, Carl and Pires, Bernardo Avila and Guo, Zhaohan Daniel and Azar, Mohammad Gheshlaghi and Piot, Bilal and Kavukcuoglu, Koray and Munos, R{\'e}mi and Valko, Michal},
  title     = {Bootstrap Your Own Latent: A New Approach to Self-Supervised Learning},
  booktitle = {Advances in Neural Information Processing Systems (NeurIPS)},
  year      = {2020}
}

@inproceedings{he2020moco,
  author    = {He, Kaiming and Fan, Haoqi and Wu, Yuxin and Xie, Saining and Girshick, Ross},
  title     = {Momentum Contrast for Unsupervised Visual Representation Learning},
  booktitle = {IEEE/CVF Conference on Computer Vision and Pattern Recognition (CVPR)},
  year      = {2020}
}

@inproceedings{caron2021dino,
  author    = {Caron, Mathilde and Touvron, Hugo and Misra, Ishan and J{\'e}gou, Herv{\'e} and Mairal, Julien and Bojanowski, Piotr and Joulin, Armand},
  title     = {Emerging Properties in Self-Supervised Vision Transformers},
  booktitle = {IEEE/CVF International Conference on Computer Vision (ICCV)},
  year      = {2021}
}

@article{gorishniy2021revisiting,
  title={Revisiting deep learning models for tabular data},
  author={Gorishniy, Yury and Rubachev, Ivan and Khrulkov, Valentin and Babenko, Artem},
  journal={Advances in neural information processing systems},
  volume={34},
  pages={18932--18943},
  year={2021}
}

@article{polyak1992acceleration,
  author    = {Polyak, Boris T. and Juditsky, Anatoli B.},
  title     = {Acceleration of Stochastic Approximation by Averaging},
  journal   = {SIAM Journal on Control and Optimization},
  volume    = {30},
  number    = {4},
  pages     = {838--855},
  year      = {1992}
}

@misc{siméoni2025dinov3,
      title={DINOv3}, 
      author={Oriane Siméoni and Huy V. Vo and Maximilian Seitzer and Federico Baldassarre and Maxime Oquab and Cijo Jose and Vasil Khalidov and Marc Szafraniec and Seungeun Yi and Michaël Ramamonjisoa and Francisco Massa and Daniel Haziza and Luca Wehrstedt and Jianyuan Wang and Timothée Darcet and Théo Moutakanni and Leonel Sentana and Claire Roberts and Andrea Vedaldi and Jamie Tolan and John Brandt and Camille Couprie and Julien Mairal and Hervé Jégou and Patrick Labatut and Piotr Bojanowski},
      year={2025},
      eprint={2508.10104},
      archivePrefix={arXiv},
      primaryClass={cs.CV},
      url={https://arxiv.org/abs/2508.10104}, 
}
\bibliographystyle{plainnat}

\newpage
\appendix
\clearpage

\appendix
\section*{Appendix}
\startcontents[appendix]
\printcontents[appendix]{}{1}{}

\clearpage

\section{Broader Impact}
\label{app:broader_impact}

Anomaly detection has broad beneficial applications, including fraud and intrusion detection, medical diagnosis support, industrial quality control, and environmental monitoring. The methods developed in this paper are general-purpose and not tied to any specific deployment, but we briefly discuss potential concerns.

\paragraph{Misuse of anomaly detection systems.} Any anomaly detection system, including K-DSM, can be misused when deployed in contexts involving people rather than inanimate measurements. A system trained to flag unusual behaviour in a population may encode and amplify historical biases present in the training data, causing certain demographic groups to be systematically over-flagged as anomalous. This risk is particularly acute in high-stakes settings such as credit scoring, hiring, law enforcement, or health insurance. A separate misuse vector is deliberate deployment for surveillance: an unsupervised anomaly detector can be repurposed to identify individuals whose behaviour deviates from a prescribed norm, enabling authoritarian control or the targeting of whistleblowers, journalists, and political dissidents. We urge practitioners to subject any application of anomaly detection to individuals to rigorous fairness audits, to maintain meaningful human oversight of automated decisions, and to consider whether the deployment context is consistent with respect for civil liberties.

\paragraph{Positive impact.} Our work reduces the computational cost of deploying score-based anomaly detection relative to multi-scale alternatives, which lowers the barrier to adoption in resource-constrained settings such as embedded industrial sensors or clinical decision support in under-resourced healthcare systems.

\section{LLM Use}
\label{app:llm_use}

Large language models were used in three supporting roles during this research. First, as a reasoning aid for mathematical proofs: LLM outputs were used to explore proof strategies and identify relevant lemmas, but every step was independently verified by the authors, who take full responsibility for the correctness of all theoretical results. Second, as a writing assistant for drafting and editing manuscript text. Third, as a coding assistant for implementing baseline methods, evaluation pipelines, and plotting utilities. In all cases, LLM-generated content was reviewed and either accepted, modified, or discarded by the authors.

\section{Interpretability}
\label{sec:interpretability}

A practical advantage of score matching for anomaly detection is that it yields an \emph{interpretable} vector field. At the training optimum, the learned score network approximates
\[
s_\theta(x) \;\approx\; \nabla_x \log p_\theta(x),
\]
i.e., the local direction of steepest increase in modeled normality. In practice $s_\theta$ is a finite-capacity approximation, so this equality holds only approximately; the interpretations below should be understood as reflecting the model's learned view of normality rather than the true data score. With anomaly score $A(x)=\lVert s_\theta(x)\rVert_2$, each component $s_{\theta,i}(x)$ quantifies both responsibility and remedy: its magnitude ranks the influence of feature $i$ on the alarm, and its sign prescribes how to change that feature to reduce $A(x)$ ($s_{\theta,i}(x)>0 \Rightarrow$ increase $i$; $s_{\theta,i}(x)<0 \Rightarrow$ decrease $i$). This gives an immediate, actionable counterfactual $x' = x + \eta s_\theta(x)$ that moves toward higher density. Explainability is critical in anomaly detection because anomalies are rare, ambiguous, and often tied to high-stakes situations; practitioners must audit alerts, decide whether to intervene, and know which sensor or attribute to inspect, and regulators increasingly require a clear justification. As emphasized by \citet{arXiv:2210.06959}, many explanation tools added on top of existing detectors tend to be inaccurate, fragile, and computationally expensive. DSM addresses these concerns because the detector and the explanation are the same: the gradient that defines $A(x)$ also explains it. This alignment ensures that what the model uses to decide is exactly what we show as explanation, removes the risk of mismatch with external tools, adds almost no extra computation, and yields actionable, signed per-feature directions. We therefore adopt $s_\theta(x)$ as a reliable built-in explanation for the model's decisions, while noting that it explains the model’s own view of normality rather than an absolute ground truth. 

\begin{figure}[h]
  \centering
  \includegraphics[width=0.85\linewidth]{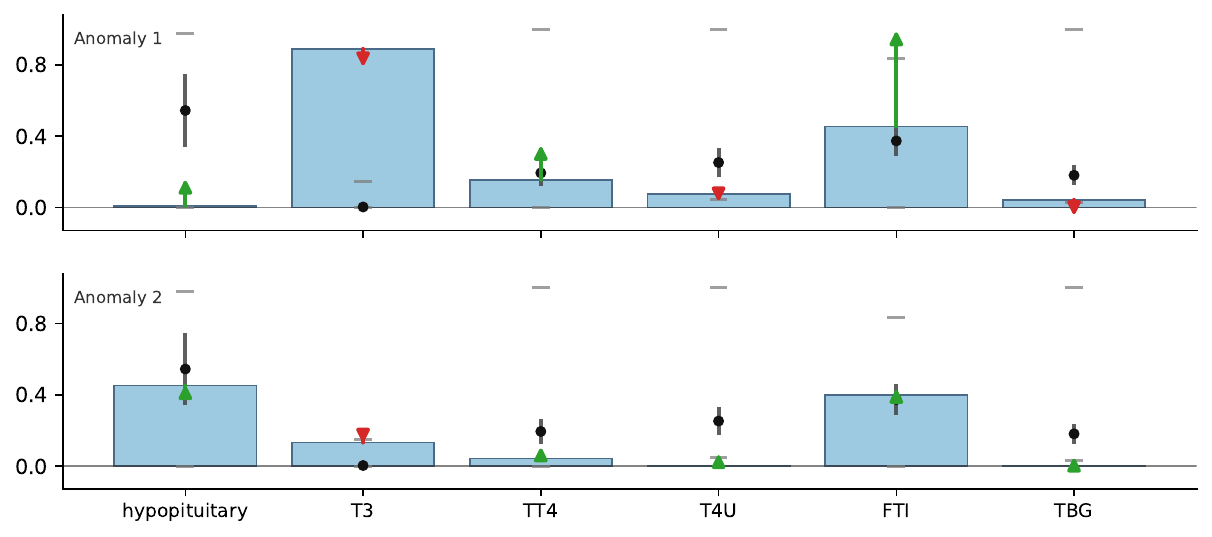}

  \caption{Feature values with directional attributions for two anomaly \texttt{thyroid} cases. Arrows encode suggested adjustments based on the estimated score (green/red; magnitude $\propto |score|$). Grey ticks denote normal min/max; black whiskers show mean ± 1 SD and dots the mean, estimated from normal training data. }
  \vspace{-1em}
  \label{fig:thyroid-grid}
\end{figure}

\paragraph{Tabular case (UCI \texttt{thyroid}).}
We illustrate the interpretability of score matching on tabular data using the \texttt{thyroid} dataset in Figure~\ref{fig:thyroid-grid}. For each example, we plot raw feature values (bars) together with per-feature arrows derived from the estimated score vector $s_\theta(x)$; the sign encodes the direction the model would adjust that feature to move x toward higher training-data density, and the arrow length is proportional to $|s_i(x)|$. To provide clinical context, grey ticks mark the empirical normal range (min/max) and black whiskers denote mean with standard deviation across normal training samples, with the mean shown as a dot. Two representative anomalies highlight how s(x) captures what the model “thinks is wrong.” Anomaly 1 shows a low hypopituitary indicator alongside elevated T3, while its FTI is not correspondingly high; the score suggests a substantial increase in FTI (achievable via higher TT4 or uptake/lower TBG), making the profile more self-consistent with the hyperthyroid-like pattern in the training normals, where T3, TT4/uptake, and thus FTI tend to rise together. Anomaly 2 presents low T4U and low TBG, a physiologically discordant pair because T3 uptake (T4U) varies inversely with TBG; reduced TBG typically increases uptake, not lowers it. The score highlights these features as principal drivers of the anomaly, pushing them toward a configuration seen in normal data. We emphasize that s(x) explains movement toward the data manifold learned from training normals, which often (not always) aligns with textbook physiology. The qualitative interpretations above are consistent with standard thyroid-function test relationships (e.g., the Free Thyroxine Index reflecting TT4 and binding/uptake, and the inverse coupling of T3 uptake and TBG in binding-affecting states) as summarized in clinical endocrinology references \citep{Spencer2025Endotext,Dunlap1990ClinicalMethods}.

\begin{figure}[h]
  \centering
  \begin{subfigure}[t]{.22\linewidth}
    \includegraphics[width=\linewidth]{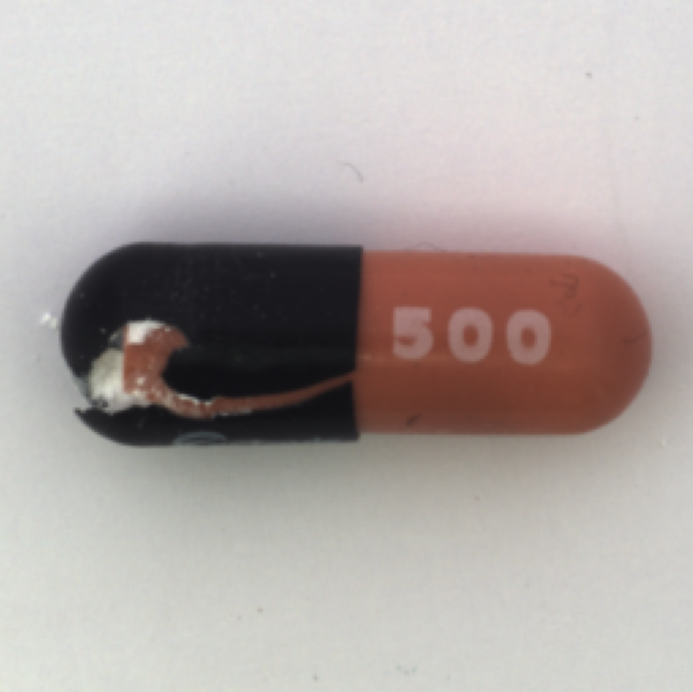}
    \caption*{(a) Input}
  \end{subfigure}\hfill
  \begin{subfigure}[t]{.22\linewidth}
    \includegraphics[width=\linewidth]{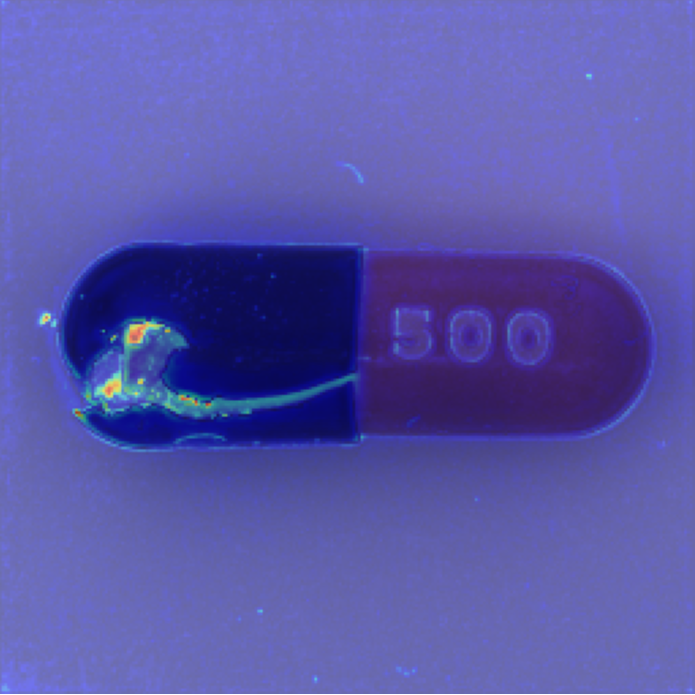}
    \caption*{(b) $\|s_\theta(x)\|$}
  \end{subfigure}\hfill
  \begin{subfigure}[t]{.22\linewidth}
    \includegraphics[width=\linewidth]{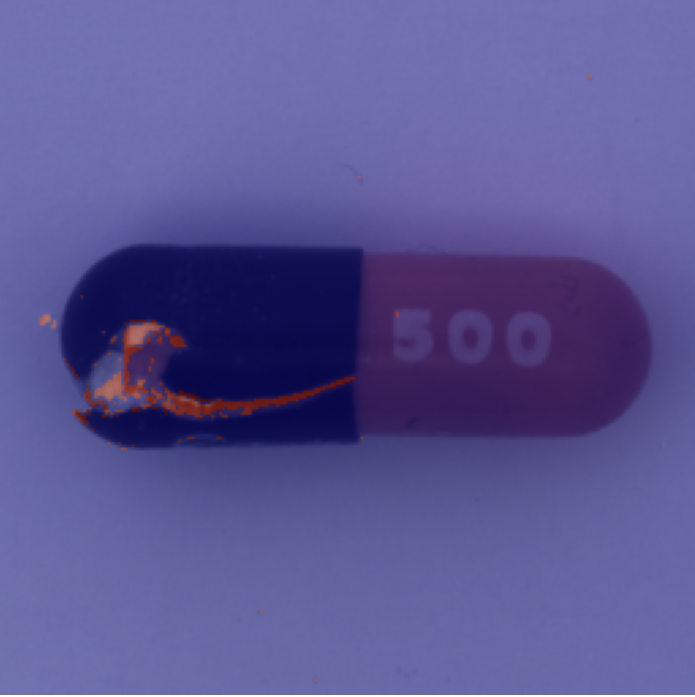}
    \caption*{(c) Overlay}
  \end{subfigure}\hfill
  \begin{subfigure}[t]{.22\linewidth}
    \includegraphics[width=\linewidth]{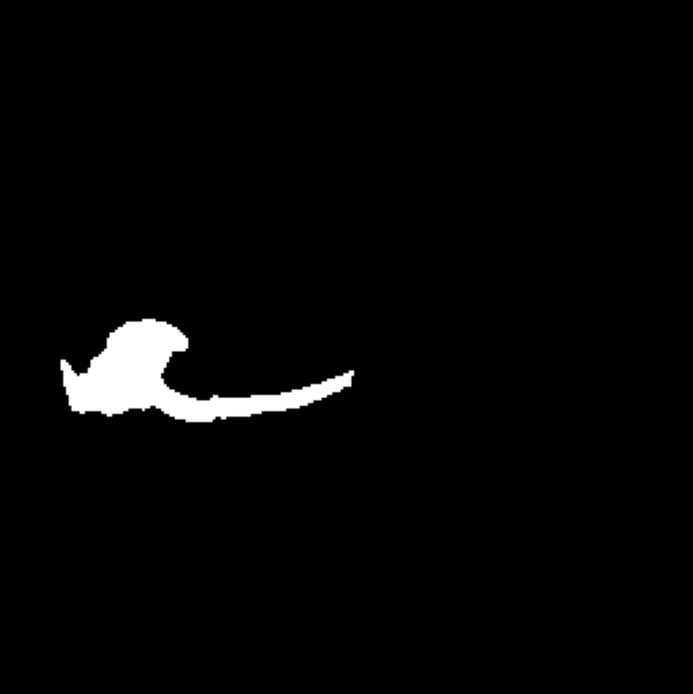}
    \caption*{(d) Mask}
  \end{subfigure}
  \vspace{-0.4em}
  \caption{Score-based localization on MVTec–capsule.
(a) Input image, (b) per-pixel score magnitude $\|s_\theta(x)\|$, (c) overlay of scores on the input, and (d) ground-truth defect mask. A DSM model trained only on normal images highlights the defect region. }
  \label{fig:capsule-vis}
\end{figure}

\paragraph{Vision case (\texttt{MVTec}–capsule).}
We also illustrate the interpretability of score-matching for image anomaly detection. As seen in Figure~\ref{fig:capsule-vis}, the score field $s_\theta(x)$ exposes regions of the image that deviate from the learned normal manifold of defect-free capsules. Trained only on normals, the per-pixel magnitude $\|s_\theta(x)\|$ concentrates along the crack and spilled material near the seam, while remaining low on intact areas such as the printed “500” and the uniform background. The overlay makes this concentration visually apparent, and the ground-truth mask confirms that the saliency peaks align with the annotated defect. No defect labels or auxiliary segmentation modules are involved: the signal arises directly from the score learned on normals. Although we display magnitude for clarity, the full vector field is actionable: moving locally opposite the salient gradients would attenuate sharp luminance and texture discontinuities along the seam, nudging the image toward an intact, high-likelihood appearance. This is a simple example, single-scale score matching does not capture high-dimensional image structure particularly well; multi-scale formulations are better suited for such data, but the example illustrates the general interpretability argument for score-based methods.

\section{Proofs for Proposition~\ref{prop:kurtosis-exact}}
\label{app:kurtosis-proofs}

\subsection{Part (a): Generalised Gaussian Distribution}

The generalised Gaussian (GGD) with shape $\beta>0$ and location zero has density $p(x;\beta,a) \propto \exp(-(|x|/a)^\beta)$. Setting $a(\beta)=[\Gamma(1/\beta)/\Gamma(3/\beta)]^{1/2}$ gives unit variance.

\paragraph{Kurtosis.}
Using the substitution $u=(x/a)^\beta$, the $2k$-th moment of the GGD reduces to
\[
  \mathbb{E}[X^{2k}] = \frac{a^{2k}\,\Gamma((2k+1)/\beta)}{\Gamma(1/\beta)}.
\]
Setting $k{=}2$ and dividing by $(\mathbb{E}[X^2])^2$ (the scale $a$ cancels under unit variance):
\begin{equation}
  \kappa(\beta) = \frac{\Gamma(5/\beta)\,\Gamma(1/\beta)}{\Gamma(3/\beta)^2}.
  \label{eq:ggd-kurtosis}
\end{equation}
Special cases: Laplace ($\beta{=}1$): $\kappa=24/4=6$; Gaussian ($\beta{=}2$): $\kappa=3$; Uniform ($\beta{\to}\infty$): $\kappa\to 9/5=1.8$.

\paragraph{Strict decrease of $\kappa(\beta)$.}
We give a fully elementary proof via the Weierstrass product
representation of the gamma function. Setting $x=1/\beta>0$, let
$L(x)=\ln\kappa(1/x)=\ln\Gamma(5x)+\ln\Gamma(x)-2\ln\Gamma(3x)$;
strict monotonicity of $L$ in $x$ is equivalent to strict
monotonicity of $\kappa$ in $\beta$ (with opposite sign). Weierstrass
gives, for $z>0$,
\[
  \ln\Gamma(z)\;=\;-\ln z - \gamma z
  + \sum_{k=1}^{\infty}\Bigl[\tfrac{z}{k}-\ln\!\bigl(1+\tfrac{z}{k}\bigr)\Bigr].
\]
Substituting $z\in\{5x,x,3x\}$ and collecting terms, the linear and
$-\ln z$ contributions combine to the constant $\ln(9/5)$ (the
Euler constants and $\sum z/k$ terms cancel because
$5x+x-2\cdot 3x=0$), leaving
\begin{equation}
  L(x)\;=\;\ln\tfrac{9}{5}
  \;+\;\sum_{k=1}^{\infty}\ln\frac{(1+3x/k)^{2}}{(1+5x/k)(1+x/k)}.
  \label{eq:weierstrass-L}
\end{equation}
Differentiating term by term (justified by uniform convergence on
compact subsets of $x>0$, since the $k$-th term is $O(x^{2}/k^{2})$):
\[
  \frac{dL}{dx}
  \;=\;\sum_{k=1}^{\infty}\!\left[\frac{6}{k+3x}-\frac{5}{k+5x}-\frac{1}{k+x}\right]
  \;=\;\sum_{k=1}^{\infty}\frac{8\,k\,x}{(k+x)(k+3x)(k+5x)},
\]
where the second equality follows from direct algebraic combination
over the common denominator
$(k+x)(k+3x)(k+5x)$: expanding numerators gives
$6(k{+}5x)(k{+}x)-5(k{+}3x)(k{+}x)-(k{+}3x)(k{+}5x)=8kx$. Every summand
is strictly positive for $x>0$, so $L'(x)>0$; equivalently
$d\kappa/d\beta<0$ for all $\beta>0$. \qed

\paragraph{Strict decrease of $R_\tau(\beta)$ in the tail regime.}
Unlike kurtosis, strict monotonicity of the standardised tail
radius does \emph{not} hold for all $\tau$: near the bulk
($\tau\approx 0.5$), lighter-tailed GGDs have \emph{larger} standardised
quantiles because they concentrate mass near the mode, and this
bulk effect can dominate. The claim therefore has to be localised
to small $\tau$.

Write $R_\tau(\beta)=q(\beta,\tau)/v(\beta)^{1/2}$, where
$q(\beta,\tau)=Q_{1-\tau/2}$ of the unit-scale density
$p(x;\beta)\propto e^{-|x|^{\beta}}$ and
$v(\beta)=\Gamma(3/\beta)/\Gamma(1/\beta)$ is its variance. As
$\tau\to 0$, Laplace's method applied to the survival function
$\bar F(x;\beta)\sim c(\beta)\,x^{1-\beta}e^{-x^{\beta}}$ gives the
classical tail asymptotic $q(\beta,\tau)\sim\bigl(\ln(2/\tau)\bigr)^{1/\beta}$,
whereas $v(\beta)$ stays bounded as $\beta\downarrow 0$ and decreases
monotonically in $\beta$ on the relevant range. Thus for every fixed
$\beta_{1}<\beta_{2}$ in any compact sub-interval of $(0,\infty)$,
there exists $\tau_{0}(\beta_{1},\beta_{2})>0$ such that
$R_\tau(\beta_{1})>R_\tau(\beta_{2})$ for all
$\tau\in(0,\tau_{0})$.
Taking the infimum of $\tau_{0}$ over the practical range
$\beta\in[0.5,50]$ gives a uniform threshold
$\tau^{*}_{\mathrm{GGD}}$. Direct numerical computation of
$R_\tau(\beta)$ over a dense grid in this range determines
$\tau^{*}_{\mathrm{GGD}}\approx 0.028$, so every choice
$\tau\in(0,0.02]$ is safely in the strictly-monotone regime.

\paragraph{Composition.}
Combining the two monotonicities: on $\tau\in(0,\tau^{*}_{\mathrm{GGD}})$
both $\kappa(\beta)$ and $R_\tau(\beta)$ are strictly decreasing in
$\beta$, so the composite $R_\tau\circ\kappa^{-1}$ is strictly
\emph{increasing} in $\kappa$. \qed

\subsection{Part (b): Student-$t$ Distribution}

The Student-$t$ with $\nu>4$ has variance $\nu/(\nu-2)$. The general even moment follows from the Beta-function integral $\int_0^\infty t^{k-1/2}(1+t)^{-(\nu+1)/2}dt=B(k{+}1/2,\,(\nu-2k)/2)$, giving $\mathbb{E}[X^{2k}]=\nu^k\Gamma(k{+}1/2)\Gamma((\nu-2k)/2)/(\sqrt{\pi}\,\Gamma(\nu/2))$. For $k{=}1,2$ and applying the gamma recurrence $\Gamma(z+1)=z\Gamma(z)$:
\[
  \mathbb{E}[X^2] = \frac{\nu}{\nu-2}, \qquad \mathbb{E}[X^4] = \frac{3\nu^2}{(\nu-2)(\nu-4)}.
\]
The Pearson kurtosis is therefore
\begin{equation}
  \kappa(\nu) = \frac{\mathbb{E}[X^4]}{(\mathbb{E}[X^2])^2} = \frac{3(\nu-2)}{\nu-4} = 3+\frac{6}{\nu-4}.
  \label{eq:t-kurtosis}
\end{equation}
Kurtosis is scale-invariant, so the standardised version $Z=X\sqrt{(\nu-2)/\nu}$ has the same $\kappa$.

\paragraph{Strict decrease of $\kappa(\nu)$.}
$d\kappa/d\nu=-6/(\nu-4)^2<0$ for all $\nu>4$. \qed

\paragraph{Strict decrease of $R_\tau(\nu)$ in the tail regime.}
As in the GGD case, \emph{raw} (unit-scale) Student-$t$ quantiles
are strictly decreasing in $\nu$ for every fixed
$\tau\in(0,1)$: the family is stochastically ordered, because
$\bar{F}(x;\nu_{1})>\bar{F}(x;\nu_{2})$ for $\nu_{1}<\nu_{2}$ and all
$x>0$ \citep{JohnsonKotzBalakrishnan1995}. The standardised radius
multiplies the raw quantile by the scaling factor
$\sqrt{(\nu-2)/\nu}$, which is strictly \emph{increasing} in $\nu$
and therefore pushes in the opposite direction. For small $\tau$
the raw-quantile decrease dominates, but near the bulk
(large $\tau$) the rescaling can win and strict monotonicity fails.
The classical Student-$t$ tail bound
$\bar{F}_\nu(x)\sim c_{\nu}x^{-\nu}$ as $x\to\infty$ yields
$q(\nu,\tau)\sim (2c_{\nu}/\tau)^{1/\nu}$, so for each fixed pair
$4<\nu_{1}<\nu_{2}$ there is a threshold $\tau_{0}(\nu_{1},\nu_{2})>0$
below which the raw-quantile ratio exceeds
$\sqrt{\nu_{2}(\nu_{1}-2)/(\nu_{1}(\nu_{2}-2))}$ and the standardised
radius is strictly decreasing in $\nu$. Taking the infimum over
$\nu\in(4,\infty)$ (equivalently over the compact kurtosis range
used in practice) gives a uniform
$\tau^{*}_{t}$. A numerical sweep over
$\nu\in[4.1,500]$ gives
$\tau^{*}_{t}\approx 0.031$, so any choice
$\tau\in(0,0.02]$ lies strictly inside the monotone regime.

\paragraph{Composition.}
$\kappa(\nu)$ is invertible with $\nu(\kappa)=4+6/(\kappa-3)$. On
$\tau\in(0,\tau^{*}_{t})$ both $\kappa$ and $R_\tau$ decrease strictly
in $\nu$, so the composite $R_\tau(\nu(\kappa))$ is strictly
\emph{increasing} in $\kappa$. \qed

\subsection{Histogram Rearrangement}
\label{app:rearrangement}

Our histogram rearrangement is the discrete analogue of the \emph{symmetric decreasing rearrangement} \citep{LiebLoss2001,HardyLittlewoodPolya1952}. The rearrangement $f^*$ of a density $f$ is the unique function that is symmetric about zero, non-increasing on $[0,\infty)$, and equimeasurable with $f$:
\[
  |\{x : f^*(x)>t\}| = |\{x : f(x)>t\}| \quad \forall\, t>0.
\]
Equimeasurability preserves all $L^p$ norms of $f$, but \emph{relocates mass in $x$}, thereby changing the distribution's moments. This is the intended effect: kurtosis computed on $f^*$ reflects only the rate of decay from the mode, free of confounders such as bimodality and skewness. As $B\to\infty$ and $n\to\infty$, the rearranged histogram converges to $f^*$ in $L^1$ by histogram consistency \citep{Devroye1987} and the $L^p$-contraction of the rearrangement \citep[Theorem~3.4]{LiebLoss2001}.


\begin{algorithm}[t]
\caption{Histogram Rearrangement}
\label{alg:rearrangement_new}
\begin{algorithmic}[1]
\Require Data vector $\mathbf{x}$, number of bins $B$
\Ensure Transformed vector $\mathbf{x}^*$

\State \textbf{Histogram construction}
\State Compute histogram counts $(h_1,\dots,h_B)$ and bin edges
\State Assign each sample to a bin index $b_i \in \{1,\dots,B\}$

\vspace{0.5em}
\State \textbf{Rank bins by frequency}
\State Compute permutation $\pi$ such that
\Statex \hspace{1.5em} $h_{\pi(1)} \ge h_{\pi(2)} \ge \cdots \ge h_{\pi(B)}$

\vspace{0.5em}
\State \textbf{Construct symmetric rank embedding}
\State Initialize $c^*_{\pi(1)} \gets 0$
\State Set $s \gets 1$
\For{$j = 2$ to $B$}
    \If{$j$ is even}
        \State $c^*_{\pi(j)} \gets +s$
    \Else
        \State $c^*_{\pi(j)} \gets -s$
        \State $s \gets s + 1$
    \EndIf
\EndFor

\vspace{0.5em}
\State \textbf{Transform data}
\For{each sample $x_i$}
    \State $x_i^* \gets c^*_{b_i}$
\EndFor

\Return $\mathbf{x}^*$
\end{algorithmic}
\end{algorithm}

\section{Derivation of the Cornish--Fisher Tail-Radius Expansion}
\label{app:cornish-fisher}
 
We give the full derivation of
Proposition~\ref{prop:cornish-fisher}.  The strategy is standard:
start from the Edgeworth expansion of the CDF, then invert it to
obtain a quantile expansion.
 
\paragraph{Setup.}
Let $X$ be a random variable with a symmetric, standardised
($\mu=0$, $\sigma^2=1$) density.  Because $X$ is symmetric, all
odd cumulants vanish: $\kappa_3 = \kappa_5 = \cdots = 0$.  The
first non-Gaussian cumulant is the excess kurtosis
$\kappa_4 = \kappa - 3$, where $\kappa = \mathbb{E}[X^4]$.
 
\paragraph{Step 1: Edgeworth expansion of the CDF.}
The Edgeworth expansion~\citep{Hall1992} expresses the CDF $F(x)$ of
$X$ as a perturbation of the standard normal CDF $\Phi(x)$ in terms
of cumulants.  Retaining only the leading term and using
$\kappa_3 = 0$:
\begin{equation}
  F(x) \;=\; \Phi(x)
  \;-\; \frac{\kappa - 3}{24}\,\mathrm{He}_3(x)\,\phi(x)
  \;+\; O\!\bigl((\kappa-3)^2\bigr),
  \label{eq:edgeworth}
\end{equation}
where $\phi(x)$ is the standard normal density and
$\mathrm{He}_3(x) = x^3 - 3x$ is the third probabilist's Hermite
polynomial.
 
\paragraph{Step 2: Inversion to obtain the quantile function.}
Fix a probability level $p = 1 - \tau/2$ and let $z_\tau =
\Phi^{-1}(p)$ be the corresponding Gaussian quantile.  We seek the
quantile $R_\tau = F^{-1}(p)$.  Write $R_\tau = z_\tau + \delta$
where $\delta$ is a small correction.  Substituting into
\eqref{eq:edgeworth} and imposing $F(R_\tau) = p = \Phi(z_\tau)$:
\begin{equation}
  \Phi(z_\tau + \delta)
  \;-\; \frac{\kappa - 3}{24}\,\mathrm{He}_3(z_\tau + \delta)\,
        \phi(z_\tau + \delta)
  \;+\; O\!\bigl((\kappa-3)^2\bigr)
  \;=\; \Phi(z_\tau).
\end{equation}
Expanding $\Phi(z_\tau + \delta) \approx \Phi(z_\tau) +
\delta\,\phi(z_\tau)$ and replacing $\mathrm{He}_3(z_\tau + \delta)$
and $\phi(z_\tau + \delta)$ by their values at $z_\tau$ (since
$\delta$ is itself of order $(\kappa-3)$, the cross terms are
second-order), we obtain
\begin{equation}
  \delta\,\phi(z_\tau)
  \;=\;
  \frac{\kappa - 3}{24}\,\mathrm{He}_3(z_\tau)\,\phi(z_\tau)
  \;+\; O\!\bigl((\kappa-3)^2\bigr).
\end{equation}
Dividing both sides by $\phi(z_\tau) > 0$:
\begin{equation}
  \delta
  \;=\;
  \frac{\kappa - 3}{24}\bigl(z_\tau^3 - 3\,z_\tau\bigr)
  \;+\; O\!\bigl((\kappa-3)^2\bigr).
\end{equation}
 
\paragraph{Step 3: Tail-radius expansion.}
Substituting $R_\tau = z_\tau + \delta$ yields
Equation~\eqref{eq:cornishfisher}:
\begin{equation}
  R_\tau
  \;=\;
  z_\tau
  \;+\;
  \frac{\kappa - 3}{24}\bigl(z_\tau^3 - 3\,z_\tau\bigr)
  \;+\;
  O\!\bigl((\kappa-3)^2\bigr).
  \tag{\ref{eq:cornishfisher}}
\end{equation}
 
\paragraph{Step 4: Monotonicity of $R_\tau$ in $\kappa$.}
Differentiating with respect to $\kappa$:
\begin{equation}
  \frac{\partial R_\tau}{\partial\kappa}
  \;=\;
  \frac{z_\tau^3 - 3\,z_\tau}{24}
  \;=\;
  \frac{z_\tau(z_\tau^2 - 3)}{24}.
\end{equation}
This is strictly positive when $z_\tau > \sqrt{3} \approx 1.732$,
i.e.\ $\tau < 2(1-\Phi(\sqrt{3})) \approx 0.083$.  For the standard
choice $\tau = 0.05$ we have $z_\tau \approx 1.960$, giving
$\partial R_\tau / \partial\kappa \approx 0.0687 > 0$. Note that
this validity threshold $\tau\lesssim 0.083$ is consistent with the
exact small-$\tau$ monotonicity regimes established for GGD and
Student-$t$ in Appendix~\ref{app:kurtosis-proofs} and justifies the
practical range $\tau\in(0,0.02]$ used in Proposition~\ref{prop:kurtosis-exact}. \qed

\section{Proofs for Theorem~\ref{thm:min-noise-coverage} and Corollary~\ref{cor:sigma-monotone}}
\label{app:min-noise-proofs}

\subsection{Proof of Theorem~\ref{thm:min-noise-coverage}}

We prove the three claims of the theorem in turn: (i) rewriting the
coverage constraint as an explicit lower bound on $\sigma$; (ii)
identifying the unique minimiser; and (iii) establishing strict
monotonicity of $\sigma^{*}$ in $\kappa$.

\paragraph{Step 1: Rewriting the constraint.}
Let $\varepsilon\sim\mathcal{N}(0,\sigma^{2})$ with $\sigma>0$ and let
$R=R_\tau(\kappa)>0$ denote the $\tau$-tail radius of the marginal
$X$. Because $\varepsilon/\sigma\sim\mathcal{N}(0,1)$ and the standard
normal is symmetric about zero,
\begin{align}
  \Pr\!\bigl(|\varepsilon|\geq R\bigr)
  &\;=\;\Pr\!\bigl(\varepsilon\geq R\bigr)+\Pr\!\bigl(\varepsilon\leq -R\bigr)\notag\\
  &\;=\;\bigl(1-\Phi(R/\sigma)\bigr)+\Phi(-R/\sigma)\notag\\
  &\;=\;2\bigl(1-\Phi(R/\sigma)\bigr),
\label{eq:coverage-explicit}
\end{align}
using $\Phi(-u)=1-\Phi(u)$ in the last step. The coverage constraint
$\Pr(|\varepsilon|\ge R)\ge\alpha$ is therefore equivalent to
$1-\Phi(R/\sigma)\ge\alpha/2$, i.e.\
\begin{equation}
  \Phi(R/\sigma)\;\leq\;1-\alpha/2.
\label{eq:phi-constraint}
\end{equation}
Since $\Phi$ is a strictly increasing bijection from $\mathbb{R}$ onto
$(0,1)$ and $1-\alpha/2\in(0,1)$, its inverse
$\Phi^{-1}(1-\alpha/2)$ is a finite positive real number (positive
because $\alpha<1$, hence $1-\alpha/2>1/2=\Phi(0)$). Applying
$\Phi^{-1}$ to both sides of~\eqref{eq:phi-constraint} preserves the
direction of the inequality and yields
\[
  \frac{R}{\sigma}\;\leq\;\Phi^{-1}(1-\alpha/2),
\]
which, since $R>0$ and $\sigma>0$, is equivalent to
\begin{equation}
  \sigma\;\geq\;\frac{R}{\Phi^{-1}(1-\alpha/2)}
  \;=\;\frac{R_\tau(\kappa)}{\Phi^{-1}(1-\alpha/2)}.
\label{eq:sigma-lower}
\end{equation}

\paragraph{Step 2: Uniqueness of the minimiser.}
Define
\[
  \sigma_{0}\;:=\;\frac{R_\tau(\kappa)}{\Phi^{-1}(1-\alpha/2)}\;>\;0.
\]
Step~1 shows that the feasible set of~\eqref{eq:min-noise-problem} is
exactly $\{\sigma>0:\sigma\ge\sigma_{0}\}=[\sigma_{0},\infty)$. The
objective $\sigma\mapsto\sigma$ is strictly increasing on
$(0,\infty)$, so its infimum over $[\sigma_{0},\infty)$ is attained
at the unique point $\sigma=\sigma_{0}$. Strictness of the
monotonicity rules out any other minimiser, establishing
uniqueness and confirming~\eqref{eq:min-noise-sol}:
\[
  \sigma^{*}(\kappa)\;=\;\frac{R_\tau(\kappa)}{\Phi^{-1}(1-\alpha/2)}.
\]
We note for later use that, because $\sigma_{0}$ is the unique
feasible point where the constraint binds, we have
\[
  \Pr_{\varepsilon\sim\mathcal{N}(0,(\sigma^{*})^{2})}\!\bigl(|\varepsilon|\ge R_\tau(\kappa)\bigr)\;=\;\alpha,
\]
i.e.\ the optimal rule achieves \emph{exactly} the prescribed
coverage, not merely at least $\alpha$.

\paragraph{Step 3: Strict monotonicity in $\kappa$.}
Write $c_{\alpha}:=1/\Phi^{-1}(1-\alpha/2)>0$, so that the closed
form of Step~2 reads
$\sigma^{*}(\kappa)=c_{\alpha}\,R_\tau(\kappa)$. The factor
$c_{\alpha}$ depends only on $\alpha$ and is in particular
independent of $\kappa$. Consequently, for any
$\kappa_{1}<\kappa_{2}$ inside the strictly-monotone regime of
Proposition~\ref{prop:kurtosis-exact} (i.e.\ either within the GGD
family with $\tau\in(0,\tau^{*}_{\mathrm{GGD}})$ or within the
standardised Student-$t$ family with $\tau\in(0,\tau^{*}_{t})$),
Proposition~\ref{prop:kurtosis-exact} gives
$R_\tau(\kappa_{1})<R_\tau(\kappa_{2})$, and multiplying by the
positive constant $c_{\alpha}$ preserves the inequality:
\[
  \sigma^{*}(\kappa_{1})\;=\;c_{\alpha}\,R_\tau(\kappa_{1})
  \;<\;c_{\alpha}\,R_\tau(\kappa_{2})\;=\;\sigma^{*}(\kappa_{2}).
\]
Hence $\sigma^{*}$ is strictly increasing in $\kappa$ under the
hypotheses of Proposition~\ref{prop:kurtosis-exact}. Combining
Steps~1--3 proves Theorem~\ref{thm:min-noise-coverage}. \qed

\subsection{Proof of Corollary~\ref{cor:sigma-monotone}}

The corollary has two parts: a first-order expansion
of~\eqref{eq:min-noise-sol} around the Gaussian reference $\kappa=3$,
and a uniqueness statement identifying the affine
rule~\eqref{eq:sigma_formula} as the unique first-order Taylor
approximation of $\sigma^{*}$ at $\kappa=3$.

\paragraph{Step 1: Cornish--Fisher substitution.}
Under the hypotheses of Proposition~\ref{prop:cornish-fisher}, the
tail radius admits the Cornish--Fisher expansion
\begin{equation}
  R_\tau(\kappa)\;=\;z_\tau
  \;+\;\frac{\kappa-3}{24}\bigl(z_\tau^{3}-3z_\tau\bigr)
  \;+\;O\!\bigl((\kappa-3)^{2}\bigr),
\label{eq:cf-recap}
\end{equation}
where $z_\tau=\Phi^{-1}(1-\tau/2)$. At the Gaussian reference the
excess kurtosis vanishes and the expansion reduces to
$R_\tau(3)=z_\tau$. Substituting~\eqref{eq:cf-recap} into the
closed-form~\eqref{eq:min-noise-sol} of Theorem~\ref{thm:min-noise-coverage},
\[
  \sigma^{*}(\kappa)
  \;=\;\frac{R_\tau(\kappa)}{\Phi^{-1}(1-\alpha/2)}
  \;=\;\frac{1}{\Phi^{-1}(1-\alpha/2)}\!\left[z_\tau+\frac{\kappa-3}{24}(z_\tau^{3}-3z_\tau)+O((\kappa-3)^{2})\right].
\]
Evaluating at $\kappa=3$ gives $\sigma^{*}(3)=z_\tau/\Phi^{-1}(1-\alpha/2)$,
which is strictly positive.

\paragraph{Step 2: Cancellation and simplification.}
Forming the ratio $\sigma^{*}(\kappa)/\sigma^{*}(3)$, the positive
factor $1/\Phi^{-1}(1-\alpha/2)$ cancels identically:
\begin{align}
  \frac{\sigma^{*}(\kappa)}{\sigma^{*}(3)}
  &\;=\;\frac{R_\tau(\kappa)}{R_\tau(3)}
  \;=\;\frac{z_\tau+\tfrac{\kappa-3}{24}(z_\tau^{3}-3z_\tau)+O((\kappa-3)^{2})}{z_\tau}\notag\\
  &\;=\;1\;+\;\frac{\kappa-3}{24}\cdot\frac{z_\tau^{3}-3z_\tau}{z_\tau}
      \;+\;O\!\bigl((\kappa-3)^{2}\bigr)\notag\\
  &\;=\;1\;+\;\frac{z_\tau^{2}-3}{24}(\kappa-3)\;+\;O\!\bigl((\kappa-3)^{2}\bigr),
\label{eq:cor-expansion}
\end{align}
where the last line uses $z_\tau>0$ to divide numerator and
denominator by $z_\tau$ and the identity
$(z_\tau^{3}-3z_\tau)/z_\tau=z_\tau^{2}-3$. This
establishes~\eqref{eq:optimal-affine} with slope
$c_{\mathrm{CF}}(\tau)=(z_\tau^{2}-3)/24$.

\paragraph{Step 3: Sign of the slope.}
The slope $c_{\mathrm{CF}}(\tau)=(z_\tau^{2}-3)/24$ is a strictly
increasing function of $z_\tau\ge 0$ and vanishes precisely at
$z_\tau=\sqrt{3}$. Since $z_\tau=\Phi^{-1}(1-\tau/2)$ is itself
strictly decreasing in $\tau$, the condition $z_\tau>\sqrt{3}$ is
equivalent to
\[
  1-\tau/2\;>\;\Phi(\sqrt{3})\quad\Longleftrightarrow\quad
  \tau\;<\;2\bigl(1-\Phi(\sqrt{3})\bigr)\;\approx\;0.083.
\]
In this regime $c_{\mathrm{CF}}(\tau)>0$ and the first-order
correction in~\eqref{eq:cor-expansion} has the same sign as
$\kappa-3$, which is consistent with the strict monotonicity
established in Theorem~\ref{thm:min-noise-coverage}.

\paragraph{Step 4: Uniqueness of the affine Taylor approximation.}
It remains to show that the affine rule
$\sigma^{\mathrm{aff}}(\kappa)=\sigma_{\text{base}}[1+c(\kappa-3)]$
appearing in Eq.~\eqref{eq:sigma_formula} is the unique first-order
Taylor approximation of $\sigma^{*}(\kappa)$ around $\kappa=3$. Let
$\sigma^{\mathrm{aff}}$ be any affine function of $\kappa$ matching
$\sigma^{*}$ to first order at $\kappa=3$; we must determine
$\sigma_{\text{base}}$ and $c$. The zeroth-order matching condition
$\sigma^{\mathrm{aff}}(3)=\sigma^{*}(3)$ immediately gives
\[
  \sigma_{\text{base}}\;=\;\sigma^{*}(3)\;=\;\frac{z_\tau}{\Phi^{-1}(1-\alpha/2)}.
\]
For the first-order condition, differentiate
$\sigma^{\mathrm{aff}}(\kappa)$ and $\sigma^{*}(\kappa)$ at
$\kappa=3$. The affine ansatz gives
$d\sigma^{\mathrm{aff}}/d\kappa\big|_{\kappa=3}=\sigma_{\text{base}}\,c$,
whereas differentiating~\eqref{eq:cor-expansion} (or equivalently
$\sigma^{*}(\kappa)=\sigma^{*}(3)\cdot[1+(z_\tau^{2}-3)(\kappa-3)/24+\cdots]$)
at $\kappa=3$ gives
\[
  \frac{d\sigma^{*}}{d\kappa}\bigg|_{\kappa=3}
  \;=\;\sigma^{*}(3)\cdot\frac{z_\tau^{2}-3}{24}
  \;=\;\sigma_{\text{base}}\cdot c_{\mathrm{CF}}(\tau).
\]
Equating the two derivatives and cancelling the common factor
$\sigma_{\text{base}}>0$ yields $c=c_{\mathrm{CF}}(\tau)$. Both
parameters of the affine ansatz are thereby uniquely determined,
proving that Eq.~\eqref{eq:sigma_formula} with
$c=c_{\mathrm{CF}}(\tau)$ and
$\sigma_{\text{base}}=\sigma^{*}(3)$ is the unique first-order
Taylor approximation of $\sigma^{*}(\kappa)$ around the Gaussian
reference. Combining Steps~1--4 completes the proof of
Corollary~\ref{cor:sigma-monotone}. \qed

\subsection{Relating the empirical c to the theoretical slope}
\label{app:c-tau}
Theorem~\ref{thm:min-noise-coverage} parameterises the minimum-noise rule by two quantities: the tail-level $\tau$ (how deep into the tail a training point must be displaced) and the coverage level $\alpha$ (with what probability the displacement reaches that depth). When we form the ratio $\sigma^{*}(\kappa)/\sigma^{*}(3)$ in Corollary~\ref{cor:sigma-monotone}, the $\Phi^{-1}(1-\alpha/2)$ factor that encodes $\alpha$ cancels identically, leaving a first-order slope $c_{\mathrm{CF}}(\tau)=(z_\tau^{2}-3)/24$ that depends only on $\tau$. The coverage level $\alpha$ therefore affects only the overall scale $\sigma_{\text{base}}$, not the rate at which $\sigma$ should grow with $\kappa$; any mismatch between the empirical $c$ and the theoretical slope must be read as a mismatch in $\tau$, not in $\alpha$. Table~\ref{tab:tau-c} lists the correspondence induced by this slope for the range of $\tau$ we ablate in Figure~\ref{fig:ablation_tau}:
\begin{table}[h]
\centering
\small
\caption{Implicit $(c,\tau)$ correspondence induced by $c_{\mathrm{CF}}(\tau)=(z_\tau^{2}-3)/24$, evaluated on the grid swept in the ablation of Figure~\ref{fig:ablation_tau}.}
\label{tab:tau-c}
\begin{tabular}{lcccccccc}
\toprule
$\tau$ & $10^{-5}$ & $5\!\times\!10^{-5}$ & $10^{-4}$ & $10^{-3}$ & $5\!\times\!10^{-3}$ & $10^{-2}$ & $0.05$ & $0.07$ \\
$c_{\mathrm{CF}}(\tau)$ & $0.875$ & $0.560$ & $0.506$ & $0.326$ & $0.203$ & $0.151$ & $0.035$ & $0.012$ \\
\bottomrule
\end{tabular}
\end{table}
Two observations close the gap between $c=0.33$ and the nominal $c_{\mathrm{CF}}(0.05)\approx 0.035$. First, the empirical optimum $c=0.326$ in Figure~\ref{fig:ablation_tau} corresponds to $\tau\approx 10^{-3}$, not to the nominal $\tau=0.05$: the score network benefits from \emph{dense} coverage of the tail region rather than \emph{minimal} coverage at the boundary, so the rule that minimises displacement under a $\tau=0.05$ constraint is not the rule that maximises detection quality. Concretely, $c_{\mathrm{CF}}(0.05)\approx 0.035$ is the minimum slope consistent with any training point reaching the 5\%-tail of the marginal with probability $\alpha$; increasing the slope to $c=0.33$ multiplies the per-feature displacement by roughly a factor of ten and shifts the coverage from ``one training point lands in the tail boundary'' to ``most training points cover the tail region''. Second, the ablation in Section~\ref{sec:results} shows that detection quality is a flat function of $c$ across the entire range: every value of $c$ from $0.012$ ($\tau=0.07$) to $0.875$ ($\tau=10^{-5}$) leaves mean AUC-PR within a narrow band, and the best value of $0.326$ beats the worst only by a small margin. We therefore fix $c=0.33\approx 1/3$ as a simple, interpretable default that sits at the empirical plateau and neatly matches the Gaussian anchor point $\kappa_j=3$ at which the formula reduces to $\sigma_{\text{base}}$. The theoretical contribution of Corollary~\ref{cor:sigma-monotone} is the functional form $\sigma_j=\sigma_{\text{base}}[1+c(\kappa_j-3)]$ and the sign of $c$; the specific value of $c$ used in practice is fixed empirically from the ablation, not derived from a single nominal $\tau$. We flag as an open question the fact that $\alpha$ cancels in the first-order expansion: any higher-order analysis that retains $\alpha$ would recover it as a second-order correction coupling $\kappa$ and the coverage level, and could in principle pin down the joint $(\alpha,\tau)$ that matches $c=0.33$ exactly.

\section{Implementation Details}
\subsection{Benchmark Description}

We conduct experiments on the ADBench benchmark suite \citep{han2022adbench}, which provides a standardised evaluation framework across 57 datasets and 30 anomaly detection algorithms. The benchmark includes 47 established tabular datasets and 10 additional datasets introduced by the authors to extend coverage into computer vision and natural language processing. For these latter domains, features are obtained from pre-trained models such as ResNet for images and Transformer-based encoders for text, ensuring that all methods are compared on fixed representations rather than raw inputs. The 10 CV/NLP datasets are further organized into 74 subclasses to enable fine-grained evaluation. In our experiments, we follow the ADBench protocol, using the provided train/test splits and preprocessing for the tabular benchmarks, and including the CV/NLP feature-based datasets where applicable. Table~\ref{table:datasets} shows a description of all datasets as reported by \citet{han2022adbench}.

\begin{table}[!p]
    \centering
    \caption{Description of all datasets in ADBench}
    \resizebox{0.8\textwidth}{!}{\begin{tabular}{llllll}
    \toprule
        Dataset Name & \# Samples & \# Features & \# Anomaly & \% Anomaly & Category \\ \midrule
        ALOI & 49534 & 27 & 1508 & 3.04 & Image \\ 
        annthyroid & 7200 & 6 & 534 & 7.42 & Healthcare \\ 
        backdoor & 95329 & 196 & 2329 & 2.44 & Network \\ 
        breastw & 683 & 9 & 239 & 34.99 & Healthcare \\ 
        campaign & 41188 & 62 & 4640 & 11.27 & Finance \\ 
        cardio & 1831 & 21 & 176 & 9.61 & Healthcare \\ 
        Cardiotocography & 2114 & 21 & 466 & 22.04 & Healthcare \\ 
        celeba & 202599 & 39 & 4547 & 2.24 & Image \\ 
        census & 299285 & 500 & 18568 & 6.20 & Sociology \\ 
        cover & 286048 & 10 & 2747 & 0.96 & Botany \\ 
        donors & 619326 & 10 & 36710 & 5.93 & Sociology \\ 
        fault & 1941 & 27 & 673 & 34.67 & Physical \\ 
        fraud & 284807 & 29 & 492 & 0.17 & Finance \\ 
        glass & 214 & 7 & 9 & 4.21 & Forensic \\ 
        Hepatitis & 80 & 19 & 13 & 16.25 & Healthcare \\ 
        http & 567498 & 3 & 2211 & 0.39 & Web \\ 
        InternetAds & 1966 & 1555 & 368 & 18.72 & Image \\ 
        Ionosphere & 351 & 32 & 126 & 35.90 & Oryctognosy \\ 
        landsat & 6435 & 36 & 1333 & 20.71 & Astronautics \\ 
        letter & 1600 & 32 & 100 & 6.25 & Image \\ 
        Lymphography & 148 & 18 & 6 & 4.05 & Healthcare \\ 
        magic.gamma & 19020 & 10 & 6688 & 35.16 & Physical \\ 
        mammography & 11183 & 6 & 260 & 2.32 & Healthcare \\ 
        mnist & 7603 & 100 & 700 & 9.21 & Image \\ 
        musk & 3062 & 166 & 97 & 3.17 & Chemistry \\ 
        optdigits & 5216 & 64 & 150 & 2.88 & Image \\ 
        PageBlocks & 5393 & 10 & 510 & 9.46 & Document \\ 
        pendigits & 6870 & 16 & 156 & 2.27 & Image \\ 
        Pima & 768 & 8 & 268 & 34.90 & Healthcare \\ 
        satellite & 6435 & 36 & 2036 & 31.64 & Astronautics \\ 
        satimage-2 & 5803 & 36 & 71 & 1.22 & Astronautics \\ 
        shuttle & 49097 & 9 & 3511 & 7.15 & Astronautics \\ 
        skin & 245057 & 3 & 50859 & 20.75 & Image \\ 
        smtp & 95156 & 3 & 30 & 0.03 & Web \\ 
        SpamBase & 4207 & 57 & 1679 & 39.91 & Document \\ 
        speech & 3686 & 400 & 61 & 1.65 & Linguistics \\ 
        Stamps & 340 & 9 & 31 & 9.12 & Document \\ 
        thyroid & 3772 & 6 & 93 & 2.47 & Healthcare \\ 
        vertebral & 240 & 6 & 30 & 12.50 & Biology \\ 
        vowels & 1456 & 12 & 50 & 3.43 & Linguistics \\ 
        Waveform & 3443 & 21 & 100 & 2.90 & Physics \\ 
        WBC & 223 & 9 & 10 & 4.48 & Healthcare \\ 
        WDBC & 367 & 30 & 10 & 2.72 & Healthcare \\ 
        Wilt & 4819 & 5 & 257 & 5.33 & Botany \\ 
        wine & 129 & 13 & 10 & 7.75 & Chemistry \\ 
        WPBC & 198 & 33 & 47 & 23.74 & Healthcare \\ 
        yeast & 1484 & 8 & 507 & 34.16 & Biology \\ 
        CIFAR10 & 5263 & 512 & 263 & 5.00 & Image \\ 
        FashionMNIST & 6315 & 512 & 315 & 5.00 & Image \\ 
        MNIST-C & 10000 & 512 & 500 & 5.00 & Image \\ 
        MVTec-AD & 5354 & 512 & 1258 & 23.50 & Image \\ 
        SVHN & 5208 & 512 & 260 & 5.00 & Image \\ 
        Agnews & 10000 & 768 & 500 & 5.00 & NLP \\ 
        Amazon & 10000 & 768 & 500 & 5.00 & NLP \\ 
        Imdb & 10000 & 768 & 500 & 5.00 & NLP \\ 
        Yelp & 10000 & 768 & 500 & 5.00 & NLP \\ 
        20newsgroups & 11905 & 768 & 591 & 4.96 & NLP \\ \bottomrule
    \end{tabular}}
    \label{table:datasets}
\end{table}
\subsection{Algorithm}
\label{sec:algorithm}
\begin{algorithm}[H]
\caption{Training for Kurtosis-Guided Single-Scale DSM}
\hspace*{\algorithmicindent}\textbf{Parameters: } $\lambda$ learning rate,\; $\sigma_{\mathrm{base}}>0$ base noise scale

\hspace*{\algorithmicindent}\textbf{Input: } training data $\mathcal{D}=\{x^{(i)}\}_{i=1}^n \subset \mathbb{R}^d$

\hspace*{\algorithmicindent}\textbf{Output: } learned parameters $\theta$,\; per-feature noise vector $\boldsymbol{\sigma}\in\mathbb{R}^d_{>0}$
\begin{algorithmic}[1]
\State $\theta \gets \theta_0$ \Comment initialize network weights
\State $\mu_j,\, s_j \gets \mathrm{mean\_std}(\mathcal{D})$ for $j{=}1,\dots,d$ \Comment feature-wise mean and std
\State $z^{(i)}_j \gets (x^{(i)}_j-\mu_j)/s_j$ for all $i,j$ \Comment standardise features
\State rearrange each feature histogram so that the mode is centred (see Section~\ref{sec:histo}) \Comment compute $\kappa_j$ on rearranged histogram ($\sigma_j$ estimation only; $x$ itself is unchanged downstream)
\State
$k_j \gets \frac{\frac{1}{n}\sum_{i=1}^n \big(z^{(i)}_j\big)^{4}}{\left(\frac{1}{n}\sum_{i=1}^n \big(z^{(i)}_j\big)^{2}\right)^{2}}$ \Comment Pearson kurtosis per feature (using standardised $z^{(i)}_j$)
\State $\sigma_j \gets \sigma_{\mathrm{base}}\cdot \bigl(1 + c\,(k_j - 3)\bigr) $,\quad $c=0.33$ \Comment CF-form formula (Eq.~\ref{eq:sigma_formula}); Gaussian anchor $k_j{=}3$
\State $\boldsymbol{\sigma} \gets \mathrm{clip}(\sigma_1,\dots,\sigma_d;\;0.1,\,2.0)$ \Comment stack and clip feature-wise scales; Notation: $\odot$ = elementwise product\
\For{\textbf{each} epoch}
  \For{\textbf{each} $x \in \mathcal{D}$}
    \State $\varepsilon \sim \mathcal{N}(0,I_d)$ \Comment sample standard Gaussian noise
    \State $z \gets \boldsymbol{\sigma}\odot \varepsilon$ \Comment draw feature-wise noise
    \State $\tilde{x} \gets x + z$ \Comment form noisy input
    \State $\hat{s} \gets f_\theta(\tilde{x})$ \Comment predict score at perturbed point
    \State
      $\mathcal{L} \;\gets\; \tfrac{1}{2}\,\big\|\hat{s} \;+\; \varepsilon\big\|_2^{2}$
\Comment DSM loss with $\varepsilon$-reparameterisation: regress on unit-variance noise rather than score (see Appendix~\ref{sec:model_architecture})
    \State $\theta \gets \theta - \lambda \nabla_\theta \mathcal{L}$ \Comment gradient step
  \EndFor
\EndFor
\State $a(x) \gets \|f_\theta(x)\|_2$ \Comment anomaly score at test time (score norm)
\end{algorithmic}
\end{algorithm}

\subsection{Model architecture}
\label{sec:model_architecture}
We used a ResNet-like tabular model as backbone for K-DSM, DSM, DTE and MSM \citep{gorishniy2021revisiting}. We detail the used parameters for these models in Table~\ref{tab:ddpm_hyperparameters}. For DTE, this did not change the performance significantly compared to the original MLP implementation. MSM was originally made of image data, so we had to adapt it with a tabular backbone.

\begin{table}[h]
\centering
\caption{Hyperparameters for Kurtosis-DSM and DSM models. We use the same hyperparameters for DTE and MSM for the backbone.}
\label{tab:ddpm_hyperparameters}
\begin{tabular}{ll}
\toprule
\textbf{Hyperparameter} & \textbf{Value} \\
\midrule
Number of blocks & 6 \\
Main layer size & 512 \\
Hidden layer size & 512 \\
Optimizer & Adam \\
Learning rate & 0.0005 \\
Dropout layer 1 & 0.2 \\
Dropout layer 2 & 0.1 \\
Batch size & 128 \\
Number of epochs & 500 \\
Base sigma & 0.5\\
\midrule
c & 0.33 \\
Sigma clipping range & $[0.1,2.0]$\\
\bottomrule
\end{tabular}
\end{table}


\subsection{Loss parameterisation} 
\label{ap:loss}

We train with the $\varepsilon$-prediction objective, the per-feature analogue of the standard diffusion-model parameterisation of \citet{ho2020denoisingdiffusionprobabilisticmodels}. The motivation is a subtle pitfall that arises when the $\sigma_j$ vary across features: the score target $-\Sigma^{-1}(\tilde{x}-x)$ has expected squared magnitude $1/\sigma_j^2$ per feature, so the per-feature gradient signal scales as $1/\sigma_j^2$, meaning features with small $\sigma_j$ dominate optimisation. Each $\sigma_j$ therefore plays two distinct roles simultaneously: a \emph{coverage role}, determining how far perturbations reach into low-density regions around feature $j$, and a \emph{loss-weighting role}, setting the relative importance of feature $j$ in the training objective. Our kurtosis rule is a statement about coverage only, so to let it act on the channel it was designed for, we decouple the two roles by rescaling. Starting from the standard diagonal-covariance DSM loss
\begin{equation}
\mathcal{L}_{\text{DSM}}(\theta) =
\tfrac{1}{2}\,\mathbb{E}_{x,\tilde{x}}
\Bigl[\bigl\| s_\theta(\tilde{x}) + \Sigma^{-1}(\tilde{x} - x)\bigr\|_2^2\Bigr],
\label{eq:dsm_standard}
\end{equation}
we rescale each per-feature residual by $\sigma_j$
\begin{equation}
\mathcal{L}_{\text{DSM}}(\theta) =
\tfrac{1}{2}\,\mathbb{E}_{x,\tilde{x}}
\Bigl[\bigl\| s_\theta(\tilde{x}) + \Sigma^{-1/2}(\tilde{x} - x)\bigr\|_2^2\Bigr],
\label{eq:dsm_rescaled}
\end{equation}
equivalently regressing on the unit-variance noise $\varepsilon = \Sigma^{-1/2}(\tilde{x}-x)$:
\begin{equation}
\mathcal{L}_{\text{DSM}}^{\varepsilon}(\theta) =
\tfrac{1}{2}\,\mathbb{E}_{x,\varepsilon}
\Bigl[\bigl\| s_\theta(x + \Sigma^{1/2}\varepsilon) + \varepsilon\bigr\|_2^2\Bigr].
\label{eq:dsm_epsilon}
\end{equation}
This rescaling ensures that every feature contributes equal gradient magnitude in expectation regardless of its assigned $\sigma_j$, so that the kurtosis rule acts only on coverage and not on implicit feature reweighting. The anomaly score $A(x) = \|s_\theta(x)\|_2$ evaluated on clean test inputs is unchanged, since no noise is added at test time and only the function the network is trained to approximate differs. For all other baseline models, hyperparameters were maintained at the values reported in the respective original publications.

\clearpage

\section{Inference and Training Time}
\label{sec:unsup_inf_time}

\begin{figure}[H]
    \centering
        \centering
        \begin{subfigure}[b]{0.48\textwidth}
          \includegraphics[width=\textwidth]{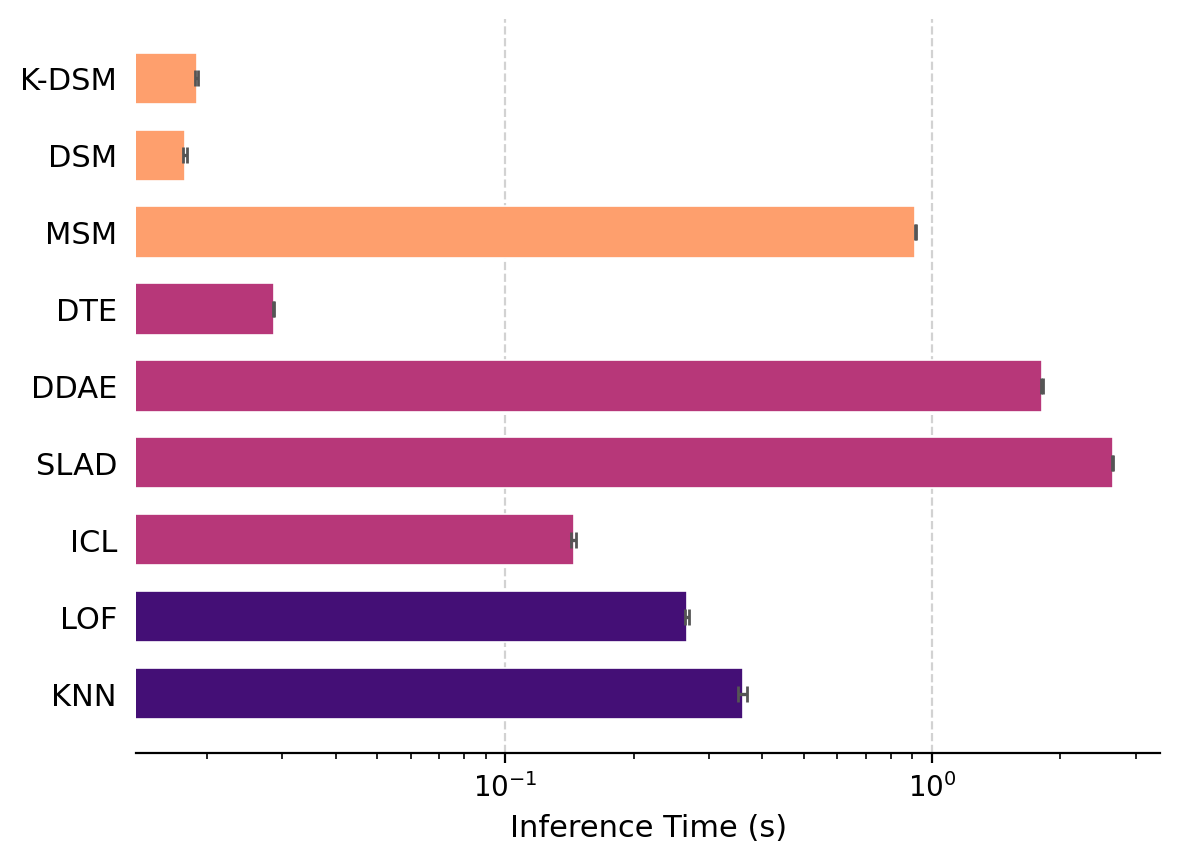}
          \caption{Inference Time}
        \end{subfigure}
        \begin{subfigure}[b]{0.48\textwidth}
          \includegraphics[width=\textwidth]{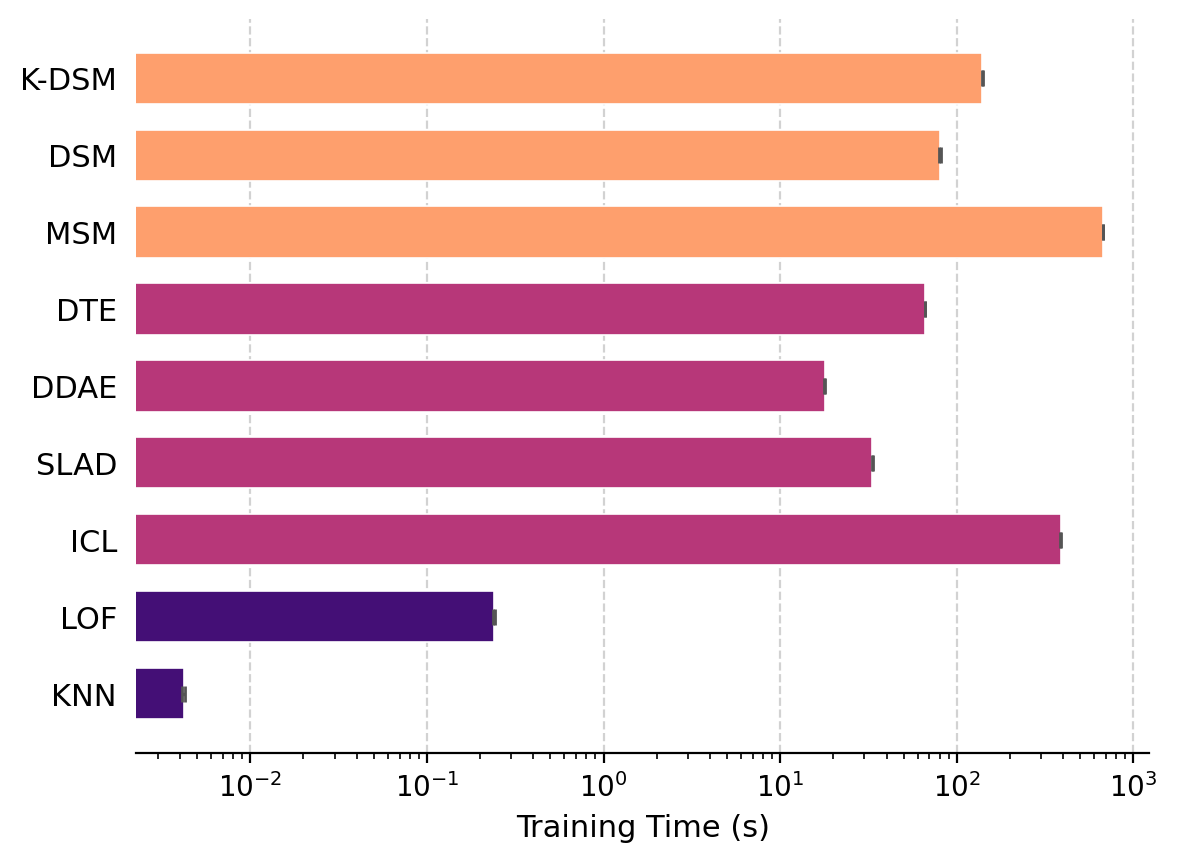}
          \caption{Training Time}
        \end{subfigure}
        \caption{Average inference and training time on the 57 datasets from ADBench (semi-supervised setting). Colour scheme: orange (score-based), magenta (deep learning methods), purple (classical methods).}
        \label{fig:inf_time}
\end{figure}

\begin{figure}[H]
    \centering
        \begin{subfigure}[b]{0.48\textwidth}
          \includegraphics[width=\textwidth]{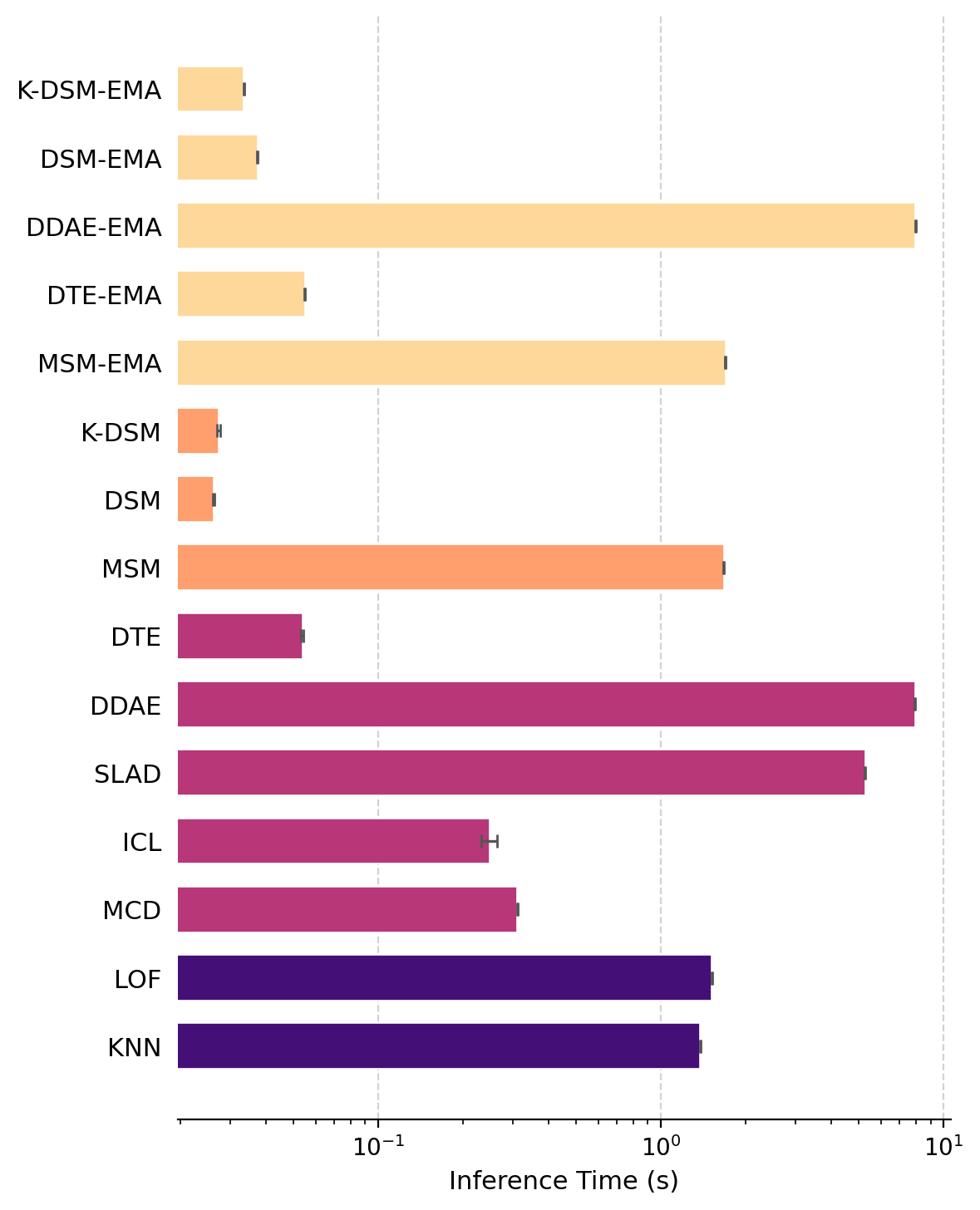}
          \caption{Inference Time}
        \end{subfigure}
        \begin{subfigure}[b]{0.48\textwidth}
          \includegraphics[width=\textwidth]{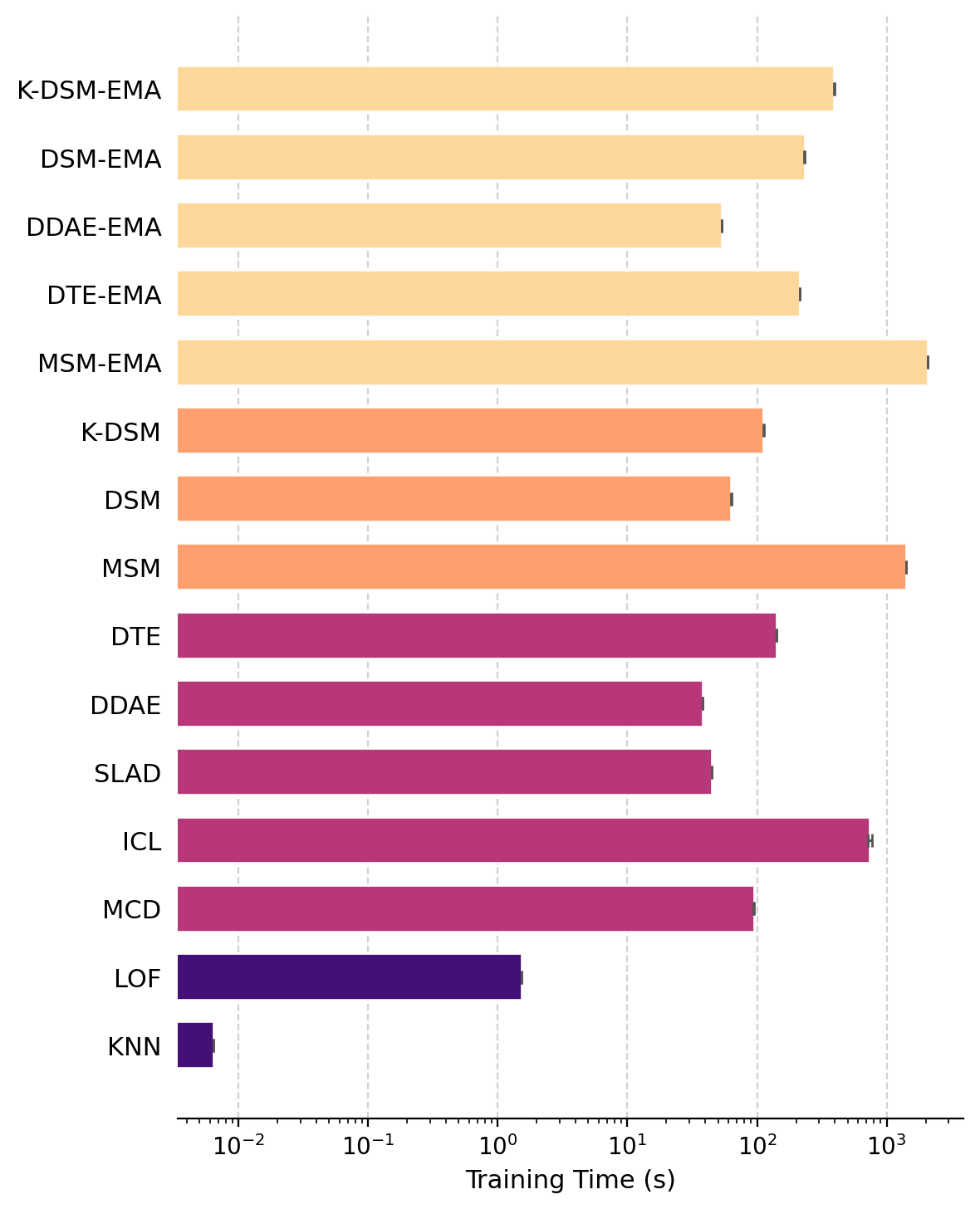}
          \caption{Training Time}
        \end{subfigure}
        \caption{Average inference and training time on the 57 datasets from ADBench (unsupervised setting). EMA-teacher variants are labelled with the ``-EMA'' suffix. Colour scheme as in Figure~\ref{fig:inf_time}.}
        \label{fig:unsup_inf_time}
\end{figure}

\paragraph{Inference time.}
DSM and K-DSM are the fastest methods at inference in both settings, requiring only a single forward pass through an MLP to compute the score norm $\|s_\theta(x)\|_2$; this takes roughly $0.02$--$0.03$ seconds per dataset on average. DDAE is the slowest ($\sim$8 s), as evaluating its denoising objective requires a complete diffusion forward pass over multiple noise levels. MSM ($\sim$1.7 s) is the next slowest because inference requires computing the score at each of its noise scales before aggregation. SLAD is similarly slow due to its attention-based architecture. DTE is also MLP-based and performs a single forward pass, making it structurally similar to DSM; the modest inference gap is likely attributable to implementation overhead from its time-bin embedding rather than a fundamental algorithmic difference. LOF and KNN, while trivially fast to fit, require a full scan over the training set at inference ($O(n)$ per query), which places them in the middle of the field. The EMA teacher adds a second forward pass during \emph{training} but leaves \emph{inference} unchanged, so K-DSM-EMA and DSM-EMA incur no inference penalty over their base counterparts.

\paragraph{Practical importance of inference speed.}
Low inference latency is critical in operational anomaly detection settings such as real-time fraud detection, network intrusion monitoring, and industrial quality control, where decisions must be made on individual samples within milliseconds and throughput can reach millions of queries per second. In these regimes, the $O(1)$ per-query cost of a single MLP forward pass gives DSM and K-DSM a decisive practical advantage over methods whose inference cost grows with the training set size. KNN exemplifies this scaling failure: its inference requires computing the distance from the query to every training point, so latency grows linearly in $n$ and becomes prohibitive once the training set reaches tens or hundreds of thousands of samples. The ADBench datasets used in this benchmark are small enough that KNN's inference time appears modest in the plots ($\sim$1.4 s per dataset averaged over 57 datasets), but this masks the scaling problem entirely. LOF shares the same $O(n)$ inference bottleneck. Score-based methods such as DSM and K-DSM avoid this by amortising all data-dependent computation into the trained weights, leaving inference as a constant-time operation regardless of how large the training set is.

\paragraph{Training time.}
Training times are broadly comparable across the deep-learning methods ($\sim$30--400 s per dataset for the score-based family in the unsupervised setting), with two clear outliers at the extremes. LOF ($\sim$1.5 s) and KNN ($<$0.01 s) are essentially instantaneous, since they require no iterative optimisation. MSM and MSM-EMA are the most expensive ($\sim$1,400 s and $\sim$2,070 s respectively) because the multi-scale score evaluation multiplies both forward and backward pass cost by the number of noise levels at every training step. K-DSM-EMA ($\sim$393 s) is about $3.5\times$ slower than base K-DSM ($\sim$112 s), and DSM-EMA ($\sim$233 s) is about $3.7\times$ slower than base DSM ($\sim$63 s); this overhead comes from the additional teacher forward pass and the batch-filtering bookkeeping performed at each gradient step.

\subsection{Compute Cost}
\label{app:compute_cost}

All experiments were run on a custom cluster using NVIDIA L40S GPUs, with one GPU allocated per random seed and four CPU cores and 24 GB of RAM per worker. Since all five seeds for a given dataset run simultaneously, the wall time scales with the single-seed training time multiplied by the number of datasets.

\paragraph{Main ADBench experiments.} We estimate GPU-hours as $\bar{t} \times 57 \times 5 / 3600$, where $\bar{t}$ is the mean training time (in seconds) reported across the 57 datasets. For the semi-supervised setting this yields approximately 11 GPU-hours for K-DSM, 6 for DSM, 53 for MSM, 5 for DTE, 1 for DDAE, 3 for SLAD, and 31 for ICL, for a total of roughly \textbf{110 GPU-hours}. The unsupervised setting requires running both base and EMA-teacher variants of each score-based method: the dominant costs are MSM-EMA ($\sim$164 GPU-hours), MSM ($\sim$111 GPU-hours), and ICL ($\sim$59 GPU-hours), with the remaining methods (K-DSM-EMA, DSM-EMA, DTE, DTE-EMA, K-DSM, DSM, DDAE, DDAE-EMA) contributing a further $\sim$100 GPU-hours, for a total of roughly \textbf{430 GPU-hours}. Classical methods (MCD, LOF, KNN) run on CPU and their cost is negligible by comparison.

\paragraph{Ablation studies.} Three ablation sweeps were run. The base-noise-scale ($\sigma_0$) ablation and the Cornish--Fisher versus GGD comparison are semi-supervised, each running K-DSM across the full benchmark for a small number of parameter values; scaling from the per-dataset K-DSM timing gives approximately 50 and 45 GPU-hours respectively. The filtering-percentile ($\gamma$) ablation sweeps six values of K-DSM-EMA in the unsupervised setting, contributing approximately $6 \times 31 \approx 185$ GPU-hours. Ablations therefore account for roughly \textbf{280 GPU-hours} in total.

\paragraph{Image benchmarks.} VisA and MVTec used far fewer samples per category than the ADBench tabular datasets, so the training cost is modest. Scaling from the per-category timing data, the full method sweep across VisA (12 categories) and MVTec (15 categories) at five seeds each amounts to approximately 5.3 and 2.1 GPU-hours respectively. DINOv3 feature extraction was run once on a single L40S and took roughly 2 hours. The image benchmarks therefore account for approximately \textbf{9 GPU-hours} in total.

\paragraph{Development and exploration.} Prior to converging on the final method design, we explored a range of approaches to the semi-supervised noise-scale problem, including entropy-based and IQR-based scaling (Appendix~\ref{app:alternative_strategies}), various loss formulations, and different MLP architectures. We estimate this exploratory phase at roughly \textbf{200 GPU-hours}.

\paragraph{Total.} Summing the above, the estimated total compute for the paper is approximately $110 + 430 + 280 + 9 + 200 \approx \mathbf{1{,}030}$ \textbf{GPU-hours} on NVIDIA L40S hardware.

\clearpage
\section{EMA Teacher for Unsupervised Anomaly Detection}
\label{app:ema_teacher}

This appendix expands the description of the EMA-teacher filtering rule introduced in Section~\ref{sec:unsup_main}. The method is a training-loop companion that can be wrapped around any score- or reconstruction-based detector to improve robustness under contamination. We describe the score-based instantiation that we use for DSM and K-DSM, then summarise the analogous variants for DDAE, DTE, and MSM.

\paragraph{Motivation.} Denoising score matching regresses the learned score $s_\theta(\tilde x)$ toward the noise residual $-(\tilde x - x)/\sigma^2$ (or its per-feature analogue) for every training sample. Under the semi-supervised assumption that training data are clean, this is exactly the density-gradient estimator we want: the score of the perturbed data distribution. Under contamination the same objective treats every training point, including contaminants, as ``normal'', so the score at anomalies is explicitly pulled toward zero during training. At test time, the anomaly signal $\|s_\theta(x)\|$ is therefore suppressed at the very points it is supposed to highlight, which is why density-based detectors degrade disproportionately under contamination while distance-based detectors such as KNN or Isolation Forest degrade more gracefully.

\paragraph{EMA teacher.} We maintain a shadow copy $s_{\theta_{\text{ema}}}$ of the score network whose weights follow an exponential moving average of the online model, $\theta_{\text{ema}} \leftarrow \rho\,\theta_{\text{ema}} + (1-\rho)\,\theta$ with decay $\rho = 0.999$. For each minibatch, we score every sample with the teacher and compute its score norm $\|s_{\theta_{\text{ema}}}(x)\|$. Samples whose score norm exceeds the $\gamma$-th batch percentile are flagged as probable contaminants and dropped. The online model takes its gradient step only on the surviving $\gamma\%$ of the batch; the EMA is updated after the step. We set $\gamma = 80$ based on an ablation study (Appendix~\ref{app:gamma_ablation}). The teacher is intentionally slow: using the online model directly to filter its own batch creates a positive-feedback loop where whatever the model currently thinks is anomalous becomes more anomalous on the next step, while a lag-smoothed copy provides a stable density estimate that cannot be warped by the current gradient step. This is the same decoupling principle used in Temporal Ensembling \citep{laine2017temporal} and Mean Teacher \citep{tarvainen2017mean} for semi-supervised classification, and later in momentum-encoder self-supervised representation learning such as MoCo \citep{he2020moco}, BYOL \citep{grill2020byol}, and DINO \citep{caron2021dino}; weight averaging itself dates back to Polyak--Ruppert iterate averaging \citep{polyak1992acceleration}. Our contribution here is not the EMA mechanism but its repurposing as a \emph{sample-selection} signal for contamination-robust density estimation: prior work uses the teacher to produce pseudo-labels or representation targets, whereas we use the teacher's score norm as a per-sample density proxy and train only on the head of the batch.

\paragraph{Variants for other detectors.} The same filtering template works for the baselines, with the teacher's score replaced by a model-appropriate density proxy:
\begin{itemize}[leftmargin=1.25em,nosep]
\item \textbf{DDAE-EMA}: reconstruction error $\|x_0 - \hat x_0^{\text{ema}}(x_t, t)\|$ at the sampled diffusion time, on the grounds that contaminants produce larger reconstruction error at any $t$ than the inlier core.
\item \textbf{DTE-EMA}: expected predicted diffusion-time bin $\sum_k k\cdot\text{softmax}(s_{\theta_{\text{ema}}}(x_t))_k$; contaminants are predicted to be ``further along'' the diffusion than they are.
\item \textbf{MSM-EMA}: score norm at the smallest training $\sigma$ (the most density-sensitive scale in MSM's schedule).
\end{itemize}
In all cases the filter cutoff is the $\gamma$-th batch percentile (with $\gamma = 80$ selected by ablation; see Appendix~\ref{app:gamma_ablation}) and the EMA decay is 0.999.

\begin{algorithm}[H]
\caption{K-DSM training with EMA-teacher filtering}
\label{alg:kdsm-ema}
\begin{algorithmic}[1]
\Require clean loader $\mathcal D$, online network $s_\theta$, per-feature noise vector $\boldsymbol\sigma$ from Algorithm~\ref{alg:rearrangement_new}, EMA decay $\rho$, filtering percentile $\gamma \in (0, 100)$
\State $\theta_{\text{ema}} \gets \theta$ \Comment{initialise teacher from a copy of the student}
\For{each epoch}
  \For{each minibatch $\{x_i\}$ of size $B$ from $\mathcal D$}
    \State $n_i \gets \|s_{\theta_{\text{ema}}}(x_i)\|_2$ for $i=1,\dots,B$ \Comment{teacher density proxy, no grad}
    \State $t \gets \mathrm{quantile}(\{n_i\}, \gamma/100)$
    \State $\mathcal B \gets \{\,i : n_i \le t\,\}$ \Comment{drop top-$(100-\gamma)\%$ of batch}
    \State $\varepsilon_i \sim \mathcal N(0, I_d)$ for $i \in \mathcal B$
    \State $\tilde x_i \gets x_i + \boldsymbol\sigma \odot \varepsilon_i$
    \State $\mathcal L \gets \tfrac{1}{2|\mathcal B|}\sum_{i\in\mathcal B} \|s_\theta(\tilde x_i) + \varepsilon_i\|_2^2$
    \State $\theta \gets \theta - \lambda \nabla_\theta \mathcal L$
    \State $\theta_{\text{ema}} \gets \rho\,\theta_{\text{ema}} + (1-\rho)\,\theta$ \Comment{EMA update after the step}
  \EndFor
\EndFor
\end{algorithmic}
\end{algorithm}

\paragraph{Design notes.} Three design choices matter in practice. First, the teacher is updated \emph{after} the student's step, not before, so the filter signal used in step $k$ is based purely on state from step $k{-}1$ and earlier. Second, the filter acts batchwise on percentiles, not on absolute thresholds, so it is invariant to the overall scale of the score norms at any point during training. Third, if fewer than two samples survive the filter (an edge case on very small batches), we fall back to the whole batch rather than propagate an empty gradient.

\paragraph{Connection to contamination rate.} The $\gamma$-th-percentile cutoff implicitly assumes that no more than $(100-\gamma)\%$ of a batch is contaminated. With $\gamma = 80$, this means the method tolerates up to $20\%$ contamination per batch. In our ADBench experiments the mean contamination across 57 datasets is $11.9\%$, so this is a conservative choice. On datasets with much higher contamination (e.g.\ SpamBase, Pima, magic.gamma, where anomaly rates exceed $30\%$), the fixed cutoff under-filters and the method's gains are correspondingly smaller. An adaptive cutoff tied to an estimate of the contamination rate is a natural extension.

\subsection{Filtering Percentile Ablation}
\label{app:gamma_ablation}

To select the filtering percentile $\gamma$, we sweep $\gamma \in \{70, 75, 80, 85, 90, 95\}$ on K-DSM-EMA and report mean AUC-PR and AUC-ROC averaged across all 57 ADBench datasets (Figure~\ref{fig:gamma_ablation}). Performance is relatively stable in the range $\gamma \in [75, 85]$, but $\gamma = 80$ yields the best mean AUC-PR and competitive AUC-ROC across seeds. Values below $75$ over-filter, discarding a substantial fraction of clean samples and starving the gradient updates; values above $85$ under-filter and allow enough contaminants through to partially suppress the score norm at anomalous points. We therefore adopt $\gamma = 80$ as the default for all experiments in the paper. Consistent with the contamination-rate analysis above, this choice conservatively tolerates up to $20\%$ batch contamination, covering the large majority of ADBench datasets.

\begin{figure}[h]
\centering
\begin{subfigure}[b]{0.48\textwidth}
    \centering
    \includegraphics[width=\linewidth]{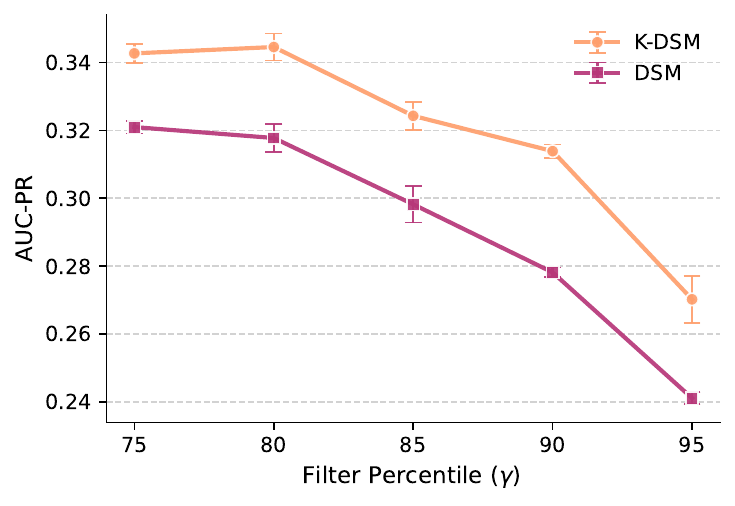}
    \caption{Mean AUC-PR vs.\ filtering percentile $\gamma$.}
    \label{fig:gamma_ablation_pr}
\end{subfigure}
\hfill
\begin{subfigure}[b]{0.48\textwidth}
    \centering
    \includegraphics[width=\linewidth]{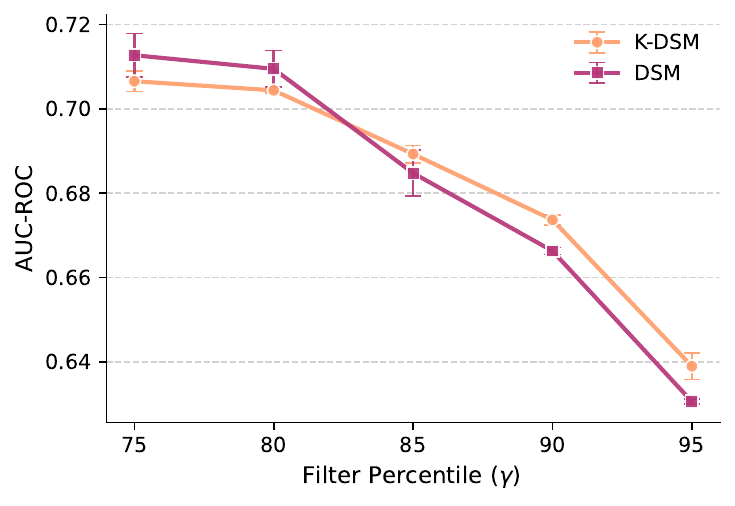}
    \caption{Mean AUC-ROC vs.\ filtering percentile $\gamma$.}
    \label{fig:gamma_ablation_roc}
\end{subfigure}
\caption{Ablation over the filtering percentile $\gamma$ for K-DSM-EMA on 57 ADBench datasets (unsupervised setting, 5 seeds). Shaded bands show $\pm 1$ standard error. Performance is robust across a broad range; $\gamma = 80$ (vertical dashed line) achieves the best AUC-PR and is used in all reported experiments.}
\label{fig:gamma_ablation}
\end{figure}

\clearpage
\section{Alternative Noise Selection Strategies}
\label{app:alternative_strategies}

We investigated Shannon entropy, variance of log density and the inter-quartile range (IQR) for selecting the per-feature noise scale. The performance is reported in Table ~\ref{tab:noise_selection_comparison}:

\begin{table}[htbp]
\centering
\caption{Comparison of noise-scale selection strategies on ADBench (57 datasets, semi-supervised setting). K-DSM uses kurtosis-guided per-feature scaling. DSM uses a single global noise scale. Entropy, variance of log-density, and IQR are alternative marginal-statistic-based scaling rules described in Appendix~\ref{app:alternative_strategies}.}
\label{tab:noise_selection_comparison}
\setlength{\tabcolsep}{6pt}
\renewcommand{\arraystretch}{1.2}
\begin{tabular}{l c c c c c}
\toprule
Metric & DSM & Entropy & Var (log-density) & IQR & K-DSM \\
\midrule
AUC-ROC & 0.813 & 0.805 & 0.8076 & 0.814 & \textbf{0.823} \\
AUC-PR  & 0.584 & 0.597 & 0.5978 & 0.605 & \textbf{0.627} \\
F1      & 0.574 & 0.578 & 0.5860 & 0.585 & \textbf{0.608} \\
\bottomrule
\end{tabular}
\end{table}
\paragraph{Shared preprocessing and postprocessing.}
All scaling strategies follow a common preprocessing pipeline. Each marginal is first subjected to histogram rearrangement and standardisation, followed by truncation to a bounded interval and discretisation where required (e.g., entropy estimation). Binary and discrete features are handled separately where applicable. All resulting noise scales are finally clipped to a predefined range $[\sigma_{\min}, \sigma_{\max}]$.

\paragraph{Entropy-based scaling.}
We estimate the Shannon entropy of each marginal using a histogram-based approximation. After preprocessing, discretised samples define an empirical distribution, and entropy is computed as $H(X) = -\sum_i p_i \log p_i.$
As a reference, we compute the entropy of a truncated standard Gaussian and use it as a baseline. Noise scales are assigned via a piecewise linear mapping centered at this Gaussian entropy such that lower entropy (more concentrated distributions) correspond to larger noise, and higher entropy to smaller noise.

\paragraph{IQR-based scaling.}
We compute the interquartile range (IQR) of each marginal after preprocessing. The IQR captures the spread of the central $50\%$ of the distribution and is normalised by the Gaussian reference value $\mathrm{IQR}_{\mathcal{N}} \approx 1.349$. Noise scales are then set inversely proportional to this ratio:$\sigma_j \propto \frac{1}{\mathrm{IQR}_j},$
such that more concentrated features receive larger perturbations.

\paragraph{Variance of log-density scaling.}
We evaluate each feature under the standard Gaussian log-density
$\log p(x) = -\frac{1}{2}\log(2\pi) - \frac{1}{2}x^2.$
The variance of these log-density values is computed per feature as a measure of variability under the Gaussian model. Noise scales are then defined via inverse scaling: $\sigma_j \propto \frac{1}{\mathrm{Var}[\log p_j(X)]}.$

\clearpage
\section{Full Unsupervised Results}
\label{app:unsup_full}

We report the full per-dataset results for the unsupervised (contaminated) setting summarised in Section~\ref{sec:unsup_main}. Following the protocol of \citet{DTE}, the training set is obtained by sampling the whole dataset with replacement (bootstrapping), so it contains anomalies at the dataset's native contamination rate; inference is performed on the full dataset. The ``-EMA'' columns correspond to the EMA-teacher variant of Appendix~\ref{app:ema_teacher}. Best per row in bold; standard errors over 5 seeds in parentheses.

\begin{table}[H]
    \centering
    \caption{Average AUC-PR and standard deviations over five seeds for the unsupervised setting on ADBench.}
    \resizebox{\textwidth}{!}{
\begin{tabular}{lccccccccccccccc}
\toprule
 & K-DSM-EMA & DSM-EMA & DDAE-EMA & DTE-EMA & MSM-EMA & K-DSM & DSM & MSM & DTE & DDAE & SLAD & ICL & MCD & LOF & KNN \\
\midrule
ALOI & 0.040 (0.000) & 0.041 (0.001) & 0.048 (0.000) & 0.033 (0.000) & 0.061 (0.002) & 0.040 (0.000) & 0.042 (0.000) & 0.088 (0.001) & 0.033 (0.000) & 0.051 (0.001) & 0.037 (0.000) & 0.039 (0.001) & 0.038 (0.000) & \textbf{0.098 (0.001)} & 0.074 (0.000) \\
annthyroid & 0.192 (0.006) & 0.276 (0.025) & 0.219 (0.012) & \textbf{0.677 (0.007)} & 0.272 (0.010) & 0.130 (0.007) & 0.201 (0.008) & 0.322 (0.013) & 0.650 (0.010) & 0.225 (0.011) & 0.132 (0.014) & 0.222 (0.011) & 0.501 (0.002) & 0.160 (0.004) & 0.229 (0.001) \\
backdoor & 0.217 (0.096) & 0.050 (0.000) & 0.082 (0.002) & 0.064 (0.000) & 0.168 (0.025) & 0.495 (0.010) & 0.546 (0.010) & 0.210 (0.008) & 0.073 (0.004) & 0.104 (0.004) & 0.043 (0.011) & \textbf{0.685 (0.028)} & 0.129 (0.086) & 0.069 (0.005) & 0.068 (0.001) \\
breastw & 0.806 (0.014) & 0.715 (0.009) & 0.920 (0.003) & 0.752 (0.007) & 0.607 (0.029) & 0.600 (0.014) & 0.404 (0.009) & 0.674 (0.023) & 0.736 (0.012) & 0.900 (0.005) & 0.663 (0.015) & 0.830 (0.023) & \textbf{0.960 (0.007)} & 0.292 (0.012) & 0.834 (0.005) \\
campaign & 0.218 (0.008) & 0.307 (0.008) & \textbf{0.393 (0.001)} & 0.334 (0.009) & 0.232 (0.009) & 0.232 (0.009) & 0.200 (0.003) & 0.201 (0.004) & 0.334 (0.006) & 0.282 (0.006) & 0.243 (0.003) & 0.253 (0.003) & 0.311 (0.001) & 0.127 (0.001) & 0.335 (0.001) \\
cardio & 0.324 (0.021) & \textbf{0.518 (0.034)} & 0.353 (0.014) & 0.295 (0.007) & 0.124 (0.003) & 0.127 (0.004) & 0.104 (0.003) & 0.125 (0.003) & 0.291 (0.016) & 0.218 (0.013) & 0.187 (0.010) & 0.124 (0.003) & 0.348 (0.012) & 0.165 (0.008) & 0.305 (0.010) \\
Cardiotocography & 0.344 (0.018) & \textbf{0.398 (0.016)} & 0.298 (0.006) & 0.262 (0.007) & 0.211 (0.003) & 0.227 (0.005) & 0.215 (0.004) & 0.213 (0.005) & 0.270 (0.005) & 0.235 (0.007) & 0.223 (0.015) & 0.192 (0.002) & 0.309 (0.004) & 0.259 (0.004) & 0.277 (0.003) \\
celeba & 0.046 (0.006) & 0.022 (0.000) & 0.042 (0.002) & 0.046 (0.005) & 0.049 (0.003) & 0.027 (0.001) & 0.025 (0.001) & 0.041 (0.003) & 0.047 (0.003) & 0.036 (0.001) & 0.056 (0.001) & 0.046 (0.002) & \textbf{0.088 (0.002)} & 0.034 (0.001) & 0.033 (0.001) \\
census & 0.115 (0.004) & \textbf{0.214 (0.016)} & 0.154 (0.003) & 0.089 (0.003) & 0.071 (0.003) & 0.083 (0.001) & 0.082 (0.001) & 0.069 (0.003) & 0.089 (0.002) & 0.108 (0.001) & 0.103 (0.017) & 0.102 (0.001) & 0.060 (0.007) & 0.058 (0.001) & 0.129 (0.002) \\
cover & \textbf{0.332 (0.082)} & 0.097 (0.055) & 0.028 (0.004) & 0.026 (0.003) & 0.034 (0.001) & 0.012 (0.001) & 0.012 (0.001) & 0.032 (0.003) & 0.033 (0.006) & 0.031 (0.005) & 0.025 (0.004) & 0.035 (0.012) & 0.071 (0.001) & 0.018 (0.001) & 0.034 (0.002) \\
donors & 0.159 (0.012) & 0.172 (0.029) & 0.150 (0.006) & 0.133 (0.017) & 0.094 (0.004) & 0.129 (0.001) & 0.107 (0.003) & 0.127 (0.003) & 0.168 (0.011) & 0.153 (0.008) & 0.079 (0.002) & 0.108 (0.008) & \textbf{0.184 (0.004)} & 0.107 (0.001) & 0.156 (0.001) \\
fault & 0.406 (0.005) & 0.400 (0.007) & 0.506 (0.003) & 0.425 (0.014) & 0.404 (0.007) & 0.459 (0.004) & 0.452 (0.004) & 0.415 (0.007) & 0.430 (0.008) & 0.502 (0.004) & 0.517 (0.006) & 0.471 (0.004) & 0.428 (0.005) & 0.387 (0.005) & \textbf{0.519 (0.003)} \\
fraud & \textbf{0.697 (0.031)} & 0.589 (0.042) & 0.648 (0.027) & 0.458 (0.035) & 0.023 (0.003) & 0.285 (0.081) & 0.134 (0.040) & 0.021 (0.006) & 0.633 (0.031) & 0.155 (0.030) & 0.202 (0.030) & 0.157 (0.018) & 0.514 (0.030) & 0.003 (0.000) & 0.095 (0.009) \\
glass & 0.121 (0.029) & 0.120 (0.022) & 0.147 (0.013) & 0.169 (0.013) & 0.091 (0.010) & 0.069 (0.012) & 0.086 (0.015) & 0.093 (0.026) & 0.193 (0.019) & \textbf{0.195 (0.027)} & 0.147 (0.021) & 0.148 (0.019) & 0.116 (0.008) & 0.169 (0.026) & 0.163 (0.029) \\
Hepatitis & 0.299 (0.040) & 0.311 (0.042) & 0.228 (0.014) & 0.280 (0.029) & 0.201 (0.007) & 0.141 (0.011) & 0.152 (0.011) & 0.197 (0.022) & 0.223 (0.030) & 0.152 (0.005) & 0.163 (0.006) & 0.228 (0.011) & \textbf{0.316 (0.017)} & 0.180 (0.022) & 0.178 (0.008) \\
http & 0.830 (0.012) & 0.415 (0.013) & 0.396 (0.149) & 0.382 (0.013) & 0.041 (0.015) & 0.071 (0.035) & 0.334 (0.090) & 0.021 (0.003) & 0.463 (0.068) & 0.008 (0.002) & 0.381 (0.017) & 0.655 (0.008) & \textbf{0.861 (0.008)} & 0.049 (0.009) & 0.021 (0.004) \\
InternetAds & 0.542 (0.036) & \textbf{0.634 (0.003)} & 0.532 (0.008) & 0.240 (0.029) & 0.325 (0.016) & 0.392 (0.025) & 0.324 (0.008) & 0.292 (0.005) & 0.246 (0.024) & 0.317 (0.007) & 0.292 (0.006) & 0.205 (0.010) & 0.441 (0.043) & 0.319 (0.013) & 0.374 (0.013) \\
Ionosphere & 0.687 (0.013) & 0.757 (0.011) & 0.815 (0.010) & 0.869 (0.011) & 0.493 (0.022) & 0.494 (0.008) & 0.544 (0.006) & 0.522 (0.014) & 0.871 (0.012) & 0.676 (0.012) & 0.808 (0.011) & 0.536 (0.028) & \textbf{0.929 (0.005)} & 0.798 (0.022) & 0.715 (0.013) \\
landsat & 0.193 (0.005) & 0.220 (0.006) & 0.272 (0.003) & 0.210 (0.006) & 0.243 (0.005) & 0.237 (0.002) & 0.246 (0.001) & 0.242 (0.004) & 0.215 (0.003) & 0.275 (0.006) & 0.306 (0.001) & \textbf{0.317 (0.008)} & 0.194 (0.001) & 0.250 (0.002) & 0.250 (0.002) \\
letter & 0.210 (0.011) & 0.271 (0.012) & 0.316 (0.015) & 0.260 (0.009) & 0.087 (0.008) & 0.396 (0.024) & \textbf{0.457 (0.023)} & 0.233 (0.013) & 0.253 (0.008) & 0.299 (0.015) & 0.303 (0.016) & 0.233 (0.012) & 0.206 (0.005) & 0.422 (0.011) & 0.323 (0.011) \\
Lymphography & 0.663 (0.042) & \textbf{0.894 (0.014)} & 0.868 (0.036) & 0.752 (0.080) & 0.052 (0.008) & 0.045 (0.012) & 0.043 (0.009) & 0.045 (0.010) & 0.368 (0.062) & 0.132 (0.033) & 0.690 (0.012) & 0.420 (0.053) & 0.796 (0.044) & 0.123 (0.042) & 0.209 (0.045) \\
magic.gamma & 0.688 (0.002) & 0.679 (0.005) & 0.698 (0.002) & 0.651 (0.011) & 0.622 (0.010) & 0.532 (0.002) & 0.516 (0.002) & 0.637 (0.002) & 0.670 (0.010) & 0.695 (0.004) & 0.515 (0.016) & 0.572 (0.016) & 0.673 (0.000) & 0.519 (0.002) & \textbf{0.729 (0.001)} \\
mammography & 0.107 (0.001) & 0.060 (0.004) & 0.140 (0.012) & \textbf{0.190 (0.010)} & 0.076 (0.007) & 0.144 (0.010) & 0.052 (0.001) & 0.100 (0.008) & 0.173 (0.008) & 0.128 (0.012) & 0.078 (0.008) & 0.067 (0.010) & 0.037 (0.002) & 0.085 (0.003) & 0.158 (0.005) \\
mnist & \textbf{0.400 (0.008)} & 0.322 (0.008) & 0.349 (0.003) & 0.372 (0.010) & 0.291 (0.008) & 0.319 (0.010) & 0.307 (0.006) & 0.210 (0.009) & 0.391 (0.016) & 0.317 (0.007) & 0.148 (0.056) & 0.255 (0.014) & 0.379 (0.009) & 0.226 (0.008) & 0.376 (0.002) \\
musk & \textbf{1.000 (0.000)} & 1.000 (0.000) & \textbf{1.000 (0.000)} & 0.775 (0.114) & 0.112 (0.010) & 0.292 (0.057) & 0.468 (0.035) & 0.109 (0.007) & 0.493 (0.074) & 0.323 (0.040) & 0.277 (0.032) & 0.575 (0.088) & 0.131 (0.022) & 0.125 (0.028) & 0.166 (0.017) \\
optdigits & 0.035 (0.002) & \textbf{0.037 (0.002)} & 0.028 (0.000) & 0.024 (0.001) & 0.027 (0.001) & 0.032 (0.001) & 0.032 (0.000) & 0.034 (0.002) & 0.028 (0.002) & 0.031 (0.001) & 0.030 (0.001) & 0.035 (0.001) & 0.028 (0.001) & 0.034 (0.001) & 0.022 (0.000) \\
PageBlocks & \textbf{0.672 (0.015)} & 0.443 (0.007) & 0.555 (0.018) & 0.585 (0.014) & 0.470 (0.015) & 0.272 (0.005) & 0.299 (0.014) & 0.457 (0.022) & 0.538 (0.014) & 0.516 (0.016) & 0.398 (0.011) & 0.500 (0.007) & 0.636 (0.004) & 0.289 (0.011) & 0.456 (0.006) \\
pendigits & \textbf{0.426 (0.173)} & 0.284 (0.120) & 0.064 (0.003) & 0.048 (0.004) & 0.033 (0.003) & 0.051 (0.004) & 0.048 (0.003) & 0.035 (0.003) & 0.051 (0.006) & 0.059 (0.005) & 0.044 (0.005) & 0.098 (0.009) & 0.054 (0.001) & 0.045 (0.004) & 0.072 (0.002) \\
Pima & 0.466 (0.024) & 0.434 (0.011) & 0.502 (0.009) & 0.432 (0.014) & 0.354 (0.009) & 0.349 (0.007) & 0.370 (0.008) & 0.380 (0.010) & 0.444 (0.005) & \textbf{0.505 (0.009)} & 0.354 (0.015) & 0.420 (0.006) & 0.502 (0.013) & 0.404 (0.008) & 0.504 (0.012) \\
satellite & 0.486 (0.009) & 0.426 (0.014) & 0.518 (0.007) & \textbf{0.578 (0.004)} & 0.364 (0.007) & 0.353 (0.004) & 0.346 (0.003) & 0.388 (0.005) & 0.552 (0.010) & 0.517 (0.007) & 0.533 (0.004) & 0.550 (0.028) & 0.529 (0.002) & 0.375 (0.004) & 0.518 (0.003) \\
satimage-2 & 0.791 (0.157) & \textbf{0.840 (0.046)} & 0.572 (0.083) & 0.164 (0.003) & 0.026 (0.003) & 0.041 (0.011) & 0.031 (0.007) & 0.033 (0.009) & 0.160 (0.012) & 0.179 (0.026) & 0.339 (0.027) & 0.378 (0.101) & 0.372 (0.010) & 0.040 (0.011) & 0.330 (0.022) \\
shuttle & \textbf{0.914 (0.002)} & 0.592 (0.128) & 0.091 (0.001) & 0.627 (0.055) & 0.211 (0.024) & 0.085 (0.001) & 0.116 (0.006) & 0.256 (0.010) & 0.685 (0.022) & 0.096 (0.005) & 0.584 (0.118) & 0.420 (0.029) & 0.867 (0.003) & 0.108 (0.002) & 0.160 (0.001) \\
skin & 0.317 (0.020) & 0.194 (0.008) & 0.394 (0.007) & 0.308 (0.007) & 0.258 (0.004) & 0.186 (0.001) & 0.190 (0.001) & 0.271 (0.005) & 0.322 (0.005) & 0.351 (0.005) & 0.354 (0.006) & 0.215 (0.010) & \textbf{0.490 (0.001)} & 0.221 (0.001) & 0.266 (0.001) \\
smtp & \textbf{0.657 (0.058)} & 0.441 (0.058) & 0.339 (0.038) & 0.379 (0.028) & 0.013 (0.008) & 0.131 (0.052) & 0.415 (0.026) & 0.019 (0.004) & 0.419 (0.017) & 0.292 (0.066) & 0.425 (0.021) & 0.175 (0.037) & 0.006 (0.000) & 0.033 (0.012) & 0.210 (0.045) \\
SpamBase & 0.429 (0.002) & 0.435 (0.006) & 0.450 (0.004) & 0.395 (0.006) & 0.416 (0.004) & 0.453 (0.003) & 0.399 (0.002) & 0.428 (0.004) & 0.401 (0.007) & \textbf{0.457 (0.002)} & 0.384 (0.001) & 0.379 (0.006) & 0.399 (0.006) & 0.358 (0.001) & 0.397 (0.002) \\
speech & \textbf{0.030 (0.005)} & 0.025 (0.001) & 0.024 (0.004) & 0.018 (0.002) & 0.019 (0.001) & 0.023 (0.004) & 0.022 (0.004) & 0.017 (0.001) & 0.019 (0.002) & 0.020 (0.002) & 0.018 (0.001) & 0.019 (0.001) & 0.019 (0.001) & 0.021 (0.001) & 0.020 (0.001) \\
Stamps & 0.264 (0.020) & \textbf{0.289 (0.034)} & 0.253 (0.021) & 0.244 (0.024) & 0.134 (0.015) & 0.089 (0.005) & 0.095 (0.006) & 0.115 (0.009) & 0.212 (0.018) & 0.189 (0.011) & 0.156 (0.020) & 0.141 (0.017) & 0.265 (0.032) & 0.152 (0.019) & 0.180 (0.015) \\
thyroid & 0.336 (0.013) & 0.414 (0.036) & 0.421 (0.028) & \textbf{0.755 (0.007)} & 0.166 (0.023) & 0.068 (0.002) & 0.045 (0.006) & 0.155 (0.008) & 0.705 (0.022) & 0.189 (0.020) & 0.161 (0.014) & 0.168 (0.027) & 0.701 (0.003) & 0.069 (0.007) & 0.261 (0.010) \\
vertebral & 0.103 (0.005) & 0.108 (0.005) & 0.092 (0.006) & 0.121 (0.006) & 0.143 (0.010) & \textbf{0.166 (0.015)} & 0.147 (0.010) & 0.143 (0.014) & 0.112 (0.005) & 0.095 (0.004) & 0.102 (0.008) & 0.112 (0.009) & 0.106 (0.012) & 0.133 (0.015) & 0.112 (0.005) \\
vowels & 0.101 (0.025) & 0.171 (0.011) & 0.475 (0.017) & 0.345 (0.071) & 0.042 (0.002) & 0.173 (0.010) & 0.221 (0.031) & 0.160 (0.017) & 0.375 (0.060) & 0.430 (0.020) & 0.320 (0.009) & 0.239 (0.038) & 0.294 (0.022) & 0.303 (0.032) & \textbf{0.499 (0.031)} \\
Waveform & 0.038 (0.000) & 0.040 (0.001) & 0.052 (0.002) & 0.040 (0.001) & 0.029 (0.000) & 0.036 (0.001) & 0.041 (0.002) & 0.039 (0.006) & 0.043 (0.003) & 0.050 (0.003) & 0.024 (0.000) & \textbf{0.120 (0.011)} & 0.040 (0.000) & 0.071 (0.004) & 0.104 (0.004) \\
WBC & 0.839 (0.041) & 0.843 (0.038) & \textbf{0.852 (0.025)} & 0.235 (0.011) & 0.095 (0.016) & 0.044 (0.007) & 0.048 (0.006) & 0.079 (0.014) & 0.194 (0.006) & 0.202 (0.038) & 0.651 (0.078) & 0.344 (0.033) & 0.814 (0.044) & 0.074 (0.008) & 0.239 (0.046) \\
WDBC & \textbf{0.561 (0.033)} & 0.449 (0.026) & 0.418 (0.036) & 0.159 (0.033) & 0.045 (0.009) & 0.027 (0.004) & 0.031 (0.005) & 0.042 (0.009) & 0.177 (0.034) & 0.206 (0.085) & 0.243 (0.062) & 0.087 (0.008) & 0.446 (0.051) & 0.084 (0.017) & 0.309 (0.040) \\
Wilt & 0.041 (0.001) & 0.051 (0.001) & 0.042 (0.000) & \textbf{0.174 (0.010)} & 0.106 (0.005) & 0.042 (0.000) & 0.067 (0.002) & 0.104 (0.003) & 0.148 (0.003) & 0.043 (0.001) & 0.101 (0.003) & 0.097 (0.004) & 0.153 (0.001) & 0.084 (0.002) & 0.059 (0.001) \\
wine & 0.087 (0.013) & 0.090 (0.009) & 0.074 (0.009) & 0.126 (0.029) & 0.104 (0.015) & 0.123 (0.012) & 0.091 (0.016) & 0.106 (0.010) & 0.106 (0.016) & 0.070 (0.011) & 0.067 (0.005) & 0.085 (0.006) & \textbf{0.663 (0.124)} & 0.068 (0.007) & 0.087 (0.002) \\
WPBC & 0.235 (0.007) & 0.239 (0.014) & 0.226 (0.005) & 0.228 (0.009) & 0.254 (0.015) & 0.255 (0.006) & 0.252 (0.009) & \textbf{0.256 (0.012)} & 0.247 (0.016) & 0.247 (0.010) & 0.235 (0.003) & 0.221 (0.004) & 0.225 (0.006) & 0.222 (0.012) & 0.245 (0.011) \\
yeast & 0.329 (0.004) & 0.307 (0.007) & 0.295 (0.002) & 0.290 (0.002) & 0.322 (0.004) & 0.313 (0.004) & 0.333 (0.005) & 0.316 (0.003) & 0.301 (0.005) & 0.297 (0.003) & \textbf{0.342 (0.003)} & 0.298 (0.005) & 0.297 (0.002) & 0.317 (0.004) & 0.301 (0.005) \\
20news & 0.079 (0.004) & 0.079 (0.002) & 0.067 (0.002) & 0.070 (0.002) & 0.060 (0.001) & 0.075 (0.004) & 0.079 (0.003) & 0.070 (0.003) & 0.067 (0.002) & 0.071 (0.002) & 0.064 (0.002) & 0.061 (0.002) & 0.065 (0.002) & \textbf{0.086 (0.002)} & 0.074 (0.002) \\
agnews & 0.107 (0.000) & 0.096 (0.000) & 0.095 (0.001) & 0.073 (0.003) & 0.048 (0.003) & 0.098 (0.004) & 0.101 (0.002) & 0.041 (0.001) & 0.074 (0.002) & 0.092 (0.002) & 0.063 (0.000) & 0.062 (0.000) & 0.055 (0.001) & \textbf{0.123 (0.001)} & 0.089 (0.000) \\
amazon & 0.060 (0.001) & 0.062 (0.000) & 0.062 (0.001) & 0.058 (0.001) & 0.046 (0.003) & \textbf{0.065 (0.001)} & 0.064 (0.001) & 0.043 (0.002) & 0.057 (0.001) & 0.063 (0.001) & 0.051 (0.000) & 0.052 (0.000) & 0.053 (0.001) & 0.058 (0.001) & 0.063 (0.000) \\
CIFAR10 & 0.095 (0.001) & 0.095 (0.000) & 0.100 (0.000) & 0.095 (0.001) & 0.068 (0.001) & 0.100 (0.001) & 0.088 (0.001) & 0.074 (0.001) & 0.093 (0.001) & 0.097 (0.001) & 0.104 (0.000) & 0.073 (0.001) & 0.087 (0.003) & \textbf{0.115 (0.001)} & 0.105 (0.000) \\
FashionMNIST & \textbf{0.419 (0.010)} & 0.362 (0.002) & 0.358 (0.001) & 0.315 (0.005) & 0.136 (0.007) & 0.315 (0.002) & 0.228 (0.003) & 0.173 (0.003) & 0.266 (0.002) & 0.292 (0.002) & 0.336 (0.003) & 0.185 (0.002) & 0.282 (0.012) & 0.189 (0.001) & 0.326 (0.001) \\
imdb & 0.045 (0.000) & 0.046 (0.000) & 0.047 (0.000) & 0.048 (0.001) & 0.051 (0.002) & 0.046 (0.001) & 0.046 (0.000) & 0.049 (0.001) & 0.046 (0.001) & 0.047 (0.001) & 0.052 (0.000) & \textbf{0.054 (0.000)} & 0.049 (0.001) & 0.049 (0.000) & 0.047 (0.000) \\
MNIST-C & 0.210 (0.015) & \textbf{0.226 (0.001)} & 0.207 (0.002) & 0.163 (0.005) & 0.133 (0.004) & 0.182 (0.001) & 0.199 (0.001) & 0.136 (0.002) & 0.163 (0.001) & 0.191 (0.001) & 0.169 (0.004) & 0.110 (0.002) & 0.164 (0.006) & 0.126 (0.001) & 0.194 (0.000) \\
MVTec & \textbf{0.554 (0.005)} & 0.547 (0.004) & 0.509 (0.002) & 0.534 (0.006) & 0.354 (0.005) & 0.419 (0.007) & 0.386 (0.003) & 0.342 (0.005) & 0.524 (0.007) & 0.419 (0.002) & 0.507 (0.009) & 0.399 (0.004) & 0.508 (0.002) & 0.528 (0.005) & 0.453 (0.004) \\
SVHN & \textbf{0.083 (0.002)} & 0.080 (0.000) & 0.080 (0.000) & 0.078 (0.000) & 0.068 (0.000) & 0.079 (0.000) & 0.076 (0.000) & 0.073 (0.000) & 0.077 (0.001) & 0.079 (0.000) & 0.079 (0.000) & 0.069 (0.001) & 0.068 (0.000) & 0.082 (0.000) & 0.081 (0.000) \\
yelp & 0.084 (0.001) & 0.080 (0.000) & 0.083 (0.001) & 0.064 (0.006) & 0.042 (0.002) & 0.081 (0.003) & 0.083 (0.001) & 0.039 (0.001) & 0.071 (0.004) & 0.082 (0.001) & 0.052 (0.000) & 0.051 (0.000) & 0.056 (0.004) & 0.084 (0.001) & \textbf{0.089 (0.001)} \\
\midrule
Overall & \textbf{0.343 (0.004)} & 0.321 (0.005) & 0.315 (0.004) & 0.290 (0.002) & 0.169 (0.001) & 0.189 (0.003) & 0.194 (0.002) & 0.177 (0.002) & 0.281 (0.002) & 0.228 (0.003) & 0.245 (0.001) & 0.240 (0.003) & 0.321 (0.005) & 0.175 (0.001) & 0.239 (0.001) \\
\bottomrule
\end{tabular}
}
\end{table}

\begin{table}[H]
    \centering
    \caption{Average AUC-ROC and standard deviations over five seeds for the unsupervised setting on ADBench.}
    \resizebox{\textwidth}{!}{
\begin{tabular}{lccccccccccccccc}
\toprule
 & K-DSM-EMA & DSM-EMA & DDAE-EMA & DTE-EMA & MSM-EMA & K-DSM & DSM & MSM & DTE & DDAE & SLAD & ICL & MCD & LOF & KNN \\
\midrule
ALOI & 0.565 (0.001) & 0.603 (0.003) & 0.575 (0.001) & 0.525 (0.001) & 0.642 (0.002) & 0.565 (0.001) & 0.569 (0.001) & 0.642 (0.003) & 0.525 (0.001) & 0.576 (0.001) & 0.550 (0.001) & 0.517 (0.003) & 0.553 (0.000) & \textbf{0.768 (0.001)} & 0.703 (0.001) \\
annthyroid & 0.707 (0.003) & 0.777 (0.008) & 0.736 (0.004) & \textbf{0.959 (0.003)} & 0.848 (0.009) & 0.642 (0.010) & 0.713 (0.006) & 0.884 (0.006) & 0.956 (0.006) & 0.737 (0.004) & 0.579 (0.029) & 0.750 (0.022) & 0.917 (0.001) & 0.704 (0.001) & 0.802 (0.002) \\
backdoor & 0.732 (0.052) & 0.727 (0.007) & 0.813 (0.009) & 0.785 (0.004) & 0.832 (0.029) & 0.851 (0.046) & \textbf{0.924 (0.010)} & 0.821 (0.017) & 0.802 (0.012) & 0.819 (0.005) & 0.620 (0.074) & 0.913 (0.005) & 0.484 (0.120) & 0.683 (0.014) & 0.709 (0.005) \\
breastw & 0.833 (0.010) & 0.775 (0.004) & 0.971 (0.001) & 0.913 (0.004) & 0.825 (0.019) & 0.807 (0.008) & 0.623 (0.004) & 0.846 (0.015) & 0.908 (0.005) & 0.966 (0.001) & 0.806 (0.008) & 0.941 (0.010) & \textbf{0.985 (0.002)} & 0.449 (0.013) & 0.875 (0.007) \\
campaign & 0.706 (0.009) & 0.670 (0.012) & \textbf{0.787 (0.001)} & 0.786 (0.009) & 0.689 (0.003) & 0.720 (0.012) & 0.653 (0.003) & 0.641 (0.003) & 0.785 (0.004) & 0.762 (0.005) & 0.724 (0.003) & 0.731 (0.003) & 0.762 (0.000) & 0.549 (0.002) & 0.787 (0.001) \\
cardio & 0.758 (0.024) & \textbf{0.847 (0.016)} & 0.700 (0.018) & 0.740 (0.010) & 0.574 (0.002) & 0.510 (0.005) & 0.450 (0.010) & 0.571 (0.016) & 0.755 (0.026) & 0.626 (0.011) & 0.499 (0.018) & 0.495 (0.013) & 0.813 (0.009) & 0.546 (0.013) & 0.670 (0.012) \\
Cardiotocography & 0.610 (0.018) & \textbf{0.652 (0.009)} & 0.469 (0.005) & 0.511 (0.017) & 0.470 (0.013) & 0.460 (0.006) & 0.438 (0.005) & 0.470 (0.009) & 0.525 (0.009) & 0.447 (0.012) & 0.383 (0.018) & 0.374 (0.005) & 0.498 (0.002) & 0.530 (0.007) & 0.474 (0.006) \\
celeba & 0.706 (0.035) & 0.501 (0.004) & 0.683 (0.007) & 0.740 (0.025) & 0.729 (0.010) & 0.601 (0.011) & 0.563 (0.006) & 0.704 (0.014) & 0.751 (0.017) & 0.653 (0.007) & 0.775 (0.004) & 0.728 (0.012) & \textbf{0.820 (0.002)} & 0.590 (0.006) & 0.589 (0.005) \\
census & 0.691 (0.006) & \textbf{0.721 (0.011)} & 0.714 (0.002) & 0.612 (0.010) & 0.577 (0.008) & 0.652 (0.006) & 0.640 (0.002) & 0.578 (0.016) & 0.586 (0.007) & 0.696 (0.001) & 0.600 (0.041) & 0.651 (0.002) & 0.507 (0.045) & 0.515 (0.003) & 0.707 (0.002) \\
cover & \textbf{0.947 (0.019)} & 0.777 (0.065) & 0.747 (0.032) & 0.726 (0.030) & 0.864 (0.006) & 0.492 (0.020) & 0.499 (0.025) & 0.801 (0.013) & 0.746 (0.036) & 0.746 (0.032) & 0.742 (0.023) & 0.783 (0.041) & 0.923 (0.002) & 0.569 (0.008) & 0.751 (0.013) \\
donors & 0.791 (0.010) & 0.803 (0.027) & 0.824 (0.007) & 0.775 (0.029) & 0.721 (0.013) & 0.758 (0.003) & 0.752 (0.004) & 0.799 (0.005) & 0.831 (0.016) & 0.822 (0.007) & 0.658 (0.009) & 0.695 (0.023) & \textbf{0.863 (0.004)} & 0.629 (0.006) & 0.732 (0.003) \\
fault & 0.555 (0.006) & 0.553 (0.009) & \textbf{0.705 (0.002)} & 0.590 (0.015) & 0.572 (0.008) & 0.638 (0.006) & 0.630 (0.004) & 0.580 (0.008) & 0.592 (0.014) & 0.698 (0.003) & 0.704 (0.004) & 0.657 (0.004) & 0.604 (0.006) & 0.579 (0.006) & 0.703 (0.002) \\
fraud & 0.950 (0.006) & \textbf{0.961 (0.007)} & 0.958 (0.007) & 0.938 (0.010) & 0.926 (0.007) & 0.918 (0.008) & 0.910 (0.011) & 0.801 (0.010) & 0.947 (0.007) & 0.951 (0.006) & 0.942 (0.007) & 0.951 (0.004) & 0.957 (0.003) & 0.550 (0.033) & 0.946 (0.006) \\
glass & 0.480 (0.056) & 0.655 (0.039) & 0.872 (0.009) & 0.861 (0.012) & 0.581 (0.029) & 0.565 (0.055) & 0.606 (0.029) & 0.550 (0.053) & 0.879 (0.014) & \textbf{0.903 (0.014)} & 0.746 (0.013) & 0.838 (0.018) & 0.799 (0.005) & 0.659 (0.036) & 0.629 (0.046) \\
Hepatitis & 0.616 (0.043) & \textbf{0.664 (0.047)} & 0.555 (0.021) & 0.641 (0.038) & 0.532 (0.026) & 0.357 (0.038) & 0.378 (0.028) & 0.466 (0.028) & 0.580 (0.028) & 0.469 (0.012) & 0.455 (0.016) & 0.642 (0.020) & 0.644 (0.037) & 0.431 (0.046) & 0.475 (0.020) \\
http & 0.999 (0.000) & 0.995 (0.000) & 0.996 (0.001) & 0.994 (0.000) & 0.737 (0.097) & 0.887 (0.039) & 0.987 (0.004) & 0.400 (0.159) & 0.995 (0.001) & 0.379 (0.146) & 0.994 (0.000) & 0.998 (0.000) & \textbf{0.999 (0.000)} & 0.337 (0.005) & 0.096 (0.007) \\
InternetAds & 0.752 (0.018) & \textbf{0.795 (0.002)} & 0.745 (0.004) & 0.538 (0.034) & 0.686 (0.012) & 0.709 (0.015) & 0.664 (0.005) & 0.634 (0.011) & 0.540 (0.022) & 0.659 (0.005) & 0.609 (0.006) & 0.568 (0.020) & 0.714 (0.029) & 0.635 (0.008) & 0.681 (0.007) \\
Ionosphere & 0.670 (0.020) & 0.779 (0.017) & 0.868 (0.003) & 0.906 (0.007) & 0.690 (0.019) & 0.620 (0.005) & 0.724 (0.004) & 0.661 (0.018) & 0.909 (0.008) & 0.812 (0.006) & 0.885 (0.005) & 0.682 (0.033) & \textbf{0.939 (0.006)} & 0.866 (0.009) & 0.656 (0.017) \\
landsat & 0.463 (0.010) & 0.518 (0.009) & 0.625 (0.004) & 0.520 (0.011) & 0.552 (0.008) & 0.547 (0.001) & 0.559 (0.002) & 0.556 (0.006) & 0.522 (0.005) & 0.622 (0.007) & \textbf{0.674 (0.002)} & 0.617 (0.005) & 0.477 (0.002) & 0.544 (0.007) & 0.579 (0.003) \\
letter & 0.731 (0.011) & 0.769 (0.006) & 0.834 (0.007) & 0.792 (0.004) & 0.633 (0.011) & 0.854 (0.012) & \textbf{0.896 (0.006)} & 0.645 (0.015) & 0.784 (0.009) & 0.826 (0.010) & 0.863 (0.002) & 0.786 (0.006) & 0.797 (0.003) & 0.876 (0.002) & 0.881 (0.005) \\
Lymphography & 0.880 (0.049) & \textbf{0.996 (0.001)} & 0.995 (0.001) & 0.943 (0.032) & 0.339 (0.042) & 0.135 (0.021) & 0.194 (0.034) & 0.353 (0.066) & 0.789 (0.064) & 0.514 (0.029) & 0.969 (0.003) & 0.948 (0.007) & 0.989 (0.003) & 0.618 (0.077) & 0.544 (0.063) \\
magic.gamma & 0.752 (0.002) & 0.755 (0.004) & 0.768 (0.002) & 0.759 (0.005) & 0.745 (0.006) & 0.679 (0.003) & 0.666 (0.003) & 0.760 (0.002) & 0.773 (0.009) & 0.769 (0.005) & 0.640 (0.008) & 0.687 (0.010) & 0.746 (0.000) & 0.679 (0.002) & \textbf{0.797 (0.001)} \\
mammography & \textbf{0.838 (0.001)} & 0.738 (0.018) & 0.807 (0.014) & 0.834 (0.007) & 0.784 (0.011) & 0.838 (0.003) & 0.695 (0.012) & 0.811 (0.010) & 0.808 (0.012) & 0.820 (0.009) & 0.722 (0.008) & 0.705 (0.016) & 0.696 (0.008) & 0.703 (0.009) & 0.829 (0.002) \\
mnist & 0.756 (0.006) & 0.693 (0.009) & 0.779 (0.004) & 0.823 (0.011) & 0.682 (0.008) & 0.801 (0.006) & 0.745 (0.006) & 0.657 (0.015) & 0.843 (0.011) & 0.775 (0.003) & 0.554 (0.054) & 0.681 (0.020) & \textbf{0.846 (0.005)} & 0.658 (0.009) & 0.824 (0.003) \\
musk & \textbf{1.000 (0.000)} & \textbf{1.000 (0.000)} & \textbf{1.000 (0.000)} & 0.975 (0.016) & 0.876 (0.014) & 0.859 (0.039) & 0.919 (0.015) & 0.793 (0.011) & 0.969 (0.007) & 0.916 (0.011) & 0.836 (0.020) & 0.979 (0.006) & 0.615 (0.043) & 0.558 (0.029) & 0.597 (0.031) \\
optdigits & 0.530 (0.016) & 0.577 (0.015) & 0.497 (0.005) & 0.443 (0.019) & 0.519 (0.008) & 0.538 (0.005) & 0.533 (0.003) & 0.527 (0.009) & 0.512 (0.031) & 0.522 (0.016) & 0.557 (0.017) & \textbf{0.612 (0.014)} & 0.529 (0.021) & 0.529 (0.016) & 0.399 (0.012) \\
PageBlocks & \textbf{0.940 (0.006)} & 0.865 (0.007) & 0.918 (0.006) & 0.911 (0.007) & 0.913 (0.006) & 0.663 (0.003) & 0.711 (0.012) & 0.910 (0.008) & 0.915 (0.003) & 0.901 (0.006) & 0.735 (0.012) & 0.891 (0.008) & 0.930 (0.001) & 0.703 (0.007) & 0.817 (0.006) \\
pendigits & 0.842 (0.072) & 0.827 (0.081) & 0.711 (0.015) & 0.756 (0.020) & 0.579 (0.021) & 0.536 (0.017) & 0.541 (0.019) & 0.564 (0.015) & 0.759 (0.034) & 0.706 (0.031) & 0.664 (0.020) & \textbf{0.847 (0.008)} & 0.761 (0.006) & 0.546 (0.018) & 0.716 (0.012) \\
Pima & 0.588 (0.020) & 0.569 (0.007) & \textbf{0.691 (0.009)} & 0.607 (0.015) & 0.512 (0.011) & 0.465 (0.015) & 0.509 (0.016) & 0.529 (0.019) & 0.626 (0.009) & 0.685 (0.013) & 0.474 (0.020) & 0.572 (0.019) & 0.690 (0.008) & 0.561 (0.016) & 0.636 (0.017) \\
satellite & 0.582 (0.007) & 0.560 (0.009) & 0.715 (0.006) & 0.729 (0.006) & 0.586 (0.008) & 0.545 (0.004) & 0.528 (0.004) & 0.597 (0.007) & 0.706 (0.010) & 0.706 (0.006) & \textbf{0.751 (0.004)} & 0.705 (0.016) & 0.648 (0.002) & 0.541 (0.006) & 0.662 (0.002) \\
satimage-2 & 0.934 (0.065) & \textbf{0.998 (0.001)} & 0.983 (0.005) & 0.959 (0.001) & 0.701 (0.012) & 0.498 (0.024) & 0.470 (0.006) & 0.678 (0.011) & 0.952 (0.006) & 0.884 (0.026) & 0.965 (0.007) & 0.976 (0.011) & 0.984 (0.000) & 0.529 (0.036) & 0.919 (0.011) \\
shuttle & \textbf{0.994 (0.000)} & 0.831 (0.073) & 0.373 (0.004) & 0.975 (0.008) & 0.742 (0.057) & 0.333 (0.014) & 0.438 (0.029) & 0.845 (0.011) & 0.984 (0.002) & 0.399 (0.033) & 0.928 (0.053) & 0.942 (0.009) & 0.983 (0.000) & 0.524 (0.005) & 0.630 (0.002) \\
skin & 0.684 (0.023) & 0.495 (0.020) & 0.831 (0.006) & 0.748 (0.008) & 0.672 (0.005) & 0.497 (0.004) & 0.513 (0.003) & 0.698 (0.008) & 0.764 (0.006) & 0.793 (0.006) & 0.750 (0.003) & 0.550 (0.024) & \textbf{0.891 (0.001)} & 0.549 (0.001) & 0.672 (0.001) \\
smtp & 0.841 (0.025) & 0.921 (0.011) & 0.922 (0.013) & \textbf{0.953 (0.007)} & 0.930 (0.017) & 0.803 (0.007) & 0.933 (0.007) & 0.936 (0.010) & 0.952 (0.007) & 0.921 (0.007) & 0.919 (0.006) & 0.893 (0.018) & 0.951 (0.005) & 0.891 (0.022) & 0.894 (0.038) \\
SpamBase & 0.544 (0.002) & 0.528 (0.006) & 0.593 (0.005) & 0.517 (0.007) & 0.531 (0.005) & 0.599 (0.004) & 0.520 (0.004) & 0.531 (0.005) & 0.519 (0.009) & \textbf{0.611 (0.003)} & 0.491 (0.002) & 0.506 (0.010) & 0.531 (0.007) & 0.451 (0.002) & 0.494 (0.005) \\
speech & 0.564 (0.012) & \textbf{0.567 (0.011)} & 0.495 (0.010) & 0.475 (0.012) & 0.520 (0.015) & 0.467 (0.015) & 0.474 (0.017) & 0.486 (0.016) & 0.490 (0.005) & 0.494 (0.009) & 0.512 (0.011) & 0.516 (0.015) & 0.476 (0.005) & 0.509 (0.004) & 0.515 (0.005) \\
Stamps & 0.802 (0.035) & 0.812 (0.047) & 0.822 (0.011) & 0.759 (0.015) & 0.573 (0.040) & 0.388 (0.020) & 0.419 (0.030) & 0.504 (0.030) & 0.704 (0.011) & 0.734 (0.019) & 0.505 (0.020) & 0.558 (0.049) & \textbf{0.833 (0.014)} & 0.482 (0.034) & 0.567 (0.030) \\
thyroid & 0.963 (0.002) & 0.961 (0.004) & 0.966 (0.003) & 0.988 (0.001) & 0.903 (0.006) & 0.620 (0.004) & 0.536 (0.020) & 0.917 (0.005) & \textbf{0.989 (0.001)} & 0.928 (0.007) & 0.777 (0.010) & 0.898 (0.022) & 0.986 (0.000) & 0.651 (0.012) & 0.949 (0.006) \\
vertebral & 0.411 (0.021) & 0.445 (0.010) & 0.383 (0.011) & 0.507 (0.018) & 0.539 (0.035) & \textbf{0.620 (0.012)} & 0.576 (0.005) & 0.542 (0.036) & 0.452 (0.016) & 0.400 (0.013) & 0.399 (0.013) & 0.463 (0.025) & 0.412 (0.020) & 0.503 (0.028) & 0.483 (0.033) \\
vowels & 0.514 (0.026) & 0.712 (0.039) & \textbf{0.955 (0.005)} & 0.884 (0.020) & 0.593 (0.006) & 0.707 (0.028) & 0.808 (0.035) & 0.711 (0.009) & 0.913 (0.023) & 0.935 (0.007) & 0.931 (0.002) & 0.879 (0.022) & 0.906 (0.005) & 0.926 (0.007) & 0.943 (0.016) \\
Waveform & 0.553 (0.011) & 0.565 (0.012) & 0.656 (0.006) & 0.591 (0.006) & 0.510 (0.005) & 0.563 (0.004) & 0.604 (0.005) & 0.505 (0.013) & 0.602 (0.014) & 0.656 (0.008) & 0.438 (0.004) & \textbf{0.728 (0.015)} & 0.574 (0.002) & 0.694 (0.007) & 0.716 (0.003) \\
WBC & 0.970 (0.017) & 0.985 (0.005) & \textbf{0.989 (0.001)} & 0.867 (0.007) & 0.616 (0.028) & 0.299 (0.018) & 0.226 (0.030) & 0.539 (0.068) & 0.846 (0.006) & 0.815 (0.026) & 0.954 (0.017) & 0.935 (0.007) & 0.984 (0.004) & 0.577 (0.049) & 0.525 (0.042) \\
WDBC & \textbf{0.986 (0.001)} & 0.981 (0.003) & 0.979 (0.002) & 0.820 (0.086) & 0.546 (0.096) & 0.253 (0.036) & 0.286 (0.057) & 0.519 (0.098) & 0.857 (0.026) & 0.795 (0.084) & 0.817 (0.059) & 0.814 (0.028) & 0.976 (0.005) & 0.739 (0.090) & 0.809 (0.046) \\
Wilt & 0.411 (0.008) & 0.523 (0.007) & 0.427 (0.005) & 0.850 (0.011) & 0.766 (0.013) & 0.412 (0.005) & 0.592 (0.010) & 0.767 (0.007) & 0.834 (0.006) & 0.431 (0.012) & 0.681 (0.007) & 0.716 (0.010) & \textbf{0.858 (0.001)} & 0.684 (0.008) & 0.589 (0.005) \\
wine & 0.498 (0.060) & 0.514 (0.029) & 0.458 (0.050) & 0.584 (0.118) & 0.500 (0.073) & 0.594 (0.033) & 0.403 (0.014) & 0.485 (0.014) & 0.557 (0.065) & 0.336 (0.016) & 0.364 (0.036) & 0.490 (0.028) & \textbf{0.949 (0.034)} & 0.355 (0.061) & 0.484 (0.020) \\
WPBC & 0.491 (0.018) & 0.505 (0.017) & 0.488 (0.012) & 0.468 (0.014) & 0.503 (0.018) & \textbf{0.527 (0.009)} & 0.517 (0.014) & 0.521 (0.025) & 0.494 (0.029) & 0.519 (0.016) & 0.481 (0.009) & 0.462 (0.009) & 0.488 (0.009) & 0.461 (0.018) & 0.513 (0.017) \\
yeast & 0.452 (0.010) & 0.433 (0.012) & 0.405 (0.004) & 0.391 (0.003) & 0.459 (0.007) & 0.459 (0.007) & 0.465 (0.007) & 0.448 (0.008) & 0.412 (0.010) & 0.408 (0.006) & \textbf{0.497 (0.005)} & 0.420 (0.006) & 0.404 (0.004) & 0.455 (0.010) & 0.411 (0.008) \\
20news & 0.585 (0.006) & 0.603 (0.002) & 0.560 (0.007) & 0.580 (0.008) & 0.539 (0.005) & 0.546 (0.008) & 0.580 (0.006) & 0.547 (0.006) & 0.574 (0.007) & 0.559 (0.008) & 0.552 (0.004) & 0.543 (0.003) & 0.543 (0.006) & \textbf{0.608 (0.003)} & 0.571 (0.008) \\
agnews & 0.702 (0.001) & 0.701 (0.001) & 0.678 (0.002) & 0.616 (0.015) & 0.464 (0.011) & 0.683 (0.007) & 0.703 (0.001) & 0.413 (0.007) & 0.625 (0.007) & 0.680 (0.003) & 0.574 (0.000) & 0.559 (0.001) & 0.518 (0.008) & \textbf{0.712 (0.001)} & 0.660 (0.001) \\
amazon & 0.576 (0.006) & 0.599 (0.001) & 0.596 (0.007) & 0.567 (0.002) & 0.458 (0.020) & 0.601 (0.005) & 0.601 (0.004) & 0.439 (0.020) & 0.557 (0.008) & \textbf{0.607 (0.004)} & 0.518 (0.001) & 0.529 (0.001) & 0.519 (0.006) & 0.570 (0.002) & 0.605 (0.002) \\
CIFAR10 & 0.585 (0.004) & 0.610 (0.003) & 0.644 (0.001) & 0.643 (0.003) & 0.588 (0.004) & 0.662 (0.000) & 0.640 (0.002) & 0.596 (0.002) & 0.639 (0.003) & 0.649 (0.001) & 0.648 (0.001) & 0.545 (0.002) & 0.618 (0.008) & \textbf{0.687 (0.001)} & 0.662 (0.000) \\
FashionMNIST & 0.828 (0.006) & 0.839 (0.003) & \textbf{0.865 (0.001)} & 0.845 (0.002) & 0.794 (0.002) & 0.865 (0.000) & 0.798 (0.006) & 0.805 (0.004) & 0.843 (0.003) & 0.858 (0.001) & 0.862 (0.006) & 0.764 (0.001) & 0.760 (0.007) & 0.739 (0.001) & 0.863 (0.001) \\
imdb & 0.480 (0.002) & 0.485 (0.001) & 0.492 (0.001) & 0.485 (0.011) & 0.507 (0.008) & 0.482 (0.004) & 0.484 (0.002) & 0.499 (0.006) & 0.485 (0.010) & 0.489 (0.006) & 0.506 (0.001) & \textbf{0.521 (0.001)} & 0.495 (0.009) & 0.501 (0.002) & 0.496 (0.001) \\
MNIST-C & 0.727 (0.007) & 0.752 (0.002) & 0.782 (0.002) & 0.743 (0.003) & 0.747 (0.003) & 0.772 (0.001) & 0.760 (0.004) & 0.730 (0.002) & 0.751 (0.001) & \textbf{0.787 (0.002)} & 0.739 (0.005) & 0.683 (0.004) & 0.684 (0.007) & 0.699 (0.002) & 0.787 (0.001) \\
MVTec & 0.667 (0.006) & 0.712 (0.006) & 0.678 (0.002) & 0.724 (0.006) & 0.613 (0.006) & 0.624 (0.007) & 0.585 (0.004) & 0.582 (0.009) & 0.735 (0.005) & 0.624 (0.002) & 0.697 (0.008) & 0.656 (0.005) & 0.693 (0.003) & \textbf{0.740 (0.002)} & 0.624 (0.004) \\
SVHN & 0.589 (0.004) & 0.596 (0.001) & 0.609 (0.001) & 0.605 (0.002) & 0.586 (0.003) & 0.606 (0.000) & 0.617 (0.001) & 0.600 (0.002) & 0.601 (0.003) & 0.613 (0.001) & 0.596 (0.001) & 0.567 (0.002) & 0.552 (0.002) & \textbf{0.628 (0.000)} & 0.611 (0.001) \\
yelp & 0.651 (0.004) & 0.661 (0.002) & 0.653 (0.004) & 0.573 (0.023) & 0.428 (0.022) & 0.643 (0.003) & 0.653 (0.004) & 0.392 (0.014) & 0.612 (0.010) & 0.650 (0.003) & 0.541 (0.001) & 0.533 (0.004) & 0.515 (0.018) & 0.659 (0.002) & \textbf{0.678 (0.002)} \\
\midrule
Overall & 0.701 (0.003) & 0.710 (0.003) & 0.725 (0.001) & 0.725 (0.003) & 0.641 (0.003) & 0.602 (0.002) & 0.604 (0.001) & 0.628 (0.005) & 0.726 (0.002) & 0.685 (0.005) & 0.673 (0.001) & 0.695 (0.001) & \textbf{0.730 (0.004)} & 0.608 (0.002) & 0.665 (0.001) \\
\bottomrule
\end{tabular}
}
\end{table}

\begin{table}[H]
    \centering
    \caption{Average AUC-F1 and standard deviations over five seeds for the unsupervised setting on ADBench.}
    \resizebox{\textwidth}{!}{
\begin{tabular}{lccccccccccccccc}
\toprule
 & K-DSM-EMA & DSM-EMA & DDAE-EMA & DTE-EMA & MSM-EMA & K-DSM & DSM & MSM & DTE & DDAE & SLAD & ICL & MCD & LOF & KNN \\
\midrule
ALOI & 0.049 (0.001) & 0.028 (0.002) & 0.092 (0.002) & 0.043 (0.001) & 0.098 (0.006) & 0.060 (0.001) & 0.053 (0.002) & \textbf{0.141 (0.003)} & 0.045 (0.001) & 0.091 (0.003) & 0.049 (0.001) & 0.070 (0.002) & 0.046 (0.001) & 0.140 (0.001) & 0.118 (0.001) \\
annthyroid & 0.252 (0.004) & 0.297 (0.018) & 0.294 (0.014) & \textbf{0.670 (0.005)} & 0.331 (0.018) & 0.163 (0.009) & 0.261 (0.009) & 0.390 (0.016) & 0.648 (0.007) & 0.291 (0.010) & 0.128 (0.017) & 0.313 (0.008) & 0.455 (0.002) & 0.200 (0.002) & 0.306 (0.004) \\
backdoor & 0.312 (0.138) & 0.001 (0.001) & 0.003 (0.000) & 0.011 (0.002) & 0.240 (0.017) & 0.477 (0.008) & 0.515 (0.004) & 0.300 (0.011) & 0.028 (0.006) & 0.127 (0.014) & 0.039 (0.010) & \textbf{0.716 (0.063)} & 0.119 (0.083) & 0.130 (0.015) & 0.058 (0.003) \\
breastw & 0.674 (0.013) & 0.603 (0.007) & 0.878 (0.004) & 0.751 (0.016) & 0.621 (0.030) & 0.614 (0.010) & 0.396 (0.021) & 0.687 (0.011) & 0.736 (0.019) & 0.874 (0.009) & 0.590 (0.012) & 0.815 (0.024) & \textbf{0.918 (0.011)} & 0.187 (0.031) & 0.837 (0.011) \\
campaign & 0.243 (0.010) & 0.335 (0.012) & \textbf{0.402 (0.002)} & 0.397 (0.007) & 0.281 (0.007) & 0.279 (0.011) & 0.240 (0.003) & 0.255 (0.005) & 0.398 (0.005) & 0.328 (0.006) & 0.298 (0.003) & 0.314 (0.003) & 0.399 (0.001) & 0.133 (0.001) & 0.385 (0.001) \\
cardio & 0.393 (0.027) & \textbf{0.527 (0.033)} & 0.362 (0.015) & 0.301 (0.010) & 0.107 (0.017) & 0.153 (0.014) & 0.106 (0.014) & 0.117 (0.009) & 0.275 (0.011) & 0.252 (0.014) & 0.184 (0.016) & 0.159 (0.006) & 0.460 (0.017) & 0.201 (0.017) & 0.345 (0.010) \\
Cardiotocography & 0.336 (0.016) & \textbf{0.415 (0.014)} & 0.307 (0.006) & 0.276 (0.009) & 0.182 (0.006) & 0.227 (0.005) & 0.218 (0.005) & 0.175 (0.006) & 0.296 (0.010) & 0.241 (0.011) & 0.208 (0.019) & 0.200 (0.007) & 0.282 (0.001) & 0.273 (0.003) & 0.276 (0.004) \\
celeba & 0.044 (0.008) & 0.013 (0.002) & 0.054 (0.003) & 0.042 (0.006) & 0.061 (0.005) & 0.022 (0.002) & 0.022 (0.002) & 0.040 (0.004) & 0.045 (0.004) & 0.042 (0.002) & 0.066 (0.004) & 0.054 (0.004) & \textbf{0.121 (0.003)} & 0.052 (0.005) & 0.062 (0.003) \\
census & 0.150 (0.010) & \textbf{0.264 (0.019)} & 0.179 (0.004) & 0.095 (0.004) & 0.065 (0.008) & 0.054 (0.002) & 0.061 (0.001) & 0.054 (0.010) & 0.097 (0.002) & 0.114 (0.003) & 0.103 (0.018) & 0.128 (0.002) & 0.021 (0.013) & 0.004 (0.001) & 0.139 (0.002) \\
cover & \textbf{0.414 (0.073)} & 0.134 (0.066) & 0.057 (0.005) & 0.044 (0.005) & 0.017 (0.004) & 0.031 (0.006) & 0.031 (0.002) & 0.050 (0.007) & 0.063 (0.015) & 0.067 (0.009) & 0.026 (0.008) & 0.048 (0.018) & 0.054 (0.005) & 0.048 (0.004) & 0.075 (0.003) \\
donors & 0.172 (0.033) & 0.188 (0.053) & 0.133 (0.013) & 0.115 (0.016) & 0.043 (0.009) & 0.145 (0.002) & 0.077 (0.009) & 0.088 (0.005) & 0.156 (0.010) & 0.150 (0.016) & 0.004 (0.001) & 0.137 (0.019) & 0.148 (0.003) & 0.100 (0.004) & \textbf{0.212 (0.004)} \\
fault & 0.411 (0.005) & 0.405 (0.011) & 0.514 (0.005) & 0.445 (0.018) & 0.434 (0.003) & 0.459 (0.006) & 0.456 (0.006) & 0.413 (0.007) & 0.453 (0.014) & 0.506 (0.005) & 0.514 (0.003) & 0.481 (0.005) & 0.444 (0.006) & 0.414 (0.006) & \textbf{0.521 (0.003)} \\
fraud & \textbf{0.784 (0.029)} & 0.635 (0.058) & 0.715 (0.015) & 0.471 (0.035) & 0.025 (0.009) & 0.349 (0.074) & 0.225 (0.055) & 0.073 (0.019) & 0.750 (0.025) & 0.232 (0.048) & 0.261 (0.036) & 0.203 (0.033) & 0.526 (0.021) & 0.000 (0.000) & 0.121 (0.019) \\
glass & 0.110 (0.012) & 0.139 (0.028) & 0.169 (0.021) & 0.145 (0.014) & 0.109 (0.029) & 0.019 (0.019) & 0.075 (0.028) & 0.095 (0.045) & 0.155 (0.012) & 0.158 (0.055) & 0.155 (0.041) & 0.105 (0.025) & 0.039 (0.024) & 0.189 (0.029) & \textbf{0.205 (0.030)} \\
Hepatitis & 0.299 (0.034) & \textbf{0.354 (0.071)} & 0.311 (0.027) & 0.270 (0.057) & 0.166 (0.026) & 0.072 (0.022) & 0.062 (0.017) & 0.182 (0.063) & 0.166 (0.057) & 0.152 (0.017) & 0.067 (0.020) & 0.135 (0.038) & 0.259 (0.029) & 0.135 (0.044) & 0.180 (0.030) \\
http & 0.828 (0.014) & 0.009 (0.003) & 0.201 (0.194) & 0.011 (0.005) & 0.044 (0.008) & 0.009 (0.005) & 0.106 (0.100) & 0.048 (0.009) & 0.149 (0.125) & 0.027 (0.004) & 0.025 (0.006) & 0.503 (0.021) & \textbf{0.861 (0.011)} & 0.049 (0.009) & 0.041 (0.006) \\
InternetAds & 0.470 (0.024) & \textbf{0.539 (0.008)} & 0.465 (0.009) & 0.226 (0.028) & 0.367 (0.021) & 0.446 (0.029) & 0.371 (0.006) & 0.298 (0.013) & 0.218 (0.023) & 0.359 (0.009) & 0.276 (0.007) & 0.191 (0.013) & 0.431 (0.041) & 0.359 (0.011) & 0.359 (0.012) \\
Ionosphere & 0.518 (0.017) & 0.604 (0.014) & 0.711 (0.009) & 0.719 (0.013) & 0.535 (0.028) & 0.429 (0.011) & 0.523 (0.013) & 0.525 (0.011) & 0.725 (0.021) & 0.625 (0.009) & 0.717 (0.016) & 0.491 (0.021) & \textbf{0.854 (0.010)} & 0.758 (0.014) & 0.577 (0.018) \\
landsat & 0.206 (0.012) & 0.230 (0.010) & 0.294 (0.004) & 0.174 (0.017) & 0.253 (0.004) & 0.262 (0.002) & 0.269 (0.004) & 0.257 (0.005) & 0.179 (0.004) & 0.303 (0.008) & 0.368 (0.003) & \textbf{0.414 (0.016)} & 0.200 (0.002) & 0.268 (0.005) & 0.289 (0.005) \\
letter & 0.300 (0.023) & 0.338 (0.022) & 0.324 (0.012) & 0.334 (0.016) & 0.022 (0.006) & 0.396 (0.022) & \textbf{0.452 (0.024)} & 0.268 (0.017) & 0.326 (0.016) & 0.306 (0.004) & 0.342 (0.013) & 0.272 (0.015) & 0.252 (0.011) & 0.396 (0.016) & 0.356 (0.005) \\
Lymphography & 0.720 (0.056) & \textbf{0.826 (0.033)} & 0.810 (0.024) & 0.769 (0.062) & 0.021 (0.021) & 0.058 (0.038) & 0.021 (0.021) & 0.034 (0.034) & 0.377 (0.038) & 0.217 (0.055) & 0.605 (0.036) & 0.483 (0.076) & 0.758 (0.055) & 0.112 (0.047) & 0.228 (0.067) \\
magic.gamma & 0.595 (0.002) & 0.594 (0.003) & 0.610 (0.002) & 0.617 (0.009) & 0.597 (0.008) & 0.521 (0.003) & 0.511 (0.003) & 0.617 (0.004) & 0.633 (0.013) & 0.611 (0.004) & 0.483 (0.009) & 0.531 (0.014) & 0.586 (0.000) & 0.513 (0.001) & \textbf{0.639 (0.001)} \\
mammography & 0.178 (0.003) & 0.079 (0.008) & 0.188 (0.009) & 0.221 (0.010) & 0.098 (0.013) & \textbf{0.258 (0.022)} & 0.082 (0.003) & 0.145 (0.016) & 0.208 (0.008) & 0.173 (0.012) & 0.125 (0.010) & 0.109 (0.009) & 0.012 (0.002) & 0.182 (0.005) & 0.219 (0.006) \\
mnist & \textbf{0.484 (0.009)} & 0.390 (0.010) & 0.385 (0.006) & 0.381 (0.007) & 0.335 (0.004) & 0.371 (0.006) & 0.376 (0.007) & 0.251 (0.009) & 0.393 (0.016) & 0.354 (0.006) & 0.087 (0.087) & 0.295 (0.011) & 0.397 (0.005) & 0.268 (0.006) & 0.405 (0.001) \\
musk & \textbf{1.000 (0.000)} & \textbf{1.000 (0.000)} & \textbf{1.000 (0.000)} & 0.740 (0.120) & 0.049 (0.021) & 0.303 (0.052) & 0.454 (0.018) & 0.099 (0.008) & 0.470 (0.065) & 0.334 (0.031) & 0.260 (0.026) & 0.540 (0.083) & 0.173 (0.008) & 0.144 (0.035) & 0.155 (0.023) \\
optdigits & 0.017 (0.003) & 0.000 (0.000) & 0.000 (0.000) & 0.000 (0.000) & 0.001 (0.001) & 0.025 (0.003) & 0.025 (0.003) & \textbf{0.047 (0.011)} & 0.000 (0.000) & 0.020 (0.006) & 0.000 (0.000) & 0.012 (0.004) & 0.000 (0.000) & 0.047 (0.005) & 0.000 (0.000) \\
PageBlocks & \textbf{0.633 (0.016)} & 0.362 (0.007) & 0.525 (0.012) & 0.573 (0.011) & 0.471 (0.010) & 0.335 (0.008) & 0.340 (0.010) & 0.469 (0.013) & 0.527 (0.011) & 0.483 (0.019) & 0.388 (0.009) & 0.504 (0.007) & 0.598 (0.005) & 0.320 (0.012) & 0.462 (0.005) \\
pendigits & \textbf{0.449 (0.156)} & 0.319 (0.129) & 0.069 (0.007) & 0.046 (0.006) & 0.042 (0.010) & 0.076 (0.007) & 0.067 (0.004) & 0.029 (0.006) & 0.049 (0.006) & 0.069 (0.004) & 0.051 (0.012) & 0.122 (0.016) & 0.044 (0.003) & 0.090 (0.008) & 0.096 (0.005) \\
Pima & 0.470 (0.021) & 0.444 (0.009) & 0.520 (0.005) & 0.425 (0.013) & 0.367 (0.017) & 0.323 (0.010) & 0.350 (0.010) & 0.391 (0.017) & 0.448 (0.001) & 0.514 (0.004) & 0.340 (0.022) & 0.427 (0.011) & \textbf{0.533 (0.014)} & 0.406 (0.012) & 0.525 (0.011) \\
satellite & 0.424 (0.007) & 0.384 (0.008) & 0.503 (0.005) & \textbf{0.602 (0.009)} & 0.376 (0.008) & 0.365 (0.003) & 0.355 (0.003) & 0.370 (0.004) & 0.573 (0.012) & 0.506 (0.006) & 0.534 (0.006) & 0.564 (0.029) & 0.468 (0.003) & 0.371 (0.005) & 0.478 (0.003) \\
satimage-2 & 0.744 (0.130) & \textbf{0.775 (0.031)} & 0.563 (0.059) & 0.206 (0.011) & 0.028 (0.009) & 0.059 (0.015) & 0.045 (0.014) & 0.037 (0.016) & 0.197 (0.018) & 0.217 (0.020) & 0.341 (0.026) & 0.400 (0.103) & 0.355 (0.010) & 0.099 (0.027) & 0.355 (0.022) \\
shuttle & \textbf{0.957 (0.001)} & 0.635 (0.132) & 0.091 (0.003) & 0.676 (0.066) & 0.227 (0.028) & 0.093 (0.002) & 0.141 (0.008) & 0.250 (0.010) & 0.744 (0.025) & 0.103 (0.007) & 0.611 (0.139) & 0.441 (0.023) & 0.875 (0.002) & 0.128 (0.004) & 0.211 (0.003) \\
skin & 0.408 (0.019) & 0.195 (0.033) & 0.421 (0.007) & 0.199 (0.006) & 0.164 (0.011) & 0.110 (0.008) & 0.115 (0.002) & 0.180 (0.006) & 0.194 (0.011) & 0.355 (0.007) & 0.415 (0.006) & 0.189 (0.036) & \textbf{0.588 (0.002)} & 0.210 (0.003) & 0.282 (0.004) \\
smtp & 0.656 (0.058) & 0.656 (0.058) & 0.546 (0.056) & 0.614 (0.053) & 0.000 (0.000) & 0.193 (0.083) & 0.686 (0.027) & 0.000 (0.000) & \textbf{0.696 (0.020)} & 0.495 (0.072) & \textbf{0.696 (0.020)} & 0.206 (0.079) & 0.000 (0.000) & 0.000 (0.000) & 0.324 (0.078) \\
SpamBase & 0.420 (0.001) & 0.405 (0.005) & 0.464 (0.006) & 0.380 (0.007) & 0.413 (0.005) & 0.471 (0.003) & 0.387 (0.002) & 0.395 (0.005) & 0.385 (0.010) & \textbf{0.479 (0.004)} & 0.388 (0.003) & 0.379 (0.012) & 0.410 (0.007) & 0.355 (0.003) & 0.417 (0.003) \\
speech & \textbf{0.043 (0.008)} & 0.039 (0.011) & 0.026 (0.011) & 0.023 (0.012) & 0.026 (0.008) & 0.033 (0.009) & 0.026 (0.008) & 0.007 (0.004) & 0.030 (0.010) & 0.036 (0.012) & 0.013 (0.006) & 0.026 (0.011) & 0.030 (0.006) & 0.030 (0.008) & 0.030 (0.003) \\
Stamps & 0.210 (0.035) & \textbf{0.239 (0.057)} & 0.229 (0.037) & 0.222 (0.022) & 0.104 (0.016) & 0.056 (0.015) & 0.063 (0.018) & 0.126 (0.014) & 0.213 (0.019) & 0.176 (0.013) & 0.164 (0.027) & 0.159 (0.035) & 0.124 (0.030) & 0.187 (0.033) & 0.206 (0.020) \\
thyroid & 0.383 (0.015) & 0.409 (0.043) & 0.417 (0.050) & \textbf{0.703 (0.014)} & 0.228 (0.036) & 0.148 (0.005) & 0.095 (0.020) & 0.178 (0.025) & 0.701 (0.004) & 0.202 (0.032) & 0.168 (0.021) & 0.194 (0.035) & 0.656 (0.000) & 0.097 (0.009) & 0.297 (0.012) \\
vertebral & 0.000 (0.000) & 0.000 (0.000) & 0.043 (0.007) & 0.084 (0.014) & 0.116 (0.020) & \textbf{0.155 (0.031)} & 0.133 (0.019) & 0.130 (0.023) & 0.078 (0.015) & 0.039 (0.010) & 0.065 (0.010) & 0.052 (0.011) & 0.068 (0.044) & 0.105 (0.034) & 0.046 (0.012) \\
vowels & 0.132 (0.019) & 0.252 (0.019) & 0.472 (0.016) & 0.392 (0.054) & 0.008 (0.005) & 0.212 (0.008) & 0.272 (0.026) & 0.208 (0.022) & 0.416 (0.047) & 0.424 (0.007) & 0.344 (0.025) & 0.244 (0.024) & 0.356 (0.028) & 0.324 (0.021) & \textbf{0.504 (0.013)} \\
Waveform & 0.052 (0.007) & 0.048 (0.004) & 0.064 (0.012) & 0.048 (0.002) & 0.012 (0.005) & 0.032 (0.012) & 0.042 (0.009) & 0.050 (0.011) & 0.064 (0.007) & 0.058 (0.010) & 0.008 (0.004) & \textbf{0.198 (0.022)} & 0.064 (0.006) & 0.102 (0.010) & 0.156 (0.009) \\
WBC & 0.751 (0.058) & 0.739 (0.059) & \textbf{0.782 (0.032)} & 0.313 (0.016) & 0.060 (0.030) & 0.040 (0.027) & 0.064 (0.027) & 0.049 (0.013) & 0.222 (0.018) & 0.244 (0.036) & 0.540 (0.088) & 0.329 (0.071) & 0.667 (0.056) & 0.082 (0.029) & 0.250 (0.025) \\
WDBC & \textbf{0.531 (0.027)} & 0.433 (0.036) & 0.417 (0.068) & 0.183 (0.047) & 0.015 (0.009) & 0.016 (0.010) & 0.016 (0.010) & 0.015 (0.009) & 0.208 (0.050) & 0.246 (0.081) & 0.303 (0.077) & 0.023 (0.010) & 0.438 (0.070) & 0.058 (0.027) & 0.279 (0.053) \\
Wilt & 0.004 (0.000) & 0.012 (0.001) & 0.003 (0.001) & 0.105 (0.031) & 0.079 (0.011) & 0.013 (0.002) & 0.074 (0.002) & 0.066 (0.006) & 0.055 (0.013) & 0.003 (0.001) & \textbf{0.107 (0.011)} & 0.075 (0.011) & 0.000 (0.000) & 0.060 (0.006) & 0.001 (0.001) \\
wine & 0.000 (0.000) & 0.035 (0.035) & 0.005 (0.005) & 0.043 (0.027) & 0.051 (0.028) & 0.066 (0.027) & 0.070 (0.020) & 0.060 (0.026) & 0.044 (0.027) & 0.029 (0.020) & 0.000 (0.000) & 0.021 (0.021) & \textbf{0.550 (0.128)} & 0.000 (0.000) & 0.076 (0.025) \\
WPBC & 0.182 (0.025) & 0.192 (0.013) & 0.203 (0.009) & 0.197 (0.027) & 0.246 (0.025) & 0.223 (0.022) & 0.254 (0.028) & \textbf{0.272 (0.016)} & 0.226 (0.027) & 0.254 (0.018) & 0.218 (0.016) & 0.178 (0.010) & 0.158 (0.016) & 0.187 (0.032) & 0.261 (0.015) \\
yeast & 0.293 (0.007) & 0.281 (0.013) & 0.282 (0.005) & 0.272 (0.005) & 0.303 (0.008) & 0.321 (0.003) & 0.331 (0.006) & 0.302 (0.007) & 0.295 (0.010) & 0.284 (0.008) & \textbf{0.336 (0.005)} & 0.277 (0.009) & 0.268 (0.007) & 0.306 (0.004) & 0.295 (0.005) \\
20news & 0.086 (0.003) & 0.081 (0.003) & 0.070 (0.004) & 0.064 (0.007) & 0.058 (0.002) & 0.087 (0.007) & 0.092 (0.004) & 0.092 (0.005) & 0.070 (0.005) & 0.090 (0.005) & 0.060 (0.004) & 0.066 (0.002) & 0.069 (0.004) & \textbf{0.103 (0.003)} & 0.081 (0.004) \\
agnews & 0.137 (0.000) & 0.105 (0.001) & 0.125 (0.001) & 0.085 (0.007) & 0.039 (0.004) & 0.114 (0.006) & 0.123 (0.003) & 0.030 (0.002) & 0.084 (0.003) & 0.115 (0.004) & 0.075 (0.001) & 0.074 (0.002) & 0.062 (0.004) & \textbf{0.169 (0.002)} & 0.110 (0.001) \\
amazon & 0.045 (0.000) & 0.050 (0.001) & 0.046 (0.001) & 0.056 (0.003) & 0.039 (0.008) & 0.057 (0.004) & 0.050 (0.001) & 0.027 (0.007) & 0.057 (0.003) & 0.051 (0.002) & 0.043 (0.001) & 0.044 (0.003) & \textbf{0.057 (0.004)} & 0.053 (0.002) & 0.046 (0.001) \\
CIFAR10 & 0.140 (0.004) & 0.123 (0.001) & 0.128 (0.001) & 0.130 (0.001) & 0.075 (0.003) & 0.127 (0.001) & 0.108 (0.002) & 0.095 (0.002) & 0.126 (0.002) & 0.125 (0.002) & 0.138 (0.001) & 0.105 (0.002) & 0.106 (0.004) & \textbf{0.159 (0.001)} & 0.130 (0.000) \\
FashionMNIST & \textbf{0.481 (0.008)} & 0.404 (0.002) & 0.394 (0.001) & 0.361 (0.006) & 0.125 (0.013) & 0.347 (0.002) & 0.267 (0.001) & 0.212 (0.004) & 0.321 (0.002) & 0.316 (0.002) & 0.360 (0.003) & 0.223 (0.003) & 0.327 (0.012) & 0.257 (0.002) & 0.354 (0.001) \\
imdb & 0.018 (0.001) & 0.022 (0.001) & 0.023 (0.001) & 0.040 (0.004) & 0.045 (0.003) & 0.029 (0.002) & 0.028 (0.002) & 0.044 (0.003) & 0.026 (0.003) & 0.025 (0.002) & \textbf{0.057 (0.001)} & 0.055 (0.002) & 0.034 (0.006) & 0.040 (0.002) & 0.019 (0.001) \\
MNIST-C & \textbf{0.264 (0.016)} & 0.255 (0.002) & 0.232 (0.001) & 0.202 (0.006) & 0.139 (0.004) & 0.212 (0.002) & 0.233 (0.003) & 0.168 (0.002) & 0.195 (0.003) & 0.217 (0.001) & 0.214 (0.004) & 0.139 (0.003) & 0.187 (0.006) & 0.159 (0.001) & 0.222 (0.001) \\
MVTec & 0.497 (0.005) & \textbf{0.514 (0.005)} & 0.454 (0.005) & 0.488 (0.007) & 0.371 (0.007) & 0.394 (0.007) & 0.364 (0.005) & 0.334 (0.008) & 0.493 (0.006) & 0.393 (0.002) & 0.469 (0.007) & 0.389 (0.005) & 0.462 (0.004) & 0.495 (0.005) & 0.422 (0.003) \\
SVHN & \textbf{0.124 (0.004)} & 0.115 (0.001) & 0.113 (0.000) & 0.113 (0.001) & 0.078 (0.000) & 0.108 (0.002) & 0.091 (0.002) & 0.094 (0.001) & 0.111 (0.001) & 0.105 (0.001) & 0.111 (0.001) & 0.093 (0.002) & 0.096 (0.001) & 0.110 (0.001) & 0.113 (0.001) \\
yelp & 0.114 (0.002) & 0.098 (0.002) & 0.109 (0.004) & 0.074 (0.014) & 0.030 (0.004) & 0.097 (0.008) & 0.109 (0.003) & 0.025 (0.004) & 0.076 (0.012) & 0.102 (0.005) & 0.023 (0.001) & 0.031 (0.002) & 0.072 (0.011) & 0.102 (0.004) & \textbf{0.115 (0.002)} \\
\midrule
Overall & \textbf{0.344 (0.005)} & 0.308 (0.005) & 0.312 (0.004) & 0.284 (0.003) & 0.166 (0.001) & 0.195 (0.003) & 0.200 (0.003) & 0.182 (0.002) & 0.279 (0.004) & 0.241 (0.003) & 0.238 (0.002) & 0.245 (0.006) & 0.308 (0.005) & 0.184 (0.002) & 0.250 (0.001) \\
\bottomrule
\end{tabular}
}
\end{table}

\clearpage
\section{Image Benchmark Results}
\label{sec:image_benchmarks_appendix}

To assess whether K-DSM generalises beyond tabular data, we evaluate on two standard industrial image anomaly detection benchmarks, MVTec-AD \citep{bergmann2019mvtec} and VisA \citep{zou2022visa}, using frozen DINOv3 \citep{siméoni2025dinov3} feature embeddings followed by standardisation. All methods share the same features and preprocessing.

\begin{table}[h]
\centering
\caption{Mean AUC-PR and AUC-ROC on MVTec-AD and VisA with standardised DINOv3 embeddings (5 seeds, standard error in parentheses). Best in bold.}
\label{tab:image_mean}
\resizebox{\textwidth}{!}{
\begin{tabular}{llccccccccc}
\toprule
 & & KNN & LOF & ICL & DTE & DSM & K-DSM & MSM & DDAE & SLAD \\
\midrule
\multirow{2}{*}{AUC-PR}  & VISA   & 0.904 (0.000) & 0.900 (0.000) & 0.785 (0.001) & 0.857 (0.001) & \textbf{0.939 (0.000)} & 0.918 (0.000) & 0.936 (0.000) & 0.926 (0.000) & 0.829 (0.000) \\
                         & MVTec  & 0.971 (0.000) & 0.970 (0.000) & 0.954 (0.000) & 0.963 (0.000) & 0.983 (0.000) & 0.962 (0.000) & 0.958 (0.000) & \textbf{0.983 (0.000)} & 0.970 (0.000) \\
\midrule
\multirow{2}{*}{AUC-ROC} & VISA   & 0.887 (0.000) & 0.878 (0.000) & 0.735 (0.001) & 0.815 (0.001) & \textbf{0.928 (0.000)} & 0.881 (0.000) & 0.926 (0.000) & 0.913 (0.000) & 0.784 (0.000) \\
                         & MVTec  & 0.943 (0.000) & 0.938 (0.000) & 0.911 (0.000) & 0.930 (0.000) & 0.960 (0.000) & 0.900 (0.000) & 0.919 (0.001) & \textbf{0.965 (0.000)} & 0.939 (0.000) \\
\bottomrule
\end{tabular}}
\end{table}

K-DSM is competitive with the strongest methods on both benchmarks but does not dominate as it does on ADBench. Single-scale DSM, DDAE, and MSM all perform comparably, and the margins among the top methods are small. This is consistent with the observation that standardised DINOv3 embeddings are high-dimensional and approximately Gaussian per coordinate after whitening: marginal kurtosis is close to 3 almost everywhere, so the CF multiplier $1 + c(\kappa-3)$ is close to 1 across features and K-DSM collapses toward single-scale DSM. Kurtosis-guided scaling delivers its largest gains precisely when marginals exhibit meaningful tail heterogeneity, as is typical of tabular ADBench features but not of Gaussianised deep embeddings. We view this as a clean statement of the regime of validity of our method rather than a failure mode: when per-feature tail shape is uninformative, there is simply little for kurtosis to exploit.

\subsection{Per-class Image Benchmark Results}
\label{sec:image_perclass}

We report per-class AUC-PR on MVTec-AD and VisA with standardised DINOv3 embeddings. All methods use identical features, preprocessing, and splits. Results are mean over 5 seeds with standard error in parentheses; best per row in bold.

\begin{table}[h]
\centering
\caption{Per-class AUC-PR on MVTec-AD (standardised DINOv3 embeddings).}
\label{tab:mvtec_perclass_pr}
\resizebox{\textwidth}{!}{
\begin{tabular}{lccccccccc}
\toprule
 & KNN & LOF & ICL & DTE & DSM & K-DSM & MSM & DDAE & SLAD \\
\midrule
bottle & 0.999 (0.000) & 0.999 (0.000) & 0.987 (0.000) & 0.999 (0.000) & \textbf{1.000 (0.000)} & 1.000 (0.000) & 1.000 (0.000) & \textbf{1.000 (0.000)} & 0.999 (0.000) \\
cable & 0.936 (0.000) & 0.946 (0.000) & 0.894 (0.000) & 0.896 (0.000) & 0.959 (0.000) & \textbf{0.974 (0.000)} & 0.956 (0.002) & 0.963 (0.001) & 0.935 (0.000) \\
capsule & 0.986 (0.000) & 0.988 (0.000) & 0.979 (0.000) & \textbf{0.988 (0.000)} & 0.987 (0.000) & 0.985 (0.000) & 0.987 (0.001) & 0.985 (0.000) & 0.978 (0.000) \\
carpet & 0.996 (0.000) & 0.996 (0.000) & 0.996 (0.000) & 0.995 (0.000) & \textbf{0.997 (0.000)} & 0.985 (0.000) & 0.996 (0.000) & 0.995 (0.000) & 0.995 (0.000) \\
grid & \textbf{0.999 (0.000)} & 0.998 (0.000) & 0.987 (0.000) & 0.996 (0.000) & 0.999 (0.000) & 0.999 (0.000) & 0.999 (0.000) & 0.998 (0.000) & 0.997 (0.000) \\
hazelnut & 0.945 (0.000) & 0.960 (0.000) & 0.937 (0.000) & 0.970 (0.000) & \textbf{0.995 (0.000)} & 0.975 (0.000) & 0.995 (0.000) & 0.983 (0.001) & 0.952 (0.000) \\
leather & \textbf{1.000 (0.000)} & 1.000 (0.000) & 0.997 (0.000) & 0.999 (0.000) & 0.999 (0.000) & 0.992 (0.000) & 1.000 (0.000) & 1.000 (0.000) & 1.000 (0.000) \\
metal\_nut & 0.986 (0.000) & 0.986 (0.000) & 0.970 (0.000) & 0.985 (0.000) & \textbf{0.998 (0.000)} & 0.978 (0.000) & 0.995 (0.001) & 0.996 (0.000) & 0.991 (0.000) \\
pill & 0.992 (0.000) & 0.992 (0.000) & 0.971 (0.000) & 0.976 (0.000) & 0.995 (0.000) & \textbf{0.996 (0.000)} & 0.996 (0.000) & 0.995 (0.000) & 0.989 (0.000) \\
screw & 0.916 (0.000) & 0.900 (0.000) & 0.908 (0.000) & 0.914 (0.000) & 0.951 (0.000) & \textbf{0.954 (0.000)} & 0.927 (0.002) & 0.934 (0.001) & 0.919 (0.000) \\
tile & 0.999 (0.000) & 1.000 (0.000) & 0.997 (0.000) & 0.997 (0.000) & \textbf{1.000 (0.000)} & 0.714 (0.000) & \textbf{1.000 (0.000)} & 1.000 (0.000) & 0.999 (0.000) \\
toothbrush & 0.959 (0.000) & 0.925 (0.000) & 0.960 (0.000) & 0.922 (0.000) & 0.941 (0.000) & 0.965 (0.000) & 0.579 (0.001) & \textbf{0.970 (0.001)} & 0.954 (0.000) \\
transistor & 0.908 (0.000) & 0.914 (0.000) & 0.776 (0.000) & 0.829 (0.000) & 0.936 (0.000) & 0.934 (0.000) & \textbf{0.959 (0.005)} & 0.956 (0.002) & 0.908 (0.000) \\
wood & 0.944 (0.000) & 0.953 (0.000) & 0.979 (0.000) & 0.979 (0.000) & 0.985 (0.000) & \textbf{0.988 (0.000)} & 0.983 (0.000) & 0.971 (0.001) & 0.946 (0.000) \\
zipper & 0.995 (0.000) & 0.997 (0.000) & 0.979 (0.000) & 0.997 (0.000) & 0.998 (0.000) & 0.992 (0.000) & \textbf{0.999 (0.000)} & 0.998 (0.000) & 0.992 (0.000) \\
\midrule
Mean & 0.971 (0.000) & 0.970 (0.000) & 0.954 (0.000) & 0.963 (0.000) & 0.983 (0.000) & 0.962 (0.000) & 0.958 (0.000) & \textbf{0.983 (0.000)} & 0.970 (0.000) \\
\bottomrule
\end{tabular}}
\end{table}

\begin{table}[h]
\centering
\caption{Per-class AUC-PR on VisA (standardised DINOv3 embeddings).}
\label{tab:visa_perclass_pr}
\resizebox{\textwidth}{!}{
\begin{tabular}{lccccccccc}
\toprule
 & KNN & LOF & ICL & DTE & DSM & K-DSM & MSM & DDAE & SLAD \\
\midrule
candle & 0.878 (0.000) & 0.887 (0.000) & 0.785 (0.010) & 0.851 (0.017) & 0.912 (0.001) & \textbf{0.924 (0.004)} & 0.909 (0.001) & 0.896 (0.001) & 0.797 (0.000) \\
capsules & 0.910 (0.000) & 0.913 (0.000) & 0.811 (0.000) & 0.891 (0.000) & 0.931 (0.000) & 0.931 (0.000) & \textbf{0.936 (0.002)} & 0.921 (0.000) & 0.878 (0.000) \\
cashew & 0.967 (0.000) & 0.967 (0.000) & 0.882 (0.000) & 0.867 (0.000) & 0.977 (0.000) & 0.875 (0.000) & \textbf{0.979 (0.001)} & 0.969 (0.002) & 0.919 (0.000) \\
chewinggum & 0.991 (0.000) & 0.990 (0.000) & 0.969 (0.000) & 0.988 (0.000) & \textbf{0.995 (0.000)} & 0.988 (0.000) & 0.994 (0.000) & 0.994 (0.000) & 0.987 (0.000) \\
fryum & 0.950 (0.000) & 0.956 (0.000) & 0.897 (0.000) & 0.924 (0.000) & 0.980 (0.000) & \textbf{0.983 (0.000)} & 0.981 (0.000) & 0.969 (0.001) & 0.938 (0.000) \\
macaroni1 & 0.846 (0.000) & 0.879 (0.000) & 0.715 (0.000) & 0.811 (0.000) & \textbf{0.939 (0.000)} & 0.918 (0.000) & 0.933 (0.002) & 0.897 (0.004) & 0.739 (0.000) \\
macaroni2 & 0.708 (0.000) & 0.716 (0.000) & 0.588 (0.000) & 0.673 (0.000) & \textbf{0.801 (0.000)} & 0.791 (0.000) & 0.783 (0.005) & 0.769 (0.006) & 0.712 (0.000) \\
pcb1 & 0.967 (0.000) & 0.907 (0.000) & 0.562 (0.000) & 0.926 (0.000) & \textbf{0.983 (0.000)} & 0.977 (0.000) & 0.971 (0.001) & 0.980 (0.001) & 0.552 (0.000) \\
pcb2 & 0.867 (0.000) & 0.873 (0.000) & 0.721 (0.000) & 0.815 (0.000) & 0.939 (0.000) & \textbf{0.940 (0.000)} & 0.937 (0.002) & 0.909 (0.003) & 0.750 (0.000) \\
pcb3 & 0.803 (0.000) & 0.812 (0.000) & 0.721 (0.000) & 0.694 (0.000) & 0.829 (0.000) & 0.819 (0.000) & 0.825 (0.001) & \textbf{0.830 (0.002)} & 0.765 (0.000) \\
pcb4 & 0.987 (0.000) & 0.921 (0.000) & 0.827 (0.000) & 0.899 (0.000) & 0.990 (0.000) & 0.922 (0.000) & 0.992 (0.001) & \textbf{0.992 (0.001)} & 0.951 (0.000) \\
pipe\_fryum & 0.976 (0.000) & 0.979 (0.000) & 0.937 (0.000) & 0.946 (0.000) & \textbf{0.992 (0.000)} & 0.952 (0.000) & 0.988 (0.001) & 0.985 (0.001) & 0.959 (0.000) \\
\midrule
Mean & 0.904 (0.000) & 0.900 (0.000) & 0.785 (0.001) & 0.857 (0.001) & \textbf{0.939 (0.000)} & 0.918 (0.000) & 0.936 (0.000) & 0.926 (0.000) & 0.829 (0.000) \\
\bottomrule
\end{tabular}}
\end{table}

\clearpage
\section{Full Semi-supervised Results}
\label{app:semi_sup_full}

We report per-dataset AUC-PR, AUC-ROC, and F1 for the semi-supervised setting summarised in Section~\ref{sec:results}. Best per row in bold; standard errors over 5 seeds in parentheses.

\begin{table}[H]
    \centering
    \caption{Average AUC-PR and standard deviations over five seeds for the semi-supervised setting on ADBench.}
    \resizebox{\textwidth}{!}{
\begin{tabular}{lccccccccc}
\toprule
 & K-DSM & DSM & MSM & DTE & DDAE & SLAD & ICL & LOF & KNN \\
\midrule
ALOI & \textbf{0.071 (0.001)} & 0.062 (0.000) & 0.061 (0.001) & 0.058 (0.000) & 0.061 (0.000) & 0.060 (0.000) & 0.055 (0.001) & 0.065 (0.000) & 0.060 (0.000) \\
annthyroid & 0.517 (0.018) & 0.406 (0.033) & 0.354 (0.007) & \textbf{0.828 (0.001)} & 0.686 (0.008) & 0.707 (0.000) & 0.527 (0.013) & 0.535 (0.000) & 0.681 (0.000) \\
backdoor & 0.887 (0.005) & 0.874 (0.004) & \textbf{0.889 (0.005)} & 0.606 (0.013) & 0.749 (0.012) & 0.048 (0.000) & 0.880 (0.005) & 0.537 (0.012) & 0.285 (0.004) \\
breastw & 0.991 (0.001) & 0.987 (0.002) & 0.824 (0.028) & 0.895 (0.008) & \textbf{0.995 (0.001)} & 0.994 (0.001) & 0.983 (0.002) & 0.828 (0.032) & 0.990 (0.001) \\
campaign & 0.357 (0.007) & 0.371 (0.008) & 0.432 (0.006) & 0.486 (0.008) & 0.517 (0.006) & 0.484 (0.000) & 0.466 (0.004) & 0.402 (0.000) & \textbf{0.587 (0.000)} \\
cardio & 0.705 (0.008) & 0.724 (0.003) & 0.259 (0.005) & 0.704 (0.003) & 0.726 (0.004) & 0.692 (0.000) & 0.631 (0.021) & 0.702 (0.000) & \textbf{0.772 (0.000)} \\
Cardiotocography & 0.544 (0.002) & 0.553 (0.007) & 0.338 (0.008) & 0.547 (0.009) & 0.530 (0.005) & 0.494 (0.000) & 0.489 (0.011) & \textbf{0.573 (0.000)} & 0.561 (0.000) \\
celeba & 0.085 (0.024) & 0.119 (0.007) & 0.083 (0.002) & \textbf{0.139 (0.006)} & 0.048 (0.003) & 0.094 (0.003) & 0.101 (0.002) & 0.036 (0.001) & 0.059 (0.002) \\
census & 0.197 (0.003) & 0.189 (0.003) & 0.185 (0.008) & 0.184 (0.003) & 0.204 (0.005) & 0.132 (0.014) & 0.258 (0.005) & 0.136 (0.002) & \textbf{0.334 (0.008)} \\
cover & 0.882 (0.020) & \textbf{0.911 (0.008)} & 0.849 (0.018) & 0.676 (0.020) & 0.705 (0.025) & 0.062 (0.021) & 0.418 (0.072) & 0.829 (0.010) & 0.552 (0.017) \\
donors & 0.991 (0.003) & 0.959 (0.012) & \textbf{0.995 (0.001)} & 0.733 (0.018) & 0.948 (0.007) & 0.460 (0.064) & 0.793 (0.034) & 0.636 (0.007) & 0.799 (0.008) \\
fault & 0.598 (0.002) & 0.609 (0.003) & 0.581 (0.008) & 0.637 (0.003) & 0.609 (0.003) & \textbf{0.668 (0.000)} & 0.644 (0.002) & 0.504 (0.000) & 0.620 (0.000) \\
fraud & 0.624 (0.060) & 0.561 (0.040) & 0.515 (0.056) & \textbf{0.625 (0.020)} & 0.532 (0.032) & 0.463 (0.026) & 0.585 (0.050) & 0.598 (0.017) & 0.423 (0.028) \\
glass & 0.780 (0.058) & 0.706 (0.032) & 0.849 (0.041) & 0.395 (0.025) & 0.628 (0.045) & 0.415 (0.023) & \textbf{0.857 (0.047)} & 0.423 (0.036) & 0.441 (0.032) \\
Hepatitis & \textbf{1.000 (0.000)} & 0.997 (0.003) & 0.997 (0.003) & 0.970 (0.023) & 0.997 (0.003) & 0.998 (0.002) & 0.990 (0.010) & 0.458 (0.048) & 0.911 (0.022) \\
http & 1.000 (0.000) & 0.895 (0.012) & 0.966 (0.022) & 0.578 (0.050) & 0.951 (0.042) & 0.886 (0.034) & 0.962 (0.009) & 0.971 (0.017) & \textbf{1.000 (0.000)} \\
InternetAds & \textbf{0.767 (0.002)} & 0.585 (0.005) & 0.691 (0.013) & 0.557 (0.027) & 0.731 (0.006) & 0.593 (0.000) & 0.596 (0.005) & 0.504 (0.000) & 0.705 (0.000) \\
Ionosphere & \textbf{0.991 (0.001)} & 0.986 (0.001) & 0.975 (0.005) & 0.966 (0.006) & 0.985 (0.002) & 0.985 (0.003) & 0.989 (0.002) & 0.945 (0.007) & 0.979 (0.003) \\
landsat & 0.497 (0.003) & 0.484 (0.007) & 0.422 (0.006) & 0.374 (0.011) & 0.519 (0.008) & 0.446 (0.000) & 0.535 (0.005) & \textbf{0.614 (0.000)} & 0.548 (0.000) \\
letter & 0.093 (0.000) & 0.095 (0.000) & \textbf{0.119 (0.009)} & 0.090 (0.001) & 0.095 (0.001) & 0.091 (0.000) & 0.099 (0.003) & 0.113 (0.000) & 0.087 (0.000) \\
Lymphography & 0.996 (0.003) & \textbf{1.000 (0.000)} & 0.900 (0.062) & 0.890 (0.041) & \textbf{1.000 (0.000)} & 0.999 (0.001) & \textbf{1.000 (0.000)} & 0.865 (0.021) & 0.986 (0.005) \\
magic.gamma & \textbf{0.904 (0.001)} & 0.875 (0.002) & 0.893 (0.002) & 0.895 (0.002) & 0.871 (0.002) & 0.774 (0.000) & 0.841 (0.002) & 0.864 (0.000) & 0.859 (0.000) \\
mammography & \textbf{0.496 (0.010)} & 0.349 (0.019) & 0.360 (0.016) & 0.399 (0.024) & 0.341 (0.017) & 0.177 (0.000) & 0.233 (0.005) & 0.341 (0.000) & 0.413 (0.000) \\
mnist & \textbf{0.823 (0.003)} & 0.771 (0.008) & 0.654 (0.004) & 0.587 (0.005) & 0.749 (0.006) & 0.678 (0.000) & 0.555 (0.006) & 0.710 (0.000) & 0.727 (0.000) \\
musk & \textbf{1.000 (0.000)} & \textbf{1.000 (0.000)} & \textbf{1.000 (0.000)} & \textbf{1.000 (0.000)} & \textbf{1.000 (0.000)} & \textbf{1.000 (0.000)} & \textbf{1.000 (0.000)} & \textbf{1.000 (0.000)} & \textbf{1.000 (0.000)} \\
optdigits & \textbf{0.648 (0.018)} & 0.475 (0.019) & 0.537 (0.081) & 0.179 (0.012) & 0.397 (0.017) & 0.334 (0.000) & 0.286 (0.021) & 0.436 (0.000) & 0.300 (0.000) \\
PageBlocks & 0.618 (0.007) & 0.551 (0.002) & 0.594 (0.011) & 0.675 (0.006) & 0.678 (0.006) & 0.640 (0.000) & 0.642 (0.017) & \textbf{0.711 (0.000)} & 0.676 (0.000) \\
pendigits & 0.966 (0.008) & 0.924 (0.014) & 0.848 (0.028) & 0.518 (0.032) & 0.943 (0.003) & 0.353 (0.000) & 0.724 (0.057) & 0.786 (0.000) & \textbf{0.970 (0.000)} \\
Pima & \textbf{0.795 (0.008)} & 0.759 (0.010) & 0.668 (0.014) & 0.675 (0.011) & 0.768 (0.014) & 0.624 (0.023) & 0.765 (0.015) & 0.674 (0.016) & 0.757 (0.015) \\
satellite & 0.849 (0.002) & 0.844 (0.002) & 0.813 (0.008) & 0.846 (0.002) & 0.858 (0.002) & \textbf{0.887 (0.000)} & 0.886 (0.003) & 0.859 (0.000) & 0.860 (0.000) \\
satimage-2 & 0.931 (0.002) & 0.908 (0.004) & 0.162 (0.041) & 0.716 (0.024) & 0.917 (0.007) & 0.956 (0.000) & 0.966 (0.004) & 0.885 (0.000) & \textbf{0.967 (0.000)} \\
shuttle & 0.989 (0.002) & 0.993 (0.002) & \textbf{1.000 (0.000)} & 0.942 (0.000) & 0.995 (0.001) & 0.981 (0.000) & 0.999 (0.000) & 0.997 (0.000) & 0.979 (0.000) \\
skin & 0.854 (0.013) & 0.697 (0.016) & 0.959 (0.007) & 0.693 (0.003) & 0.792 (0.009) & 0.789 (0.037) & 0.657 (0.047) & 0.618 (0.009) & \textbf{0.983 (0.002)} \\
smtp & \textbf{0.622 (0.048)} & 0.360 (0.032) & 0.255 (0.083) & 0.499 (0.028) & 0.525 (0.042) & 0.500 (0.028) & 0.440 (0.076) & 0.492 (0.023) & 0.503 (0.029) \\
SpamBase & 0.836 (0.001) & 0.848 (0.001) & \textbf{0.860 (0.003)} & 0.838 (0.002) & 0.851 (0.001) & 0.856 (0.000) & 0.856 (0.004) & 0.727 (0.000) & 0.831 (0.000) \\
speech & 0.030 (0.001) & 0.032 (0.002) & \textbf{0.041 (0.003)} & 0.029 (0.001) & 0.030 (0.001) & 0.033 (0.000) & 0.037 (0.003) & 0.032 (0.000) & 0.028 (0.000) \\
Stamps & \textbf{0.889 (0.031)} & 0.854 (0.010) & 0.827 (0.036) & 0.598 (0.040) & 0.828 (0.036) & 0.496 (0.047) & 0.819 (0.016) & 0.665 (0.030) & 0.733 (0.039) \\
thyroid & 0.727 (0.015) & 0.484 (0.030) & 0.126 (0.014) & \textbf{0.832 (0.004)} & 0.756 (0.034) & 0.740 (0.000) & 0.535 (0.021) & 0.606 (0.000) & 0.809 (0.000) \\
vertebral & 0.718 (0.032) & 0.520 (0.013) & \textbf{0.763 (0.013)} & 0.344 (0.023) & 0.263 (0.016) & 0.198 (0.020) & 0.480 (0.043) & 0.327 (0.018) & 0.251 (0.016) \\
vowels & 0.418 (0.008) & \textbf{0.436 (0.023)} & 0.337 (0.049) & 0.383 (0.026) & 0.349 (0.011) & 0.387 (0.000) & 0.281 (0.058) & 0.331 (0.000) & 0.302 (0.000) \\
Waveform & 0.080 (0.002) & 0.109 (0.003) & 0.093 (0.007) & 0.103 (0.005) & 0.292 (0.015) & 0.053 (0.000) & 0.255 (0.017) & \textbf{0.307 (0.000)} & 0.270 (0.000) \\
WBC & 0.986 (0.008) & 0.941 (0.033) & 0.803 (0.044) & 0.319 (0.014) & 0.970 (0.017) & 0.978 (0.009) & \textbf{0.992 (0.007)} & 0.251 (0.017) & 0.895 (0.028) \\
WDBC & \textbf{0.980 (0.018)} & 0.953 (0.023) & 0.662 (0.088) & 0.747 (0.060) & 0.925 (0.043) & 0.882 (0.030) & 0.881 (0.038) & 0.955 (0.016) & 0.853 (0.027) \\
Wilt & 0.090 (0.001) & 0.140 (0.003) & \textbf{0.446 (0.028)} & 0.256 (0.006) & 0.104 (0.001) & 0.121 (0.000) & 0.344 (0.040) & 0.157 (0.000) & 0.122 (0.000) \\
wine & 0.996 (0.004) & 0.994 (0.006) & 0.998 (0.002) & 0.995 (0.002) & 0.999 (0.001) & \textbf{1.000 (0.000)} & 0.980 (0.019) & 0.910 (0.009) & 0.952 (0.008) \\
WPBC & 0.880 (0.024) & 0.891 (0.026) & \textbf{0.899 (0.021)} & 0.603 (0.020) & 0.805 (0.020) & 0.871 (0.029) & 0.846 (0.025) & 0.420 (0.010) & 0.471 (0.013) \\
yeast & 0.488 (0.003) & 0.508 (0.003) & \textbf{0.546 (0.007)} & 0.499 (0.004) & 0.487 (0.003) & 0.507 (0.000) & 0.486 (0.003) & 0.489 (0.000) & 0.483 (0.000) \\
20news & \textbf{0.217 (0.007)} & 0.207 (0.006) & 0.208 (0.011) & 0.163 (0.006) & 0.192 (0.008) & 0.137 (0.002) & 0.136 (0.001) & 0.150 (0.003) & 0.138 (0.005) \\
agnews & 0.278 (0.003) & 0.233 (0.001) & \textbf{0.279 (0.010)} & 0.173 (0.005) & 0.228 (0.002) & 0.122 (0.000) & 0.130 (0.001) & 0.259 (0.000) & 0.167 (0.000) \\
amazon & 0.119 (0.001) & 0.118 (0.000) & 0.107 (0.003) & 0.105 (0.002) & \textbf{0.121 (0.000)} & 0.098 (0.000) & 0.100 (0.000) & 0.110 (0.000) & 0.117 (0.000) \\
CIFAR10 & \textbf{0.232 (0.000)} & 0.195 (0.001) & 0.144 (0.001) & 0.198 (0.002) & 0.208 (0.000) & 0.200 (0.000) & 0.191 (0.002) & 0.222 (0.000) & 0.196 (0.000) \\
FashionMNIST & \textbf{0.702 (0.000)} & 0.437 (0.009) & 0.448 (0.003) & 0.557 (0.001) & 0.619 (0.000) & 0.594 (0.000) & 0.647 (0.001) & 0.636 (0.000) & 0.591 (0.000) \\
imdb & 0.088 (0.000) & 0.089 (0.000) & 0.097 (0.002) & 0.087 (0.003) & 0.089 (0.000) & 0.098 (0.000) & \textbf{0.101 (0.000)} & 0.090 (0.000) & 0.089 (0.000) \\
MNIST-C & \textbf{0.656 (0.001)} & 0.527 (0.009) & 0.471 (0.004) & 0.472 (0.001) & 0.507 (0.000) & 0.469 (0.000) & 0.501 (0.001) & 0.519 (0.000) & 0.462 (0.000) \\
MVTec & 0.875 (0.004) & 0.872 (0.005) & 0.884 (0.005) & 0.838 (0.006) & 0.882 (0.005) & 0.879 (0.004) & \textbf{0.889 (0.004)} & 0.754 (0.005) & 0.754 (0.004) \\
SVHN & \textbf{0.185 (0.000)} & 0.149 (0.000) & 0.144 (0.001) & 0.156 (0.001) & 0.160 (0.000) & 0.154 (0.000) & 0.157 (0.001) & 0.160 (0.000) & 0.153 (0.000) \\
yelp & \textbf{0.173 (0.002)} & 0.162 (0.000) & 0.139 (0.003) & 0.123 (0.006) & 0.162 (0.002) & 0.100 (0.000) & 0.098 (0.000) & 0.161 (0.000) & 0.160 (0.000) \\
\midrule
Overall & \textbf{0.627 (0.003)} & 0.584 (0.001) & 0.550 (0.004) & 0.526 (0.003) & 0.594 (0.001) & 0.516 (0.002) & 0.572 (0.004) & 0.524 (0.001) & 0.565 (0.001) \\
\bottomrule
\end{tabular}
}
\end{table}

\begin{table}[H]
    \centering
    \caption{Average AUC-ROC and standard deviations over five seeds for the semi-supervised setting on ADBench.}
    \resizebox{\textwidth}{!}{
\begin{tabular}{lccccccccc}
\toprule
 & K-DSM & DSM & MSM & DTE & DDAE & SLAD & ICL & LOF & KNN \\
\midrule
ALOI & \textbf{0.541 (0.000)} & 0.494 (0.001) & 0.488 (0.001) & 0.506 (0.002) & 0.518 (0.000) & 0.512 (0.000) & 0.474 (0.004) & 0.488 (0.000) & 0.510 (0.000) \\
annthyroid & 0.816 (0.012) & 0.748 (0.026) & 0.824 (0.004) & \textbf{0.975 (0.000)} & 0.931 (0.004) & 0.934 (0.000) & 0.843 (0.013) & 0.886 (0.000) & 0.928 (0.000) \\
backdoor & 0.949 (0.002) & \textbf{0.972 (0.002)} & 0.967 (0.001) & 0.902 (0.006) & 0.955 (0.002) & 0.500 (0.000) & 0.921 (0.003) & 0.954 (0.001) & 0.907 (0.002) \\
breastw & 0.992 (0.001) & 0.990 (0.001) & 0.900 (0.018) & 0.938 (0.007) & \textbf{0.995 (0.001)} & 0.995 (0.000) & 0.989 (0.001) & 0.916 (0.016) & 0.991 (0.001) \\
campaign & 0.706 (0.006) & 0.703 (0.009) & 0.748 (0.006) & 0.791 (0.002) & 0.810 (0.004) & 0.762 (0.000) & 0.770 (0.003) & 0.705 (0.000) & \textbf{0.848 (0.000)} \\
cardio & 0.874 (0.006) & 0.895 (0.004) & 0.623 (0.006) & 0.878 (0.003) & 0.871 (0.006) & 0.819 (0.000) & 0.820 (0.015) & \textbf{0.922 (0.000)} & 0.920 (0.000) \\
Cardiotocography & 0.595 (0.004) & 0.611 (0.009) & 0.448 (0.007) & 0.620 (0.015) & 0.562 (0.010) & 0.477 (0.000) & 0.523 (0.011) & \textbf{0.645 (0.000)} & 0.619 (0.000) \\
celeba & 0.617 (0.098) & 0.743 (0.018) & 0.714 (0.004) & \textbf{0.823 (0.007)} & 0.528 (0.014) & 0.675 (0.006) & 0.761 (0.005) & 0.437 (0.003) & 0.570 (0.007) \\
census & 0.700 (0.003) & 0.686 (0.003) & 0.663 (0.007) & 0.701 (0.003) & 0.716 (0.005) & 0.537 (0.037) & 0.712 (0.003) & 0.585 (0.005) & \textbf{0.759 (0.002)} \\
cover & 0.992 (0.002) & \textbf{0.997 (0.000)} & 0.985 (0.002) & 0.982 (0.001) & 0.972 (0.004) & 0.737 (0.048) & 0.937 (0.021) & 0.992 (0.000) & 0.975 (0.001) \\
donors & 1.000 (0.000) & 0.994 (0.002) & \textbf{1.000 (0.000)} & 0.983 (0.002) & 0.996 (0.001) & 0.871 (0.031) & 0.983 (0.003) & 0.970 (0.001) & 0.988 (0.001) \\
fault & 0.569 (0.002) & 0.576 (0.003) & 0.553 (0.002) & 0.588 (0.005) & 0.579 (0.003) & \textbf{0.640 (0.000)} & 0.619 (0.006) & 0.474 (0.000) & 0.587 (0.000) \\
fraud & 0.945 (0.007) & 0.942 (0.006) & 0.949 (0.007) & 0.944 (0.007) & 0.955 (0.007) & 0.944 (0.006) & 0.950 (0.003) & 0.945 (0.007) & \textbf{0.956 (0.005)} \\
glass & 0.969 (0.013) & 0.938 (0.017) & 0.991 (0.003) & 0.918 (0.008) & 0.964 (0.003) & 0.859 (0.027) & \textbf{0.992 (0.002)} & 0.898 (0.010) & 0.921 (0.005) \\
Hepatitis & \textbf{1.000 (0.000)} & 0.999 (0.001) & 0.999 (0.001) & 0.993 (0.005) & 0.999 (0.001) & 0.999 (0.001) & 0.999 (0.001) & 0.693 (0.035) & 0.968 (0.007) \\
http & 1.000 (0.000) & 0.999 (0.000) & 1.000 (0.000) & 0.996 (0.001) & 1.000 (0.000) & 0.999 (0.000) & 1.000 (0.000) & 1.000 (0.000) & \textbf{1.000 (0.000)} \\
InternetAds & \textbf{0.826 (0.001)} & 0.692 (0.011) & 0.801 (0.004) & 0.771 (0.010) & 0.816 (0.002) & 0.760 (0.000) & 0.785 (0.001) & 0.717 (0.000) & 0.737 (0.000) \\
Ionosphere & \textbf{0.989 (0.001)} & 0.983 (0.002) & 0.977 (0.004) & 0.946 (0.013) & 0.981 (0.002) & 0.982 (0.004) & 0.989 (0.002) & 0.943 (0.010) & 0.973 (0.004) \\
landsat & 0.605 (0.002) & 0.610 (0.004) & 0.617 (0.005) & 0.521 (0.008) & 0.667 (0.002) & 0.647 (0.000) & 0.643 (0.001) & 0.666 (0.000) & \textbf{0.683 (0.000)} \\
letter & 0.413 (0.002) & 0.412 (0.001) & \textbf{0.481 (0.015)} & 0.371 (0.007) & 0.406 (0.001) & 0.382 (0.000) & 0.422 (0.013) & 0.448 (0.000) & 0.354 (0.000) \\
Lymphography & 1.000 (0.000) & \textbf{1.000 (0.000)} & 0.993 (0.004) & 0.992 (0.001) & \textbf{1.000 (0.000)} & 1.000 (0.000) & \textbf{1.000 (0.000)} & 0.985 (0.004) & 0.999 (0.000) \\
magic.gamma & \textbf{0.879 (0.002)} & 0.849 (0.003) & 0.870 (0.002) & 0.873 (0.003) & 0.839 (0.003) & 0.720 (0.000) & 0.793 (0.003) & 0.834 (0.000) & 0.833 (0.000) \\
mammography & \textbf{0.883 (0.002)} & 0.859 (0.008) & 0.857 (0.004) & 0.865 (0.007) & 0.844 (0.006) & 0.736 (0.000) & 0.779 (0.005) & 0.855 (0.000) & 0.876 (0.000) \\
mnist & \textbf{0.956 (0.002)} & 0.936 (0.004) & 0.880 (0.004) & 0.892 (0.002) & 0.943 (0.002) & 0.896 (0.000) & 0.800 (0.005) & 0.929 (0.000) & 0.938 (0.000) \\
musk & \textbf{1.000 (0.000)} & \textbf{1.000 (0.000)} & \textbf{1.000 (0.000)} & \textbf{1.000 (0.000)} & \textbf{1.000 (0.000)} & \textbf{1.000 (0.000)} & \textbf{1.000 (0.000)} & \textbf{1.000 (0.000)} & \textbf{1.000 (0.000)} \\
optdigits & \textbf{0.979 (0.002)} & 0.965 (0.003) & 0.947 (0.014) & 0.857 (0.011) & 0.960 (0.003) & 0.946 (0.000) & 0.913 (0.012) & 0.967 (0.000) & 0.940 (0.000) \\
PageBlocks & 0.831 (0.004) & 0.832 (0.003) & 0.885 (0.001) & 0.901 (0.004) & 0.900 (0.002) & 0.869 (0.000) & 0.890 (0.005) & \textbf{0.913 (0.000)} & 0.897 (0.000) \\
pendigits & \textbf{0.999 (0.000)} & 0.997 (0.001) & 0.994 (0.001) & 0.981 (0.003) & 0.998 (0.000) & 0.946 (0.000) & 0.980 (0.006) & 0.991 (0.000) & 0.999 (0.000) \\
Pima & \textbf{0.816 (0.003)} & 0.753 (0.005) & 0.693 (0.014) & 0.699 (0.009) & 0.772 (0.011) & 0.602 (0.020) & 0.774 (0.020) & 0.701 (0.010) & 0.774 (0.010) \\
satellite & 0.798 (0.002) & 0.799 (0.003) & 0.784 (0.009) & 0.786 (0.004) & 0.818 (0.003) & \textbf{0.875 (0.000)} & 0.860 (0.006) & 0.803 (0.000) & 0.822 (0.000) \\
satimage-2 & 0.993 (0.001) & 0.990 (0.001) & 0.883 (0.021) & 0.994 (0.000) & 0.994 (0.001) & \textbf{0.998 (0.000)} & 0.997 (0.001) & 0.994 (0.000) & 0.997 (0.000) \\
shuttle & 1.000 (0.000) & 0.999 (0.000) & \textbf{1.000 (0.000)} & 0.998 (0.000) & 1.000 (0.000) & 0.999 (0.000) & 1.000 (0.000) & 1.000 (0.000) & 0.999 (0.000) \\
skin & 0.954 (0.005) & 0.889 (0.011) & 0.984 (0.003) & 0.918 (0.001) & 0.946 (0.002) & 0.911 (0.012) & 0.863 (0.029) & 0.865 (0.008) & \textbf{0.995 (0.000)} \\
smtp & 0.848 (0.018) & 0.923 (0.009) & 0.943 (0.010) & \textbf{0.954 (0.007)} & 0.917 (0.013) & 0.919 (0.008) & 0.919 (0.007) & 0.929 (0.015) & 0.929 (0.012) \\
SpamBase & 0.833 (0.001) & 0.841 (0.001) & \textbf{0.853 (0.002)} & 0.831 (0.002) & 0.848 (0.001) & 0.848 (0.000) & 0.838 (0.002) & 0.732 (0.000) & 0.832 (0.000) \\
speech & 0.442 (0.006) & 0.435 (0.004) & \textbf{0.560 (0.020)} & 0.381 (0.010) & 0.388 (0.002) & 0.440 (0.000) & 0.512 (0.014) & 0.375 (0.000) & 0.364 (0.000) \\
Stamps & \textbf{0.985 (0.004)} & 0.971 (0.006) & 0.969 (0.007) & 0.922 (0.008) & 0.977 (0.004) & 0.818 (0.029) & 0.974 (0.002) & 0.942 (0.007) & 0.962 (0.006) \\
thyroid & 0.963 (0.003) & 0.884 (0.015) & 0.746 (0.037) & \textbf{0.989 (0.000)} & 0.976 (0.005) & 0.952 (0.000) & 0.955 (0.008) & 0.927 (0.000) & 0.987 (0.000) \\
vertebral & 0.893 (0.016) & 0.776 (0.005) & \textbf{0.916 (0.004)} & 0.677 (0.012) & 0.550 (0.019) & 0.449 (0.041) & 0.743 (0.029) & 0.638 (0.008) & 0.573 (0.019) \\
vowels & 0.855 (0.003) & \textbf{0.871 (0.008)} & 0.864 (0.008) & 0.869 (0.009) & 0.854 (0.002) & 0.855 (0.000) & 0.829 (0.022) & 0.863 (0.000) & 0.822 (0.000) \\
Waveform & 0.596 (0.007) & 0.693 (0.006) & 0.628 (0.016) & 0.650 (0.005) & \textbf{0.781 (0.007)} & 0.489 (0.000) & 0.725 (0.006) & 0.760 (0.000) & 0.752 (0.000) \\
WBC & 0.998 (0.001) & 0.994 (0.003) & 0.983 (0.004) & 0.844 (0.027) & 0.997 (0.002) & 0.997 (0.001) & \textbf{0.999 (0.001)} & 0.793 (0.019) & 0.989 (0.002) \\
WDBC & \textbf{0.999 (0.001)} & 0.998 (0.001) & 0.977 (0.007) & 0.989 (0.003) & 0.997 (0.001) & 0.994 (0.001) & 0.993 (0.002) & 0.998 (0.001) & 0.992 (0.001) \\
Wilt & 0.483 (0.003) & 0.650 (0.007) & \textbf{0.919 (0.006)} & 0.849 (0.004) & 0.557 (0.004) & 0.616 (0.000) & 0.846 (0.019) & 0.688 (0.000) & 0.637 (0.000) \\
wine & 0.999 (0.001) & 0.999 (0.001) & 1.000 (0.000) & 0.999 (0.000) & 1.000 (0.000) & \textbf{1.000 (0.000)} & 0.997 (0.002) & 0.985 (0.002) & 0.992 (0.001) \\
WPBC & 0.961 (0.009) & \textbf{0.966 (0.008)} & 0.964 (0.008) & 0.699 (0.020) & 0.910 (0.008) & 0.955 (0.010) & 0.943 (0.011) & 0.581 (0.009) & 0.643 (0.012) \\
yeast & 0.461 (0.005) & 0.476 (0.007) & \textbf{0.526 (0.009)} & 0.471 (0.007) & 0.449 (0.004) & 0.488 (0.000) & 0.463 (0.008) & 0.458 (0.000) & 0.447 (0.000) \\
20news & 0.688 (0.003) & \textbf{0.688 (0.002)} & 0.682 (0.007) & 0.640 (0.007) & 0.669 (0.004) & 0.595 (0.003) & 0.596 (0.004) & 0.603 (0.004) & 0.575 (0.004) \\
agnews & \textbf{0.763 (0.004)} & 0.760 (0.001) & 0.712 (0.010) & 0.676 (0.006) & 0.758 (0.002) & 0.585 (0.000) & 0.577 (0.001) & 0.746 (0.000) & 0.670 (0.000) \\
amazon & 0.604 (0.006) & 0.607 (0.001) & 0.536 (0.012) & 0.540 (0.009) & \textbf{0.618 (0.001)} & 0.521 (0.000) & 0.534 (0.001) & 0.579 (0.000) & 0.606 (0.000) \\
CIFAR10 & \textbf{0.721 (0.000)} & 0.683 (0.001) & 0.609 (0.002) & 0.689 (0.003) & 0.692 (0.000) & 0.666 (0.000) & 0.658 (0.001) & 0.703 (0.000) & 0.675 (0.000) \\
FashionMNIST & \textbf{0.937 (0.001)} & 0.711 (0.006) & 0.864 (0.001) & 0.904 (0.001) & 0.913 (0.000) & 0.900 (0.000) & 0.908 (0.001) & 0.916 (0.000) & 0.899 (0.000) \\
imdb & 0.492 (0.001) & 0.494 (0.001) & 0.508 (0.007) & 0.473 (0.015) & 0.497 (0.003) & 0.513 (0.000) & \textbf{0.522 (0.001)} & 0.496 (0.000) & 0.501 (0.000) \\
MNIST-C & \textbf{0.914 (0.000)} & 0.837 (0.009) & 0.868 (0.001) & 0.856 (0.000) & 0.867 (0.000) & 0.836 (0.000) & 0.850 (0.001) & 0.872 (0.000) & 0.841 (0.000) \\
MVTec & 0.913 (0.003) & 0.922 (0.002) & 0.941 (0.001) & 0.881 (0.004) & 0.938 (0.002) & 0.932 (0.002) & \textbf{0.943 (0.002)} & 0.802 (0.003) & 0.813 (0.002) \\
SVHN & \textbf{0.685 (0.000)} & 0.609 (0.001) & 0.626 (0.001) & 0.630 (0.001) & 0.630 (0.000) & 0.609 (0.000) & 0.617 (0.001) & 0.638 (0.000) & 0.617 (0.000) \\
yelp & \textbf{0.685 (0.003)} & 0.678 (0.001) & 0.629 (0.004) & 0.580 (0.019) & 0.682 (0.002) & 0.548 (0.000) & 0.541 (0.001) & 0.672 (0.000) & 0.681 (0.000) \\
\midrule
Overall & \textbf{0.823 (0.002)} & 0.813 (0.001) & 0.813 (0.001) & 0.805 (0.001) & 0.819 (0.001) & 0.773 (0.001) & 0.812 (0.001) & 0.786 (0.001) & 0.808 (0.000) \\
\bottomrule
\end{tabular}
}
\end{table}

\begin{table}[H]
    \centering
    \caption{Average AUC-F1 and standard deviations over five seeds for the semi-supervised setting on ADBench.}
    \resizebox{\textwidth}{!}{
\begin{tabular}{lccccccccc}
\toprule
 & K-DSM & DSM & MSM & DTE & DDAE & SLAD & ICL & LOF & KNN \\
\midrule
ALOI & 0.071 (0.002) & 0.065 (0.001) & 0.061 (0.001) & 0.043 (0.000) & 0.064 (0.002) & 0.051 (0.000) & 0.051 (0.002) & \textbf{0.082 (0.000)} & 0.059 (0.000) \\
annthyroid & 0.529 (0.020) & 0.436 (0.025) & 0.424 (0.010) & \textbf{0.773 (0.002)} & 0.628 (0.015) & 0.661 (0.000) & 0.560 (0.013) & 0.496 (0.000) & 0.620 (0.000) \\
backdoor & 0.859 (0.006) & \textbf{0.886 (0.004)} & 0.884 (0.006) & 0.788 (0.020) & 0.865 (0.005) & 0.000 (0.000) & 0.861 (0.006) & 0.721 (0.011) & 0.322 (0.006) \\
breastw & 0.962 (0.005) & 0.959 (0.004) & 0.855 (0.021) & 0.884 (0.010) & 0.969 (0.002) & 0.969 (0.003) & \textbf{0.972 (0.003)} & 0.878 (0.012) & 0.960 (0.002) \\
campaign & 0.362 (0.008) & 0.397 (0.006) & 0.478 (0.005) & 0.522 (0.004) & 0.529 (0.005) & 0.497 (0.000) & 0.487 (0.003) & 0.422 (0.000) & \textbf{0.578 (0.000)} \\
cardio & 0.645 (0.004) & \textbf{0.650 (0.004)} & 0.226 (0.013) & 0.580 (0.002) & 0.620 (0.001) & 0.614 (0.000) & 0.581 (0.006) & 0.625 (0.000) & 0.619 (0.000) \\
Cardiotocography & 0.461 (0.006) & 0.453 (0.011) & 0.304 (0.009) & 0.407 (0.015) & 0.368 (0.009) & 0.352 (0.000) & 0.378 (0.012) & \textbf{0.483 (0.000)} & 0.468 (0.000) \\
celeba & 0.103 (0.038) & \textbf{0.183 (0.012)} & 0.091 (0.007) & 0.170 (0.006) & 0.049 (0.007) & 0.141 (0.005) & 0.130 (0.004) & 0.019 (0.002) & 0.080 (0.006) \\
census & 0.219 (0.006) & 0.205 (0.004) & 0.206 (0.017) & 0.196 (0.004) & 0.213 (0.009) & 0.042 (0.042) & 0.249 (0.007) & 0.127 (0.004) & \textbf{0.313 (0.003)} \\
cover & 0.863 (0.016) & \textbf{0.874 (0.005)} & 0.826 (0.011) & 0.734 (0.014) & 0.728 (0.015) & 0.075 (0.038) & 0.476 (0.059) & 0.821 (0.010) & 0.650 (0.010) \\
donors & \textbf{0.984 (0.004)} & 0.904 (0.021) & 0.980 (0.005) & 0.852 (0.015) & 0.920 (0.008) & 0.539 (0.063) & 0.808 (0.027) & 0.748 (0.008) & 0.876 (0.006) \\
fault & 0.559 (0.002) & 0.554 (0.001) & 0.547 (0.001) & 0.551 (0.005) & 0.555 (0.005) & \textbf{0.599 (0.000)} & 0.586 (0.007) & 0.507 (0.000) & 0.556 (0.000) \\
fraud & 0.659 (0.056) & 0.595 (0.051) & 0.561 (0.052) & \textbf{0.684 (0.035)} & 0.572 (0.019) & 0.490 (0.018) & 0.614 (0.046) & 0.615 (0.021) & 0.478 (0.015) \\
glass & 0.758 (0.055) & 0.675 (0.030) & 0.838 (0.048) & 0.350 (0.027) & 0.554 (0.036) & 0.334 (0.027) & \textbf{0.860 (0.016)} & 0.250 (0.037) & 0.299 (0.048) \\
Hepatitis & \textbf{1.000 (0.000)} & 0.996 (0.004) & 0.996 (0.004) & 0.957 (0.018) & 0.995 (0.003) & 0.996 (0.004) & 0.996 (0.004) & 0.400 (0.047) & 0.820 (0.023) \\
http & 0.999 (0.001) & 0.925 (0.005) & 0.974 (0.019) & 0.392 (0.082) & 0.980 (0.017) & 0.886 (0.041) & 0.968 (0.008) & 0.966 (0.013) & \textbf{1.000 (0.000)} \\
InternetAds & 0.671 (0.003) & 0.549 (0.003) & 0.661 (0.008) & 0.646 (0.017) & \textbf{0.695 (0.003)} & 0.582 (0.000) & 0.624 (0.002) & 0.546 (0.000) & 0.598 (0.000) \\
Ionosphere & 0.936 (0.006) & 0.929 (0.006) & 0.921 (0.005) & 0.899 (0.007) & 0.924 (0.007) & 0.923 (0.005) & \textbf{0.936 (0.007)} & 0.873 (0.011) & 0.902 (0.009) \\
landsat & 0.486 (0.003) & 0.492 (0.007) & 0.450 (0.008) & 0.377 (0.012) & 0.519 (0.004) & 0.467 (0.000) & \textbf{0.542 (0.003)} & 0.536 (0.000) & 0.515 (0.000) \\
letter & 0.018 (0.004) & 0.042 (0.002) & 0.056 (0.006) & 0.026 (0.007) & 0.036 (0.002) & 0.020 (0.000) & 0.046 (0.007) & \textbf{0.100 (0.000)} & 0.010 (0.000) \\
Lymphography & 0.969 (0.021) & \textbf{1.000 (0.000)} & 0.893 (0.070) & 0.798 (0.008) & \textbf{1.000 (0.000)} & 0.995 (0.005) & \textbf{1.000 (0.000)} & 0.767 (0.030) & 0.916 (0.030) \\
magic.gamma & \textbf{0.810 (0.002)} & 0.780 (0.003) & 0.802 (0.001) & 0.807 (0.003) & 0.764 (0.002) & 0.660 (0.000) & 0.716 (0.003) & 0.761 (0.000) & 0.762 (0.000) \\
mammography & \textbf{0.499 (0.007)} & 0.359 (0.016) & 0.405 (0.006) & 0.372 (0.024) & 0.351 (0.007) & 0.212 (0.000) & 0.252 (0.009) & 0.385 (0.000) & 0.404 (0.000) \\
mnist & \textbf{0.785 (0.003)} & 0.720 (0.011) & 0.614 (0.002) & 0.600 (0.007) & 0.725 (0.009) & 0.664 (0.000) & 0.510 (0.005) & 0.714 (0.000) & 0.719 (0.000) \\
musk & \textbf{1.000 (0.000)} & \textbf{1.000 (0.000)} & \textbf{1.000 (0.000)} & \textbf{1.000 (0.000)} & \textbf{1.000 (0.000)} & \textbf{1.000 (0.000)} & \textbf{1.000 (0.000)} & \textbf{1.000 (0.000)} & \textbf{1.000 (0.000)} \\
optdigits & \textbf{0.652 (0.017)} & 0.551 (0.020) & 0.573 (0.058) & 0.135 (0.014) & 0.441 (0.027) & 0.360 (0.000) & 0.339 (0.023) & 0.533 (0.000) & 0.240 (0.000) \\
PageBlocks & 0.585 (0.013) & 0.513 (0.010) & 0.602 (0.008) & 0.630 (0.005) & 0.610 (0.002) & 0.586 (0.000) & 0.639 (0.012) & \textbf{0.659 (0.000)} & 0.590 (0.000) \\
pendigits & \textbf{0.936 (0.006)} & 0.850 (0.020) & 0.829 (0.018) & 0.613 (0.043) & 0.910 (0.008) & 0.442 (0.000) & 0.692 (0.053) & 0.763 (0.000) & 0.904 (0.000) \\
Pima & \textbf{0.744 (0.008)} & 0.707 (0.009) & 0.670 (0.012) & 0.656 (0.012) & 0.713 (0.014) & 0.582 (0.013) & 0.714 (0.013) & 0.667 (0.014) & 0.710 (0.013) \\
satellite & 0.722 (0.001) & 0.726 (0.002) & 0.693 (0.009) & 0.722 (0.001) & 0.732 (0.002) & \textbf{0.785 (0.000)} & 0.763 (0.007) & 0.726 (0.000) & 0.718 (0.000) \\
satimage-2 & 0.907 (0.003) & 0.870 (0.009) & 0.189 (0.034) & 0.699 (0.021) & 0.831 (0.015) & 0.887 (0.000) & \textbf{0.921 (0.008)} & 0.817 (0.000) & 0.901 (0.000) \\
shuttle & 0.986 (0.001) & 0.989 (0.000) & \textbf{0.989 (0.001)} & 0.980 (0.000) & 0.985 (0.000) & 0.985 (0.000) & 0.988 (0.000) & 0.984 (0.000) & 0.982 (0.000) \\
skin & 0.840 (0.012) & 0.762 (0.011) & 0.910 (0.010) & 0.821 (0.002) & 0.846 (0.005) & 0.744 (0.012) & 0.691 (0.042) & 0.707 (0.009) & \textbf{0.967 (0.002)} \\
smtp & 0.637 (0.056) & 0.512 (0.045) & 0.297 (0.104) & \textbf{0.696 (0.020)} & 0.672 (0.012) & \textbf{0.696 (0.020)} & 0.562 (0.065) & 0.684 (0.011) & \textbf{0.696 (0.020)} \\
SpamBase & 0.805 (0.002) & 0.806 (0.001) & 0.820 (0.002) & 0.800 (0.001) & \textbf{0.821 (0.001)} & 0.818 (0.000) & 0.810 (0.002) & 0.740 (0.000) & 0.805 (0.000) \\
speech & 0.033 (0.000) & 0.036 (0.006) & 0.039 (0.013) & 0.036 (0.012) & 0.033 (0.009) & \textbf{0.066 (0.000)} & 0.033 (0.009) & 0.033 (0.000) & 0.033 (0.000) \\
Stamps & \textbf{0.865 (0.023)} & 0.845 (0.022) & 0.784 (0.031) & 0.620 (0.038) & 0.823 (0.028) & 0.467 (0.042) & 0.840 (0.011) & 0.659 (0.044) & 0.786 (0.042) \\
thyroid & 0.667 (0.020) & 0.477 (0.027) & 0.168 (0.009) & \textbf{0.766 (0.006)} & 0.690 (0.032) & 0.720 (0.000) & 0.587 (0.017) & 0.527 (0.000) & 0.753 (0.000) \\
vertebral & 0.728 (0.029) & 0.512 (0.011) & \textbf{0.741 (0.020)} & 0.403 (0.051) & 0.273 (0.032) & 0.143 (0.049) & 0.531 (0.045) & 0.332 (0.025) & 0.230 (0.026) \\
vowels & 0.384 (0.010) & 0.392 (0.027) & 0.348 (0.048) & 0.388 (0.026) & 0.360 (0.013) & \textbf{0.400 (0.000)} & 0.292 (0.063) & 0.340 (0.000) & 0.260 (0.000) \\
Waveform & 0.100 (0.003) & 0.136 (0.004) & 0.128 (0.012) & 0.126 (0.009) & 0.326 (0.009) & 0.030 (0.000) & \textbf{0.352 (0.014)} & 0.280 (0.000) & 0.270 (0.000) \\
WBC & 0.948 (0.024) & 0.867 (0.038) & 0.723 (0.050) & 0.340 (0.023) & 0.904 (0.032) & 0.914 (0.032) & \textbf{0.973 (0.022)} & 0.212 (0.049) & 0.836 (0.023) \\
WDBC & \textbf{0.953 (0.030)} & 0.908 (0.031) & 0.619 (0.066) & 0.731 (0.052) & 0.899 (0.032) & 0.797 (0.047) & 0.831 (0.049) & 0.901 (0.033) & 0.809 (0.022) \\
Wilt & 0.022 (0.002) & 0.142 (0.008) & \textbf{0.541 (0.033)} & 0.175 (0.013) & 0.030 (0.002) & 0.070 (0.000) & 0.401 (0.047) & 0.167 (0.000) & 0.023 (0.000) \\
wine & 0.993 (0.007) & 0.982 (0.018) & 0.993 (0.007) & 0.979 (0.010) & 0.993 (0.007) & \textbf{1.000 (0.000)} & 0.971 (0.019) & 0.816 (0.021) & 0.859 (0.017) \\
WPBC & 0.887 (0.016) & \textbf{0.899 (0.015)} & 0.887 (0.019) & 0.577 (0.023) & 0.818 (0.011) & 0.887 (0.015) & 0.863 (0.023) & 0.417 (0.019) & 0.502 (0.014) \\
yeast & 0.499 (0.006) & 0.498 (0.007) & \textbf{0.523 (0.008)} & 0.497 (0.006) & 0.470 (0.003) & 0.497 (0.000) & 0.486 (0.008) & 0.477 (0.000) & 0.467 (0.000) \\
20news & 0.238 (0.006) & 0.231 (0.006) & \textbf{0.243 (0.014)} & 0.181 (0.010) & 0.229 (0.008) & 0.136 (0.006) & 0.155 (0.007) & 0.169 (0.007) & 0.151 (0.010) \\
agnews & 0.309 (0.004) & 0.288 (0.002) & \textbf{0.318 (0.010)} & 0.210 (0.008) & 0.288 (0.005) & 0.129 (0.000) & 0.144 (0.001) & 0.306 (0.000) & 0.201 (0.000) \\
amazon & 0.107 (0.002) & 0.106 (0.003) & 0.112 (0.007) & 0.104 (0.004) & 0.103 (0.001) & 0.102 (0.000) & 0.104 (0.002) & 0.100 (0.000) & \textbf{0.114 (0.000)} \\
CIFAR10 & \textbf{0.272 (0.001)} & 0.232 (0.002) & 0.166 (0.002) & 0.238 (0.002) & 0.245 (0.001) & 0.244 (0.000) & 0.225 (0.001) & 0.271 (0.000) & 0.229 (0.000) \\
FashionMNIST & \textbf{0.688 (0.001)} & 0.468 (0.007) & 0.471 (0.004) & 0.594 (0.002) & 0.620 (0.001) & 0.597 (0.000) & 0.641 (0.001) & 0.633 (0.000) & 0.590 (0.000) \\
imdb & 0.058 (0.001) & 0.063 (0.000) & 0.101 (0.004) & 0.072 (0.008) & 0.057 (0.001) & \textbf{0.102 (0.000)} & 0.100 (0.001) & 0.064 (0.000) & 0.054 (0.000) \\
MNIST-C & \textbf{0.667 (0.001)} & 0.532 (0.007) & 0.506 (0.003) & 0.492 (0.001) & 0.509 (0.001) & 0.478 (0.000) & 0.514 (0.001) & 0.530 (0.000) & 0.463 (0.000) \\
MVTec & 0.807 (0.004) & 0.816 (0.003) & \textbf{0.825 (0.004)} & 0.776 (0.006) & 0.822 (0.005) & 0.819 (0.004) & 0.821 (0.003) & 0.671 (0.003) & 0.671 (0.005) \\
SVHN & \textbf{0.225 (0.001)} & 0.183 (0.001) & 0.173 (0.002) & 0.195 (0.000) & 0.201 (0.000) & 0.192 (0.000) & 0.197 (0.001) & 0.192 (0.000) & 0.190 (0.000) \\
yelp & 0.199 (0.001) & 0.189 (0.002) & 0.169 (0.006) & 0.140 (0.009) & 0.195 (0.003) & 0.070 (0.000) & 0.078 (0.001) & \textbf{0.206 (0.000)} & 0.188 (0.000) \\
\midrule
Overall & \textbf{0.608 (0.003)} & 0.574 (0.003) & 0.547 (0.004) & 0.523 (0.003) & 0.580 (0.001) & 0.500 (0.002) & 0.570 (0.003) & 0.516 (0.002) & 0.539 (0.002) \\
\bottomrule
\end{tabular}
}
\end{table}



\end{document}